\documentclass{article} 

\usepackage[preprint, nonatbib]{neurips_2026}

\usepackage[utf8]{inputenc} 
\usepackage[T1]{fontenc}    
\usepackage{url}            
\usepackage{booktabs}       
\usepackage{amsfonts}       
\usepackage{nicefrac}       
\usepackage{microtype}      
\usepackage{xcolor}         

\usepackage{float}
\usepackage{amsmath, amsthm, amssymb}
\usepackage{xspace}
\usepackage{multirow}
\usepackage{graphicx}
\usepackage{svg}
\definecolor{LightCyan}{rgb}{0.9,0.9,1}
\definecolor{LightRed}{rgb}{0.85,0.27,0.34}
\definecolor{iclrblue}{rgb}{0.21,0.49,0.70}
\usepackage[colorlinks=true,linkcolor=iclrblue,citecolor=iclrblue]{hyperref}  
\usepackage[capitalize,nameinlink]{cleveref}   

\usepackage[compact]{titlesec}
\titlespacing{\section}{0pt}{1ex}{0.15ex}
\titlespacing{\subsection}{0pt}{0.2ex}{0.13ex}
\titlespacing{\paragraph}{0pt}{0ex}{1em}
\usepackage{enumitem}

\setlist{
    itemsep=0.35ex,      
    topsep=0.15ex,       
    parsep=0pt,          
    partopsep=0pt        
}

\usepackage{pifont}
\usepackage{subcaption}
\usepackage[para]{threeparttable}
\usepackage[hang,flushmargin]{footmisc}
\usepackage{etoc}   
\usepackage{wrapfig}
\usepackage{enumitem}
\usepackage{color, colortbl}
\usepackage[export]{adjustbox}
\usepackage{algorithm}
\usepackage{algorithmic}

\newtheorem{theorem}{Theorem}[section]
\newtheorem{proposition}[theorem]{Proposition}

\newtheorem{definition}[theorem]{Definition}
\newtheorem*{remark}{Remark}

\usepackage[square,numbers]{natbib}
\newcommand{\sys}{TopoPrune\xspace}

\title{TopoPrune: Robust Data Pruning via Unified Latent Space Topology}
\author{
  Arjun Roy \\
  Purdue University \\
  \small \texttt{roy208@purdue.edu} \\
  \And
  Prajna G. Malettira \\
  Purdue University \\
  \small \texttt{pmaletti@purdue.edu} \\
  \And
  Manish Nagaraj \\
  Purdue University \\
  \small \texttt{mnagara@purdue.edu} \\
  \And
  Kaushik Roy \\
  Purdue University \\
  \small \texttt{kaushik@purdue.edu} \\
}

\begin{document}
\etocdepthtag.toc{mtoc}  

\maketitle
\label{Sec:Abstract}
\begin{abstract}

Geometric data pruning methods, while practical for leveraging pretrained models, are fundamentally unstable. Their reliance on extrinsic geometry renders them highly sensitive to latent space perturbations, causing performance to degrade during cross-architecture transfer or in the presence of feature noise. We introduce \sys, a framework which resolves this challenge by leveraging topology to capture the stable, intrinsic structure of data. \sys operates at two scales, (1) utilizing a \textit{topology-aware manifold approximation} to establish a global low-dimensional embedding of the dataset. Subsequently, (2) it employs \textit{differentiable persistent homology} to perform a local topological optimization on the manifold embeddings, ranking samples by their structural complexity. We demonstrate that our \textit{unified dual-scale topological approach} ensures high accuracy and precision, particularly at significant dataset pruning rates (e.g., 90\%). Furthermore, through the inherent stability properties of topology, \sys is (a) exceptionally robust to noise perturbations of latent feature embeddings and (b) demonstrates superior transferability across diverse network architectures. This study demonstrates a promising avenue towards stable and principled topology-based frameworks for robust data-efficient learning.

\end{abstract}
\vspace{-\baselineskip}
\section{Introduction}
\label{Sec:Introduction}

\begin{wrapfigure}{r}{0.6\textwidth}
  \vspace{-3em}
  \begin{center}
    \includegraphics[width=\linewidth]{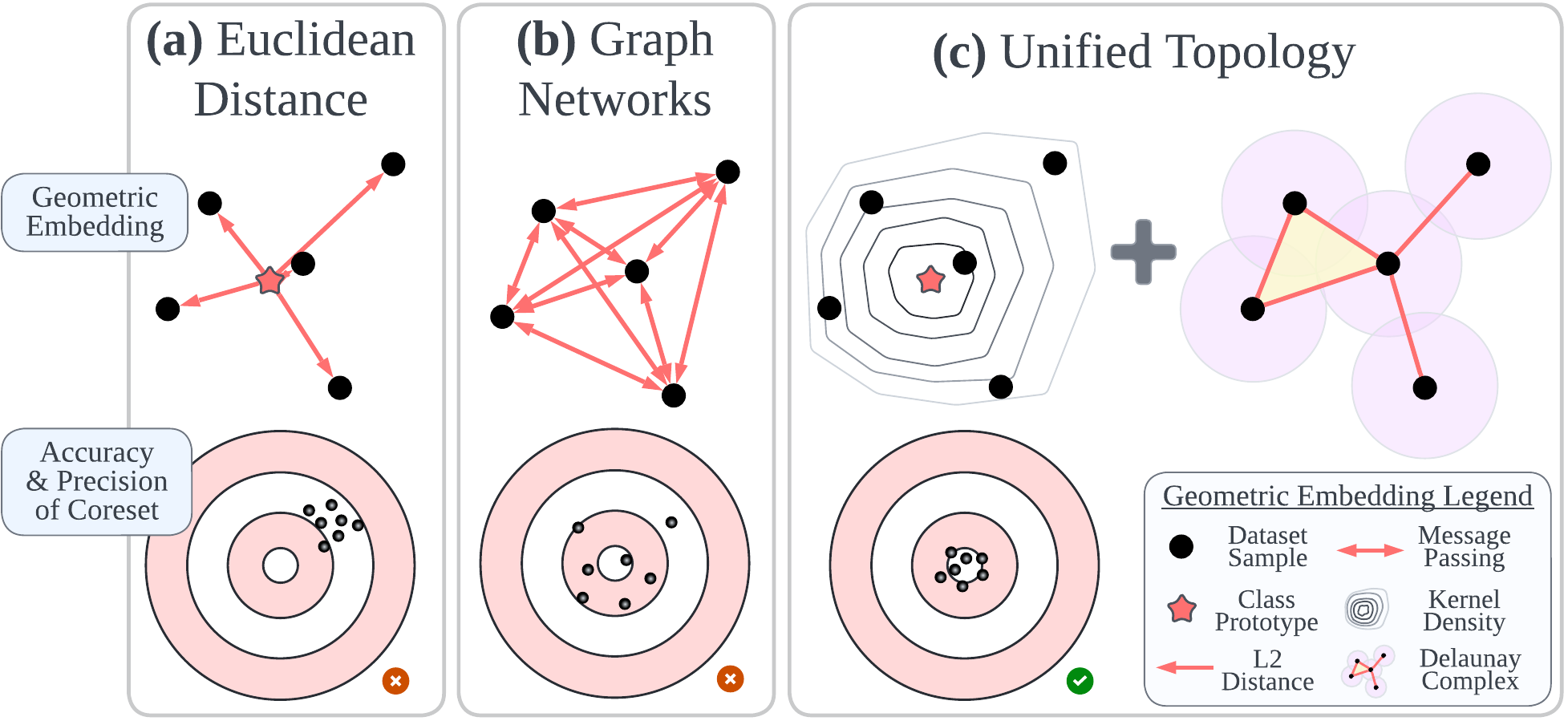}
  \end{center}
  \caption{\textbf{Topological data selection yields higher-performing and more stable coresets.} \textbf{(a)} Euclidean-based selection is precise but achieves lower accuracy. \textbf{(b)} Graph-based methods achieve higher accuracy but are highly variable. \textbf{(c)} Our topological approach achieves both high accuracy and precision.}
  \label{fig:topo_better}
  \vspace{-\baselineskip}
\end{wrapfigure}

The computational demands of training modern deep learning systems have escalated dramatically due to the scale of contemporary models and datasets. This growth has made training and fine-tuning computationally prohibitive, creating a need for data-efficient learning strategies. Data pruning is one such strategy which sub-samples a large dataset into a smaller, representative subset (or coreset) that preserves the learning characteristics of the full dataset. Thereby enabling rapid model training, efficient fine-tuning, and reduced storage costs, all while minimizing degradation in final model performance.

Broadly, coreset selection methods fall into three major categories. \textit{Optimization-based} methods select a coreset whose loss landscape \citep{glister, grad5} or gradient dynamics \citep{craig, gradmatch, grad2} align with the full dataset. While effective, such approaches are hampered by computationally intensive second-order \citep{grad4} or bilevel optimization \citep{grad1}. \textit{Score-based} methods rank samples using training dynamics \citep{forgetting, slocurve, elfs} or uncertainty estimations \citep{el2n, score1, score2, dual}. However, these scores are intrinsically biased by a models inductive prior, capacity bottleneck, and unique learning trajectory. Furthermore, requiring access to training dynamics makes both optimization and score-based methods incompatible with the growing ecosystem of pretrained models, where only final, static representations are accessible.


To overcome this constraint, \textit{geometry-based} coreset selection methods can operate on static embeddings from pretrained models. Approaches range from representations based on the penultimate-layer distances \citep{moderateds}, distributional similarity via optimal transport \citep{fdmat}, Wasserstein distance \citep{fairwass}, or geometric reconstruction error \citep{mindboundary}. While avoiding costly training analysis, these methods rely on metrics that are sensitive to the extrinsic geometry of the latent space, a vulnerability we term ``geometric brittleness'' \citep{beyondeuclid}. This brittleness leads to two shortcomings: (1) over-prioritization of dense regions at the expense of informative sparse distribution tails \citep{ccs}, and (2) an instability across network architectures or under direct embedding perturbation. This is most apparent in Euclidean-distance metrics \citep{moderateds} and message-passing or spectral graph methods \citep{d2, ses, cui2025fast}, which are sensitive to latent-space changes (see \cref{fig:topo_better}).

In this work, we introduce \sys, a novel framework that resolves the challenge of geometric brittleness by leveraging topology \citep{topo0, beyondeuclid}, which studies properties of space that are preserved under continuous deformations such as stretching and bending. As a canonical illustration, a coffee mug and a torus (donut) are topologically homeomorphic, one can be continuously deformed into the other because they share the same fundamental invariant (a single hole). By focusing on these stable, intrinsic invariants rather than transient, extrinsic geometric measurements (such as distance or curvature), we can analyze the latent space of datasets with a stable, topological metric. This structural focus allows \sys to achieve exceptional robustness against embedding perturbations; whether caused by isotropic noise, anisotropic distortion, or shifts across varying network architectures \citep{ph1, ph4}. This enables the use of proxy models \citep{proxy0} or off-the-shelf pretrained models for coreset generation without retraining.

Our framework first establishes a \textbf{global structure} by using topology-aware manifold approximation \citep{umap, pacmap} to project high-dimensional features into standardized low-dimensional embeddings. While this global structure can group similar samples, it fails to distinguish which samples to prioritize within a localized region. Existing methods often resort to random sampling within localized regions \citep{ccs} or use geometric heuristics like message-passing \citep{d2}. To better complement this global view with \textbf{local structure}, we then employ differentiable persistent homology \citep{dif_ph1, dif_ph2, dif_ph3} to assess a sample's structural relevancy, relative to its immediate neighbors. Persistent homology tracks the ``birth'' and ``death'' (persistence) of topological structures at multiple scales (derived from a filtration of simplicial complexes). For our application, we perform an optimization that \textit{maximizes the persistence (the lifespan of the features) of local topological structures} constructed from a multiparameter filtration of the manifold projected Delaunay complex \citep{ph5, ph17}. This process iteratively repositions samples to an optimal configuration that enhances topological stability, directly measuring a sample's contribution to the structural complexity of its local neighborhood. 

Our approach makes the following contributions:
\begin{itemize}[itemsep=2mm, parsep=0pt, topsep=0mm]
    \item We introduce \sys, a novel coreset selection framework that defines sample importance through a dual-scale topological analysis. It combines a \textit{global manifold projection} with a \textit{local persistence} score derived from a differentiable persistent homology optimization to identify structurally critical samples with higher accuracy and precision compared to previous coreset methods, particularly in extreme compression regimes (e.g., 90\% pruning).
    \item We show that \sys is resilient to representation degradation, maintaining superior robustness and coreset quality across three distinct perturbation paradigms: isotropic latent noise, pixel-level input noise, and structured image corruptions.
    \item We demonstrate that \sys establishes robust cross-architecture transferability, consistently yielding high-quality coresets regardless of the transfer direction, whether utilizing diverse proxy embeddings (e.g., from ResNet to ViT) for a fixed target model or a single proxy to train a diverse set of target models. This flexibility permits the use of computationally inexpensive or off-the-shelf pretrained models for coreset generation without costly retraining.
\end{itemize}
\section{Background and Related Work}
\label{Sec:Background and Related Work}

\subsection{Topology at Two Scales}
While many modern topological algorithms inherently model both the global and local structure of data simultaneously, our work decouples these concepts into two distinct stages. For the purposes of this paper, we define global topology as the manifold structure of the entire dataset, which we capture as a low-dimensional embedding. We then define local topology as the fine-grained structure arising from the interactions between samples and their immediate neighbors, which we analyze using persistent homology.

\paragraph{Low-Dimensional Manifold Approximations.}
A critical step in high-dimensional analysis is creating a low-dimensional data representation. Linear methods like PCA \citep{pca} are efficient but preserve only global variance while missing non-linear structures. t-SNE \citep{tsne} excels at preserving fine-grained local neighborhoods but distorts global structure. Topology-aware manifold approximation and projection (MAP) methods such as UMAP \citep{umap}, PaCMAP \citep{pacmap}, and DensMAP \citep{densmap} model high-dimensional data as a fuzzy topological structure to preserve both fine-grained local connectivity and large-scale global relationships.

Learning-based alternatives such as Topological Autoencoders (TopoAE) \citep{topoae} and Representation Topological Divergence (RTD) \citep{rtd} offer powerful global topology preservation. However, they require computationally expensive training and often yield less distinct class separation on complex datasets. Therefore, we prioritize algorithmic solutions like UMAP for efficiency and alignment with our training-free objective. These MAP-based methods have proven highly effective for interpreting the complex representations learned by deep models across numerous domains, from single-cell genomics \citep{man1} to clustering in dictionary learning \citep{man3, man4}. As shown by \cite{topo2}, these techniques produce embeddings with compact and well-defined clusters, enhancing the impact of downstream analysis. 

By preserving nearest-neighbor structure from the high-dimensional space, MAP-based methods capture the underlying topological connectivity of the data manifold. This preserves notions of ``prototypicality'' (samples in highly connected regions) and ``atypicality'' (samples in sparse regions) after projection (see \cref{Sec:Appendix_topology_memorization} for a qualitative explanation on prototypicality and \cref{Sec:Appendix_manifold_projection_techniques} for an investigation across projection techniques).


\begin{figure*}[!tp]
\centering
\includegraphics[width=\textwidth]{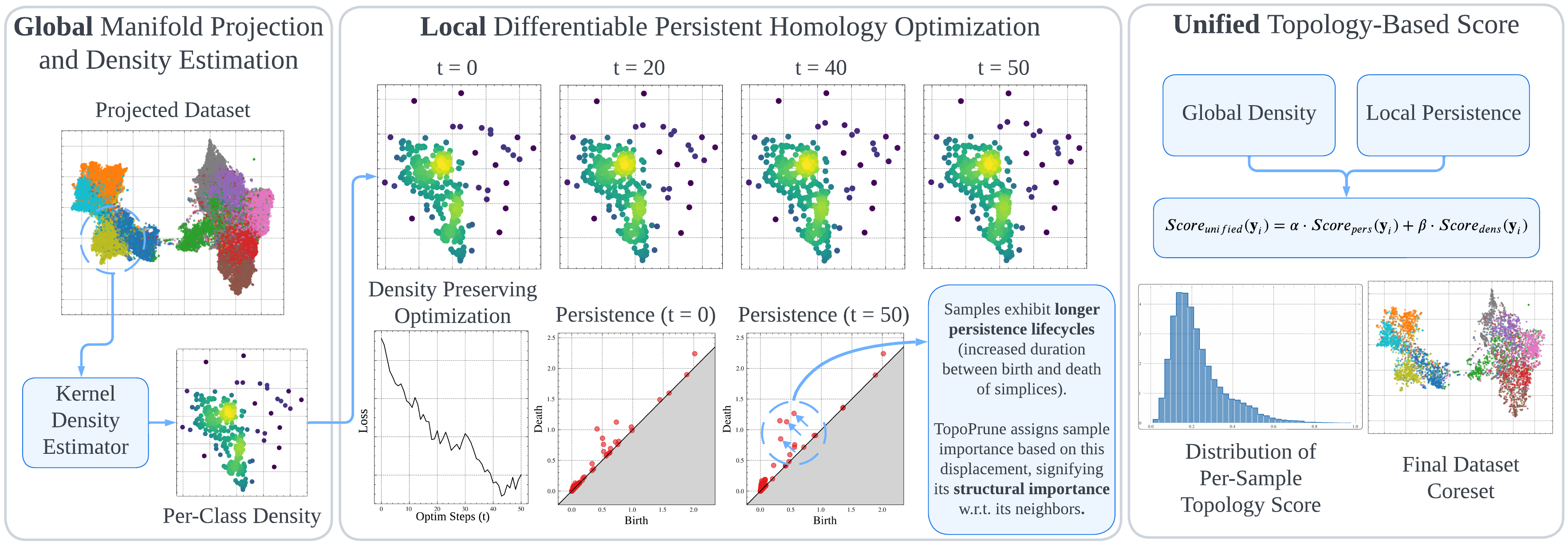}
\caption{\textbf{An overview of \sys}. \textbf{(Left)} A topology-aware projection visualizes the \textit{global} data manifold. \textbf{(Middle)} Within each class, a density-preserving persistent homology optimization derives a \textit{local} persistence score per sample. The color map indicates high (yellow) to low (blue) density. \textbf{(Right)} The final coreset is constructed via stratified sampling on a unified score combining \textit{global density} and \textit{local persistence}. This not only prioritizes the most topologically informative samples but also faithfully represents the density distribution of the original dataset.}
\vspace{-\baselineskip}
\label{fig:topocore_overview}
\end{figure*}

\paragraph{Interactions of Samples and their Nearest Neighbors.}
Understanding the local interactions between a sample and its neighbors is crucial for determining its importance. A prevalent approach is the use of Graph Neural Networks (GNNs), which propagate information between nodes on a graph typically defined by nearest-neighbor relationships. In GNNs, a sample's importance is quantified through message-passing that aggregates features from its local neighborhood \citep{d2}. Other methods use graph-level structural entropy combined with Shapley values and blue noise sampling \citep{bluenoise} to select a diverse coreset \citep{ses}. However, these approaches operate on a single, fixed graph and can be sensitive to the geometric hyperparameters used in its construction.

In contrast, persistent homology \citep{ph18, ph19} offers a fundamentally different and more robust framework. Instead of analyzing a single graph, it studies the evolution of higher-order topological structures (e.g., connected components, loops, voids) across a multiparameter filtration of simplicial complexes \citep{ph21, ph22, ph1.2}. This provides a complete summary of the data's shape at all scales simultaneously. A key advantage of persistent homology is its proven stability \citep{ph1}. The persistence diagram of a dataset is guaranteed to change only slightly in response to small perturbations of the input data, making it a robust descriptor of local structure \citep{ph3, ph7}. Inclusion of these robust geometric descriptors has been widely used for understanding feature embeddings in machine learning applications such as monitoring generalization in networks over training \citep{ph2} and exploring the topology of latent embeddings throughout network layers \citep{ph13, ph23}. For a detailed overview on the underlying construction of simplicial complexes and persistent homology, please see \cref{Sec:appendix_persistent_homology}.

While traditional persistent homology provides a powerful descriptive tool, its integration into modern deep learning pipelines has been limited as it is not inherently differentiable. Recent advances in differentiable persistent homology have overcome this barrier by enabling the backpropagation of gradients from the persistence diagram back to the coordinates of the input data points \citep{dif_ph2, dif_ph3}. This allows for the direct optimization of the data's topological features within a gradient-based framework. The work of \cite{dif_ph1} provides a fast and stable computational framework for these gradients, even for the more expressive case of multiparameter persistent homology. By leveraging this, rather than simply describing static local topology, we can perform an optimization to actively enhance it.

\section{Methodology}
\label{Sec:Methodology}

Our proposed method, \sys, constructs a coreset by analyzing the data's topological structure at two distinct scales. (1) \textit{Global Manifold Embedding}, projects the original high-dimensional embeddings into a standardized low-dimensional space. This ensures a stable, global view of the data's overall structure. (2) \textit{Local Topological Interaction}, which employs differentiable multi-parameter persistent homology to probe the local structure formed by samples and their closest neighbors. Together, these two topological scales are used to derive an importance score for each sample based on global density and local persistent homology, contributing to a unified topological measurement for selecting individual samples (see \cref{fig:topocore_overview}).

\subsection{Global Structure: Dataset Representation with Topological Manifold Embedding}

Given a well-trained deep model, denoted by $f(\cdot)$, we can express it as a composition of a feature extractor $h(\cdot)$ and a classifier $g(\cdot)$, such that $f(\cdot) = g(h(\cdot))$. Here, $h(\cdot)$ represents the network up to the \textit{penultimate layer}, which maps an input data point $\mathbf{x}$ to a high-dimensional feature embedding $\mathbf{z} = h(\mathbf{x}) \in \mathbb{R}^D$. The full dataset $\mathcal{D} = \{(\mathbf{x}_1, y_1), \dots, (\mathbf{x}_N, y_N)\}$ can thus be transformed into a high-dimensional feature set $Z = \{\mathbf{z}_1, \dots, \mathbf{z}_N\}$. While this high-dimensional space $Z$ contains rich semantic information, its extrinsic geometry is often complex and architecture-dependent. To obtain a stable and standardized representation, we project $Z$ onto a low-dimensional manifold using topology-based manifold approximation and projection techniques \citep{umap, pacmap, densmap}.

This process involves two main stages. First, a topological representation of the high-dimensional data is constructed as a fuzzy simplicial set. This structure captures the data's shape by assigning a membership strength ($p_{ij}$), to the potential connections between each point and its neighbors, where the ``fuzzy'' aspect represents the belief that a certain simplex exists in the true underlying manifold. Subsequently, a low-dimensional embedding is learned $Y = \{\mathbf{y}_1, \dots, \mathbf{y}_N\}$, where $\mathbf{y}_i \in \mathbb{R}^d$ and $d \ll D$, whose own fuzzy simplicial set ($q_{ij}$) is similarly defined. The final low-dimensional representation $Y$ is found by optimizing the positions of the points $\{\mathbf{y}_i\}$ to minimize a cross-entropy loss between the high-dimensional ($p_{ij}$) and low-dimensional ($q_{ij}$) pairwise similarities:
\begin{equation}\label{eq:umap_loss}
\mathcal{L}_{\text{proj}}(Y) = \sum_{i \neq j} \left[ p_{ij} \log\left(\frac{p_{ij}}{q_{ij}}\right) + (1 - p_{ij}) \log\left(\frac{1 - p_{ij}}{1 - q_{ij}}\right) \right]
\end{equation}


This process yields a standardized manifold embedding that preserves the data's intrinsic shape. Through a detailed investigation into different manifold approximation and projection techniques presented in \cref{Sec:Appendix_manifold_projection_techniques} we use UMAP \citep{umap} as it creates uniform manifold embeddings across network architectures (see \cref{Sec:appendix_umap_sensitivity} exploring UMAP hyperparameters). On this low-dimensional manifold we compute \textbf{Density Score}, a class-conditional rarity metric for each sample using a Kernel Density Estimator (KDE). Rather than using density directly, we adopt its negative log-transform, the self-information (or surprisal) of the sample under the estimated density \citep{cover2006elements, sun2022information}. For an event $x$ with probability $p(x)$, the self-information $-\log p(x)$ quantifies how unexpected, or informative, that event is. High-probability events convey little information, while rare events convey much. Applied to a sample's class-conditional density, this transforms the score into a monotonic measure of structural rarity that increases with informativeness. For a sample $\mathbf{y}_i$ belonging to class $c$:
\begin{equation}
\text{Score}_{\text{dens}}(\mathbf{y}_i) = -\log \left( \frac{1}{N_c h} \sum_{\mathbf{y}_j \in Y_c} K\!\left(\frac{\mathbf{y}_i - \mathbf{y}_j}{h}\right) \right)
\end{equation}
where $N_c = |Y_c|$ is the total number of samples in class $c$, $K$ is a Gaussian kernel, and $h$ is the bandwidth. This score allows us to distinguish samples in low-density (atypical) regions of the manifold from those in high-density (prototypical) regions.

\subsection{Local Structure: Sample Neighborhoods with Persistence-Based Optimizer}

The global manifold embedding provides a low-dimensional representation that faithfully approximates the global structure of the data manifold. While this ensures a stable, high-level representation, a purely global perspective is insufficient for identifying the most informative samples, particularly in extreme compression regimes (e.g., 90\% pruning). Samples co-located in the embedding space provide highly redundant information during model optimization, yielding diminishing marginal returns for gradient updates \citep{sener2018active, scaling2}. Because these tightly clustered samples inherently exhibit nearly identical density scores, relying exclusively on density-based selection within a local neighborhood fails to resolve structural importance, effectively degenerating into local uniform sampling \citep{d2, ccs}.

To resolve this local ambiguity and capture fine-grained structure, we leverage persistent homology not as a static descriptor, but as a dynamic topological optimization process. The objective of this process is to iteratively adjust the position of each point within its class manifold to maximize persistence life-cycles (increasing the duration between birth and death of topological features within the simplicial complex). This is performed independently for each class $c \in \{1, \dots, C\}$ to analyze the specific intra-class structure. For each class, we begin with its point cloud from the global manifold embedding $Y_c = \{\mathbf{y}_i \mid \text{label}(\mathbf{y}_i) = c\}$ and construct a Delaunay filtration \citep{ph17} on $Y_c$, which circumvents prohibitive combinatorial explosions typically associated with the standard Vietoris-Rips (VR) complex \citep{ph9, ph7}. By restricting VR-style weights to the Delaunay skeleton, we accelerate computation while maintaining stability guarantees in our low-dimensional setting.

\begin{figure*}[!tp]
    \centering
    \begin{subfigure}[b]{0.32\textwidth}
        \centering
        \includegraphics[width=\textwidth]{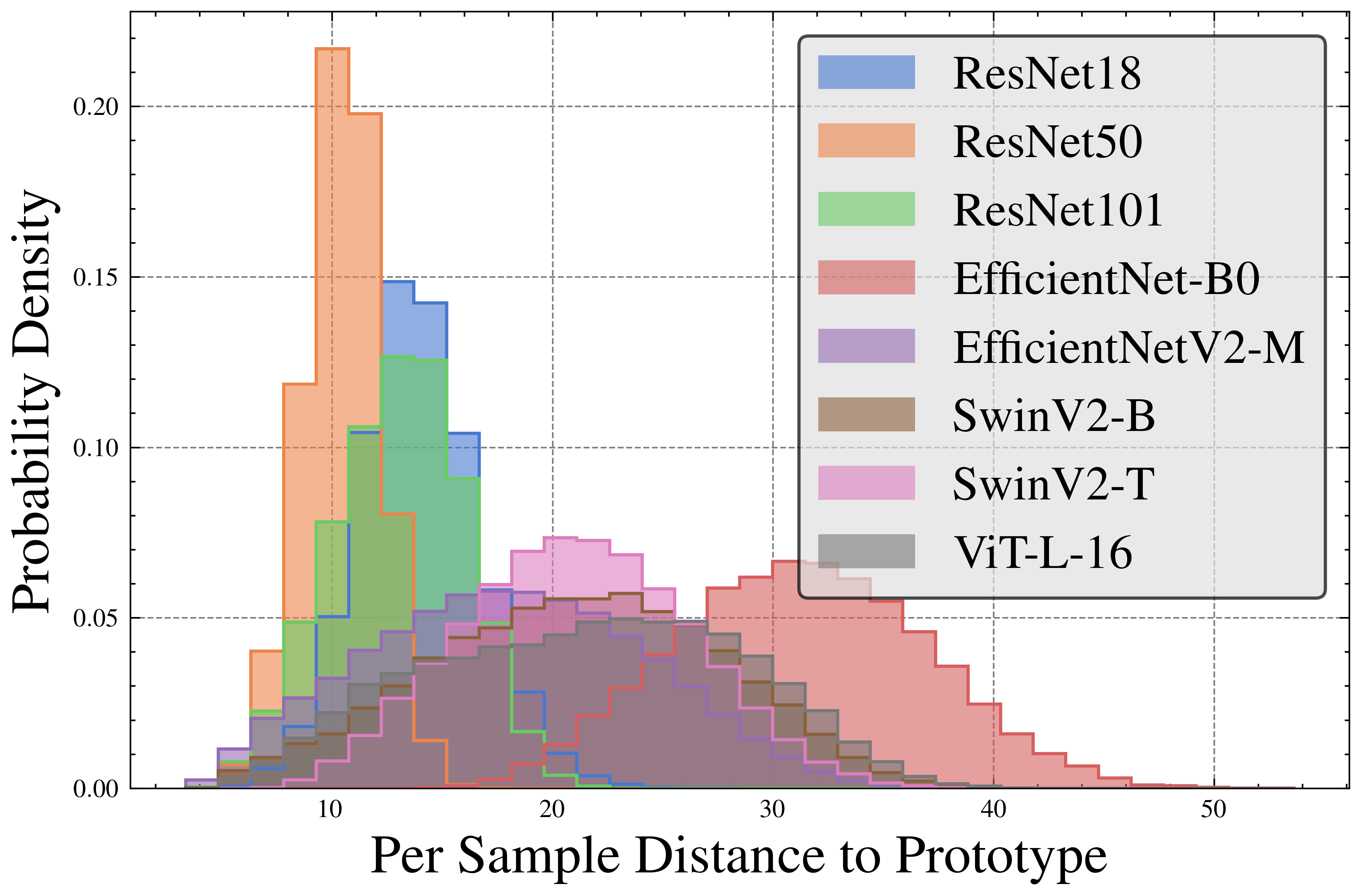}
        \caption{Euclidean Distance to Prototype}
        \label{fig:proxy_networks_distance}
    \end{subfigure}
    \begin{subfigure}[b]{0.32\textwidth}
        \centering
        \includegraphics[width=\textwidth]{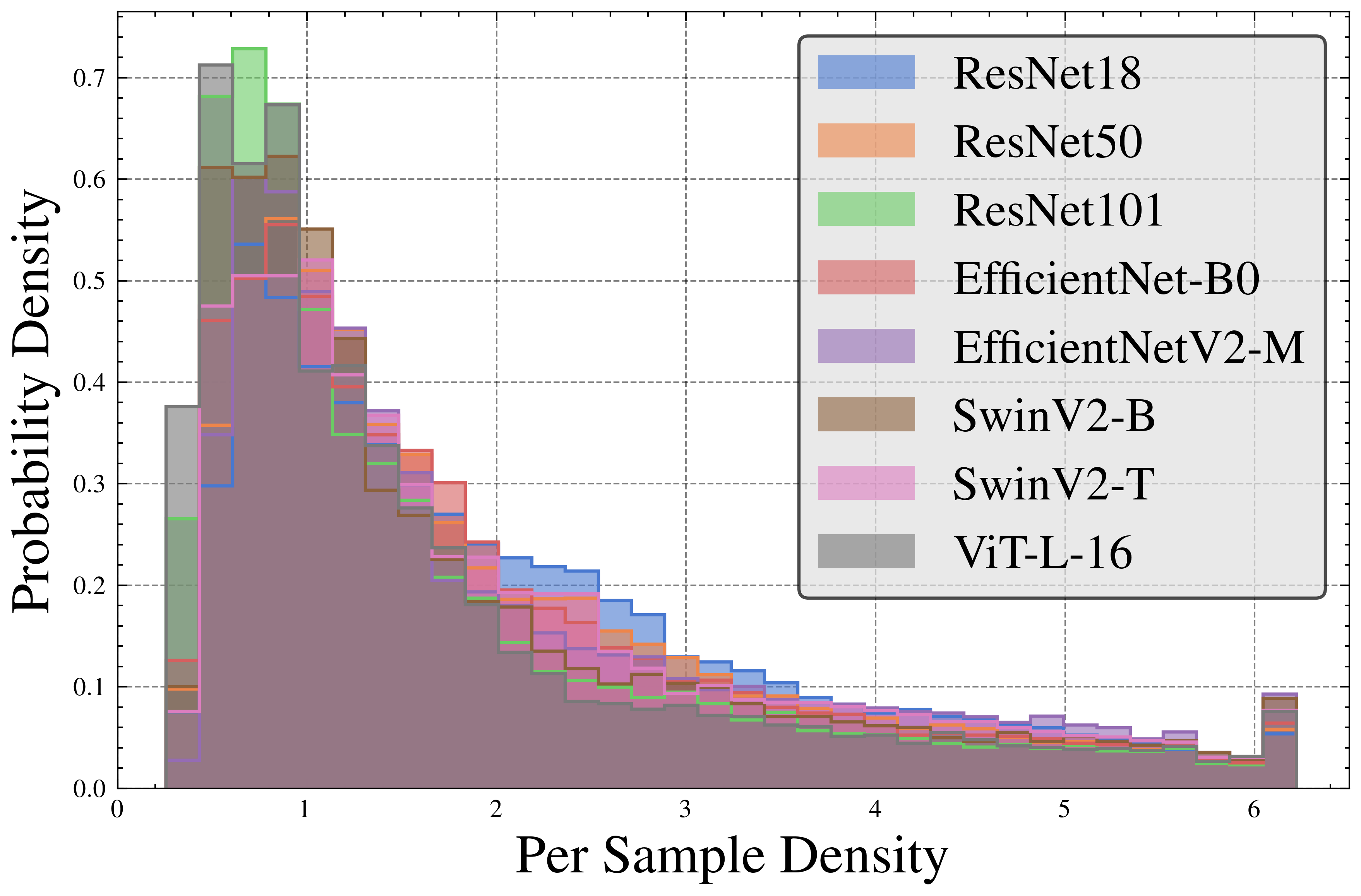}
        \caption{Global Manifold Kernel Density}
        \label{fig:proxy_networks_density}
    \end{subfigure}
    \begin{subfigure}[b]{0.32\textwidth}
        \centering
        \includegraphics[width=\textwidth]{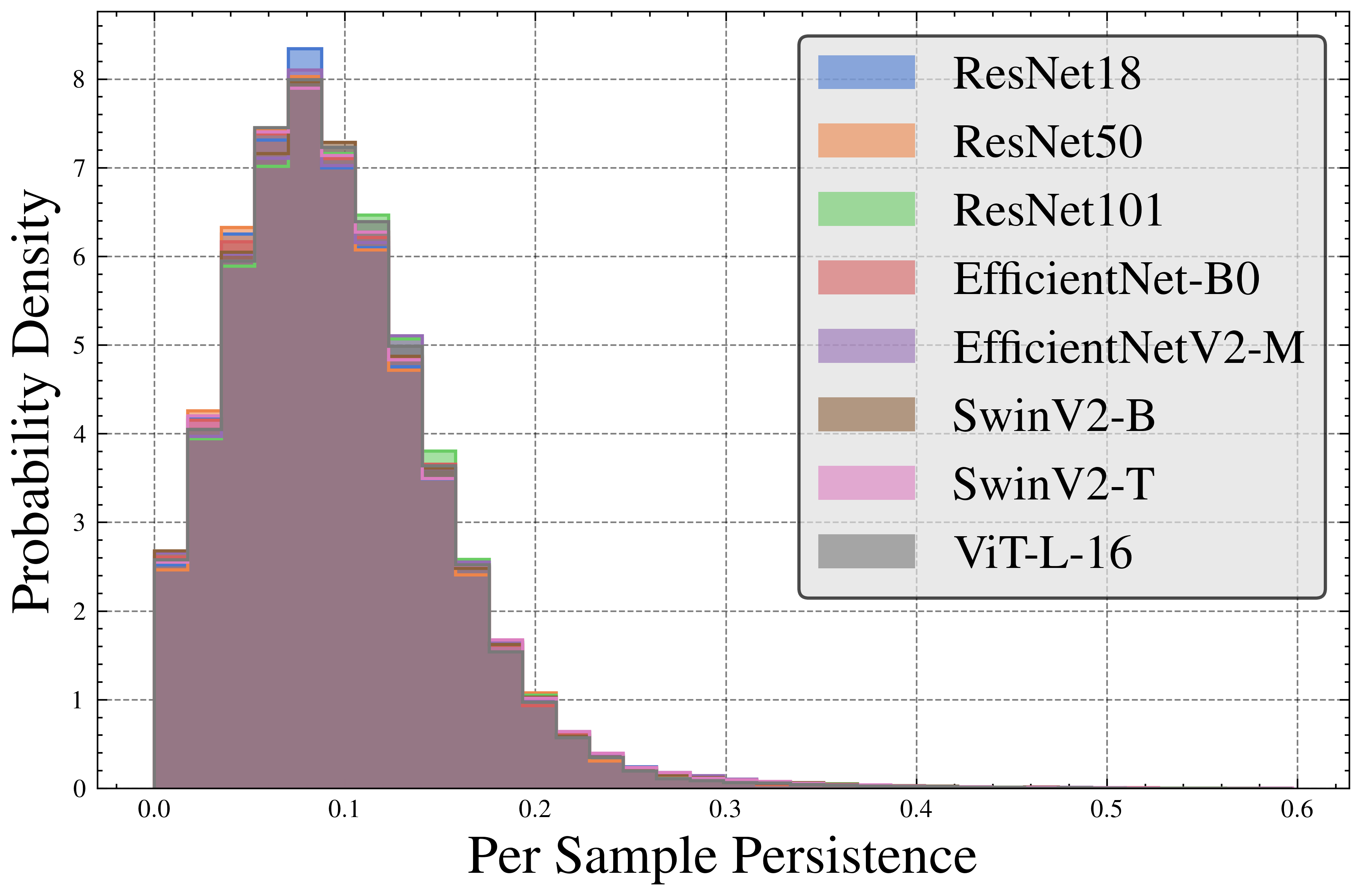}
        \caption{Local Persistence}
    \label{fig:proxy_networks_persistence}
    \end{subfigure}
    
    \caption{\textbf{Topological metrics are more consistent across networks}. Which translates directly to better coreset performance. Metric distributions become progressively more uniform as we move from \textbf{(a)} unstable Euclidean distances, to \textbf{(b)} density estimation from global topological projection, and finally to \textbf{(c)} local persistence.}
    \label{fig:proxy_networks_all}
\end{figure*}

Similar to work from \cite{dif_ph1} we define a differentiable objective, $\mathcal{L}_{\text{pers}}(Y_c)$, whose gradient, $-\nabla_{Y_c} \mathcal{L}_{\text{pers}}$, points in the direction of steepest ascent, \textit{maximally increasing the persistence life-cycle of samples}. This objective is formulated using a multi-parameter filtration considering two parameters: (1) the class-manifold Delaunay filtration ($Del_{Y_c}$) and (2) the class-manifold Kernel Density Estimator ($\hat{f}= KDE_{Y_c}$). The persistence of this two-parameter filtration is summarized using the Hilbert decomposition signed measure, of homology degree 1 ($H_1$), denoted $ \scriptstyle \mu_{H_1(Del_{Y_c}, \hat{f})}^{Hil}$ \citep{ph14}. This descriptor represents the persistence diagram as a finite collection of positive point masses (representing feature births) and negative point masses (representing feature deaths) in the parameter space of (distance, density). Our objective is to maximize persistence life-cycles by maximizing the Optimal Transport (OT) distance between this signed measure and the zero measure, $\mathbf{0}$ \citep{ph11}. The zero measure acts as a trivial baseline with no mass. OT distances reduce to the internal matching cost between positive (birth) and negative (death) point masses, capturing the total topological persistence. The differentiable objective for a given class $c$ is therefore defined as:
\begin{equation}
    \mathcal{L}_{\text{pers}}(Y_c) = \text{OT}(\mu_{H_1(Del_{Y_c}, \hat{f})}^{Hil}, \mathbf{0})\label{eq:lpers}
\end{equation}

The optimization seeks a new point configuration $Y'_c$ that maximizes this objective, solved iteratively via gradient ascent (see \cref{Sec:appendix_tda_steps_ablation} exploring optimization steps). This formulation ensures that the optimization \textit{enhances topological stability while preserving the original density of the class manifold}, as the density is recomputed at each epoch and is an integral part of the objective. We then define the \textbf{Persistence Score} for each sample $\mathbf{y}_i$ belonging to class $c$ as the magnitude of its total displacement during its class-specific optimization, where $\mathbf{y}_i$ is the initial position and $\mathbf{y'}_i$ is the final, optimized position.:
\begin{equation}
    \text{Score}_{\text{pers}}(\mathbf{y}_i) = \| \mathbf{y}_i - \mathbf{y'}_i \|_2, \quad \text{for } \mathbf{y}_i \in Y_c, \mathbf{y'}_i \in Y'_c
\end{equation}

\paragraph{Interpreting this notion of local dataset structure.} 
A high persistence score quantifies the topological instability a sample introduces in its class manifold. Density-preservation during optimization is vital, ensuring our search for structurally important samples does not distort global representativeness. The magnitude of corrective displacement therefore serves as a direct, dynamic measure of a sample's contribution to the topological complexity of its class.

\subsection{Comprehensive Score with Global and Local Dataset Structures}
\label{sec:comprehensive_score}


\begin{wrapfigure}{r}{0.35\linewidth}
    \vspace{-1em}
    \begin{center}
    \includegraphics[width=\linewidth]{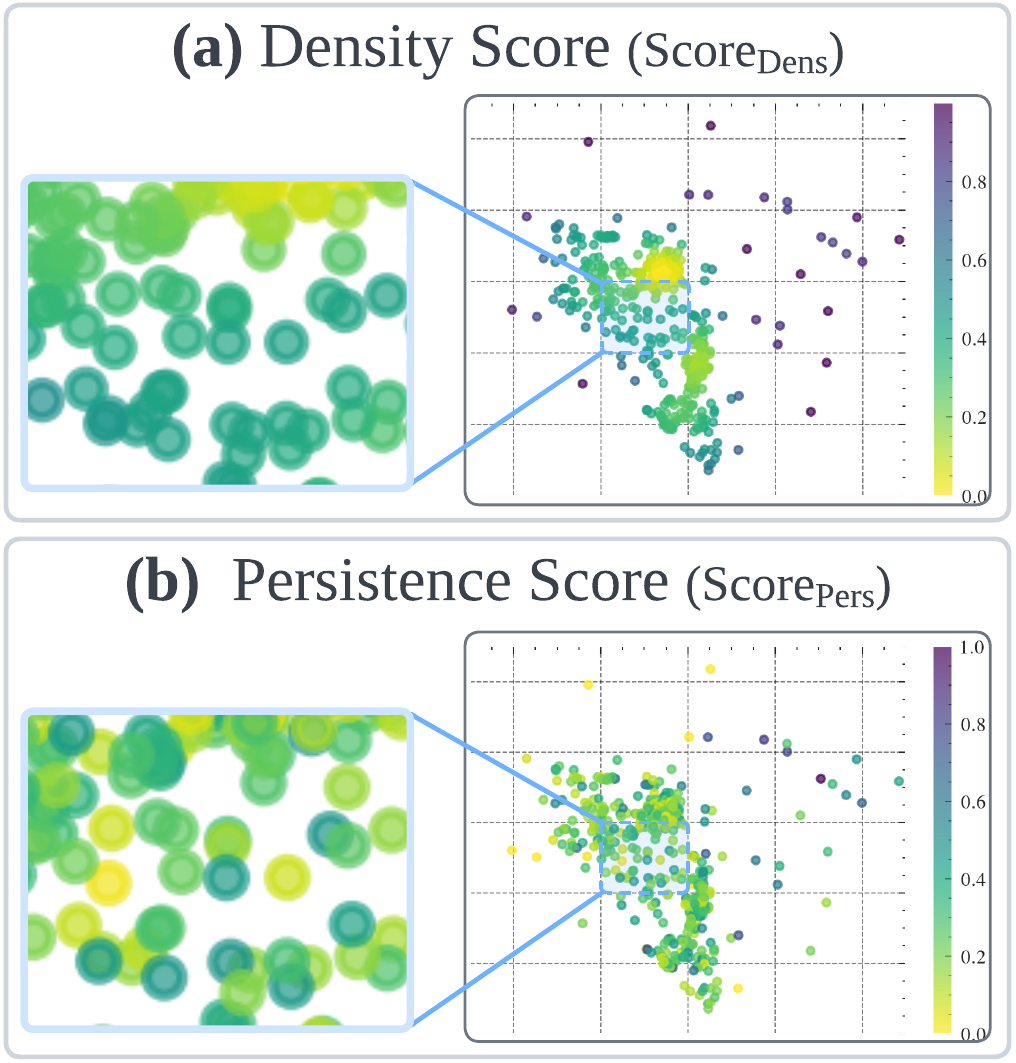}
    \end{center}
    \caption{\textbf{Dual-scale scoring.} \textbf{(a)} Global Density Score provides structural prior but assigns saturated values within localized regions. \textbf{(b)} Local Persistence Score provides fine-grained neighborhood structure, isolating topological anchors.}
    \label{fig:dens_pers_example}
\end{wrapfigure}

To construct a highly effective coreset, we formulate a comprehensive sample importance metric that unifies global density and local persistence. As $\text{Score}_{\text{dens}}$ measures density surprisal and $\text{Score}_{\text{pers}}$ measures topological complexity, both signals quantify a sample's structural informativeness. To ensure neither metric implicitly dominates the formulation due to scale differences, both signals are independently normalized to the $[0, 1]$ range. This unified score is computed as a weighted combination, where $\mathcal{N}(\cdot)$ denotes min-max normalization, and hyperparameters $\alpha, \beta \in [0, 1]$ modulate the influence of local topological complexity versus global distributional rarity (see \cref{fig:dens_pers_example} for visualization):
\begin{equation}
    \text{Score}_{\text{unified}}(\mathbf{y}_i) = \alpha \cdot \mathcal{N}(\text{Score}_{\text{pers}}(\mathbf{y}_i)) + \beta \cdot \mathcal{N}(\text{Score}_{\text{dens}}(\mathbf{y}_i))\label{eq:toposcore}
\end{equation}

We empirically validate the complementary nature of these metrics through three analyses provided in \cref{Sec:appendix_density_persistence_ablation}. First, a joint distribution analysis across 10 network architectures reveals a consistently low linear correlation between persistence and density (average Pearson $r=0.102$), confirming that the two scores capture independent structural properties. Second, qualitative analysis of target class embeddings illustrates that proximate samples with identical global densities frequently exhibit divergent local persistence scores. Finally, an ablation on the weighted combination demonstrates that integrating both signals outperforms either metric in isolation (e.g., by up to 5.4\% at high pruning rates on CIFAR-100). Together, these analyses confirm that $\text{Score}_{\text{unified}}$ measures structural informativeness from complementary views: global rarity and local topological complexity. 

\subsection{Mislabel Filtering and Coreset Construction}

Following CCS \citep{ccs}, we ensure our coreset is not corrupted by noisy or mislabeled data, which can receive high importance scores yet degrade model performance \citep{score3}. We incorporate a filtering step to create a clean dataset $\mathcal{D}_{clean} = \mathcal{D} \setminus \mathcal{I}_{mis}$ where $\mathcal{I}_{mis}$ are the mislabeled sample indices. Most prior methods identify such samples using training-dynamic metrics like Area Under the Margin (AUM) \citep{aum}, which conflicts with our training-free objective. Therefore, we utilize a Neighborhood Label Purity Score (NLPS). Drawing on the training-free local voting introduced by \citet{Zhu2022DetectingCL}, NLPS calculates the fraction of a sample's k-nearest latent-space neighbors with a different class label. A high NLPS indicates a mixed-label neighborhood, effectively serving as a training-free analog to the ``flip-flop'' candidates identified by AUM. We validate NLPS against alternative proxies in \cref{Sec:appendix_mislabel_proxy_ablation} and adopt the mislabel ratios from \citet{ccs} (Appendix \cref{table:hyperparameters}).

From the resulting clean dataset $\mathcal{D}_{clean}$, we construct the coreset via stratified sampling on $\text{Score}_\text{unified}$, preserving the original class distribution as in \citet{ccs}. TopoPrune produces an unbalanced coreset, respecting the dataset's intrinsic class imbalance rather than enforcing uniform per-class counts. The full pipeline therefore comprises three phases: (1) dual-scale topological scoring, (2) NLPS-based mislabel filtering, and (3) stratified topological selection. Pseudocode appears in \cref{Sec:Appendix_pseudocode} and an illustrative walkthrough in \cref{Sec:appendix_illustrative}.

\section{Results}
\label{Sec:Results}

\subsection{Experimental Setup}
\label{sec:experimental_setup}

\begin{wraptable}{r}{0.6\textwidth}
\vspace{-\baselineskip}
\vspace{-\baselineskip}
\vspace{-\baselineskip}
\vspace{-\baselineskip}
\caption{\textbf{Accuracy across coreset selection methods} on CIFAR-10, CIFAR-100 and ImageNet-1K. \sys demonstrates a scaling advantage: while competitive on simpler datasets, performance improvements over baselines widen as dataset complexity and pruning severity increase.}
\setlength{\tabcolsep}{3.1pt}
\centering
\resizebox{\linewidth}{!}
{  
\begin{tabular}{lcccccc}
    \toprule
    & Pruning Rate ($\rightarrow$) & 30\% & 50\% & 70\% & 80\% & 90\% \\
    \cmidrule(lr){3-7}
    & & \multicolumn{5}{c}{\textbf{CIFAR-10 \footnotesize(ResNet-18)}} \\
    \midrule
    \multirow{4}{*}{
    \begin{tabular}
        [c]{@{}c@{}}\small \textit{No}\\ \small \textit{Training}\\ \small \textit{Dynamics}
    \end{tabular}
    } & Random & 94.5\footnotesize $\pm$0.1 & 93.5\footnotesize $\pm$0.1 & \underline{90.8\footnotesize $\pm$0.2} & 86.6\footnotesize $\pm$0.3 & 76.7\footnotesize $\pm$0.9  \\
    & Moderate & 94.2\footnotesize $\pm$0.1 & 93.1\footnotesize $\pm$0.1 & 89.9\footnotesize $\pm$0.2 & 87.2\footnotesize $\pm$0.2 & 76.9\footnotesize $\pm$1.0  \\
    & FDMat  & 94.7\footnotesize $\pm$0.1 & \underline{93.6\footnotesize $\pm$0.2} & \textbf{90.8\footnotesize $\pm$0.2} & \underline{87.3\footnotesize $\pm$0.4} & 74.4\footnotesize $\pm$0.7   \\
    & \textbf{\sys (NLPS)}  & \textbf{94.8\footnotesize $\pm$0.1} & \textbf{93.6\footnotesize $\pm$0.2} & 90.3\footnotesize $\pm$0.2 & \textbf{87.3\footnotesize $\pm$0.3} & \textbf{77.1\footnotesize $\pm$0.6}  \\
    \midrule
    \multirow{7}{*}{
    \begin{tabular}
        [c]{@{}c@{}}\small \textit{With}\\ \small \textit{Training}\\ \small \textit{Dynamics}
    \end{tabular}
    } & Moderate (AUM) & 93.9\footnotesize $\pm$0.2 & 93.1\footnotesize $\pm$0.2 & 90.1\footnotesize $\pm$0.2 & 87.1\footnotesize $\pm$0.2 & 79.9\footnotesize $\pm$0.3  \\
    & Forgetting  & 94.5\footnotesize $\pm$0.2 & 92.6\footnotesize $\pm$0.1 & 89.8\footnotesize $\pm$0.2 & 85.6\footnotesize $\pm$0.3 & 67.6\footnotesize $\pm$0.4  \\
    & Glister & 94.4\footnotesize $\pm$0.2 & 93.8\footnotesize $\pm$0.2 & 90.8\footnotesize $\pm$0.4 & 85.1\footnotesize $\pm$0.6 & 66.8\footnotesize $\pm$1.3  \\
    & LCMat-S & 94.5\footnotesize $\pm$0.2 & 93.3\footnotesize $\pm$0.2 & 90.5\footnotesize $\pm$0.2 & 86.9\footnotesize $\pm$0.2 & 75.1\footnotesize $\pm$0.8 \\
    & CCS & 95.5\footnotesize $\pm$0.1 & \underline{94.8\footnotesize $\pm$0.2} & 93.0\footnotesize $\pm$0.2 & \textbf{90.7\footnotesize $\pm$0.2} & 81.9\footnotesize $\pm$0.7  \\
    & D2 & \textbf{95.6\footnotesize $\pm$0.1} & \textbf{94.8\footnotesize $\pm$0.1} & \textbf{93.1\footnotesize $\pm$0.1} & 89.2\footnotesize $\pm$0.2 & 80.9\footnotesize $\pm$1.5   \\
    & \textbf{\sys}  & 94.9\footnotesize $\pm$0.2 & 94.2\footnotesize $\pm$0.2 & 92.1\footnotesize $\pm$0.2 & 89.4\footnotesize $\pm$0.2 & \textbf{82.0\footnotesize $\pm$0.2}  \\

    \midrule
    & & \multicolumn{5}{c}{\textbf{CIFAR-100 \footnotesize(ResNet-18)}} \\
    \midrule
    \multirow{4}{*}{
    \begin{tabular}
        [c]{@{}c@{}}\small \textit{No}\\ \small \textit{Training}\\ \small \textit{Dynamics}
    \end{tabular}
    } & Random & 75.3\footnotesize $\pm$0.2 & 71.6\footnotesize $\pm$0.1 & 63.7\footnotesize $\pm$0.5 & 55.9\footnotesize $\pm$1.0 & 34.0\footnotesize $\pm$1.1  \\
    & Moderate  & 74.9\footnotesize $\pm$0.3 & 70.1\footnotesize $\pm$0.3 & 63.7\footnotesize $\pm$0.2 & 56.1\footnotesize $\pm$0.5 & 34.9\footnotesize $\pm$2.1  \\
    & FDMat & 75.4\footnotesize $\pm$0.2 & \underline{71.9\footnotesize $\pm$0.3} & 64.0\footnotesize $\pm$0.6 & 56.1\footnotesize $\pm$1.5 & 37.5\footnotesize $\pm$1.6 \\
    & \textbf{\sys (NLPS)}  & \textbf{75.6\footnotesize $\pm$0.2} & \textbf{71.9\footnotesize $\pm$0.2} & \textbf{65.3\footnotesize $\pm$0.4} & \textbf{56.7\footnotesize $\pm$0.4} & \textbf{41.6\footnotesize $\pm$0.8} \\
    \midrule
    \multirow{7}{*}{
    \begin{tabular}
        [c]{@{}c@{}}\small \textit{With}\\ \small \textit{Training}\\ \small \textit{Dynamics}
    \end{tabular}
    } & Moderate (AUM) & 75.9\footnotesize $\pm$0.3 & 72.4\footnotesize $\pm$0.2 & 66.7\footnotesize $\pm$0.3 & 60.2\footnotesize $\pm$0.8 & 40.0\footnotesize $\pm$1.2  \\
    & Forgetting & 74.8\footnotesize $\pm$0.2 & 67.2\footnotesize $\pm$0.9 & 50.6\footnotesize $\pm$0.7 & 32.3\footnotesize $\pm$0.9 & 24.3\footnotesize $\pm$1.4 \\
    & Glister & 75.8\footnotesize $\pm$0.3 & 70.7\footnotesize $\pm$0.7 & 66.1\footnotesize $\pm$1.2 & 54.7\footnotesize $\pm$1.6 & 38.4\footnotesize $\pm$1.7 \\
    & LCMat-S & 75.3\footnotesize $\pm$0.2 & 71.1\footnotesize $\pm$0.2 & 62.5\footnotesize $\pm$0.8 & 52.1\footnotesize $\pm$2.0 & 36.1\footnotesize $\pm$1.7  \\
    & CCS & \textbf{76.9\footnotesize $\pm$0.3} & \underline{73.8\footnotesize $\pm$0.3} & 67.8\footnotesize $\pm$0.7 & 60.7\footnotesize $\pm$0.6 & 45.2\footnotesize $\pm$2.4  \\
    & D2 & 75.1\footnotesize $\pm$0.5 & 71.2\footnotesize $\pm$0.2 & 67.8\footnotesize $\pm$0.9 & 61.1\footnotesize $\pm$1.4 & 44.3\footnotesize $\pm$2.6 \\
    & \textbf{\sys}  & 76.7\footnotesize $\pm$0.3 & \textbf{73.8\footnotesize $\pm$0.3} & \textbf{68.1\footnotesize $\pm$0.3} & \textbf{62.3\footnotesize $\pm$0.4} & \textbf{45.7\footnotesize $\pm$0.5} \\

    \midrule
    & & \multicolumn{5}{c}{\textbf{ImageNet-1K \footnotesize(ResNet-50)}} \\
    \midrule
    \multirow{4}{*}{
    \begin{tabular}
        [c]{@{}c@{}}\small \textit{No}\\ \small \textit{Training}\\ \small \textit{Dynamics}
    \end{tabular}
    } & Random  & 69.8\footnotesize $\pm$0.5 & 68.4\footnotesize $\pm$0.5 & 65.1\footnotesize $\pm$0.4 & 61.9\footnotesize $\pm$0.5 & 52.5\footnotesize $\pm$0.6 \\
    & Moderate & 69.5\footnotesize $\pm$0.2 & 65.8\footnotesize $\pm$0.4 & 60.5\footnotesize $\pm$0.1 & 57.7\footnotesize $\pm$0.2 & 50.0\footnotesize $\pm$0.4 \\
    & FDMat & \textbf{70.8\footnotesize $\pm$0.3} & 68.7\footnotesize $\pm$0.5 & 65.5\footnotesize $\pm$0.7 & 62.0\footnotesize $\pm$0.3 & 51.9\footnotesize $\pm$0.3 \\
    & \textbf{\sys (NLPS)} & 70.7\footnotesize $\pm$0.4 & \textbf{69.9\footnotesize $\pm$0.2} & \textbf{66.4\footnotesize $\pm$0.2} & \textbf{63.2\footnotesize $\pm$0.3} & \textbf{53.9\footnotesize $\pm$0.2} \\
    \midrule
    \multirow{7}{*}{
    \begin{tabular}
        [c]{@{}c@{}}\small \textit{With}\\ \small \textit{Training}\\ \small \textit{Dynamics}
    \end{tabular}
    } & Moderate (AUM)  & 69.6\footnotesize $\pm$0.4 & 67.2\footnotesize $\pm$0.6 & 63.9\footnotesize $\pm$0.8 & 60.4\footnotesize $\pm$0.6 & 52.7\footnotesize $\pm$0.3 \\
    & Forgetting  & 69.9\footnotesize $\pm$0.2 & 66.8\footnotesize $\pm$0.6 & 60.2\footnotesize $\pm$0.5 & 59.1\footnotesize $\pm$0.4 & 50.0\footnotesize $\pm$0.5 \\
    & Glister & 66.3\footnotesize $\pm$0.4 & 63.5\footnotesize $\pm$0.3 & 59.3\footnotesize $\pm$0.5 &  56.5\footnotesize $\pm$0.3 & 49.3\footnotesize $\pm$0.8 \\
    & LCMat-S & 69.8\footnotesize $\pm$0.4 & 67.5\footnotesize $\pm$0.5 & 62.2\footnotesize $\pm$0.3 & 59.7\footnotesize $\pm$0.5 & 48.8\footnotesize $\pm$0.6  \\
    & CCS & 70.1\footnotesize $\pm$0.5 & 69.1\footnotesize $\pm$0.3 & 65.7\footnotesize $\pm$0.3 & 62.6\footnotesize $\pm$0.6 & 55.2\footnotesize $\pm$0.7 \\
    & D2  & 69.5\footnotesize $\pm$0.3 & 67.1\footnotesize $\pm$0.5 & 65.7\footnotesize $\pm$0.4 & 62.7\footnotesize $\pm$0.9 & 55.5\footnotesize $\pm$1.3 \\
    & \textbf{\sys}  & \textbf{70.8\footnotesize $\pm$0.2} & \textbf{69.5\footnotesize $\pm$0.2} & \textbf{66.2\footnotesize $\pm$0.1} & \textbf{63.1\footnotesize $\pm$0.3} & \textbf{56.1\footnotesize $\pm$0.2} \\
\bottomrule
\end{tabular}
}
\vspace{-\baselineskip}
\label{tab:acc_std}
\end{wraptable} 

\sys utilizes several tools and frameworks. Manifold projection is performed using \texttt{UMAP} \citep{umap-software}, \texttt{multipers} \citep{multipers} facilitates differential persistent homology which uses the \texttt{Gudhi} C++ library \citep{gudhi:RipsComplex} as a backend, and \texttt{DeepCore} \citep{deepcore} is used to standardize coreset selection and training across different methods.

To ensure fair comparisons in our experiments, we evaluate two versions of our framework. When benchmarking against other training-free methods, we use \textit{\sys (NLPS)}, which incorporates NLPS for mislabeled samples. When comparing against methods that require training-time information, we use \textit{\sys}, which incorporates the original AUM score. We compare \textbf{\sys (NLPS)} with several static geometry-based coreset selection methods: \textbf{Random} selection. \textbf{Moderate} \citep{moderateds} uses samples near the median distance to a class prototype (the barycenter of a point-mass distribution). \textbf{FDMat} \citep{fdmat} matches data distribution between dataset and coreset using optimal transport. We compare \textbf{\sys} with several geometry, score, and optimization-based methods that require training-time information: \textbf{Moderate (AUM)} incorporating Moderate with AUM-based mislabeled removal. \textbf{Forgetting} \citep{forgetting} uses the number of times an example is incorrectly classified after being correctly classified earlier during training. \textbf{Glister} \citep{glister} uses bi-level optimization. \textbf{LCMat-S} \citep{lcmats} matches loss curvature between dataset and coreset. \textbf{CCS} \citep{ccs} uses stratified sampling of difficulty scores (such as AUM or Forgetting) with intra-strata random sampling. \textbf{D2} \citep{d2} uses a message-passing graph network while also incorporating AUM for mislabeled samples. All reported accuracies and standard deviations are computed over five independent training runs.

\subsection{Performant and Stable Coresets with \sys}
\label{sec:results_performance}


Our experiments, detailed in \cref{tab:acc_std}, demonstrate that \sys is competitive across CIFAR-10, CIFAR-100, and ImageNet-1K. Notably, this performance advantage scales with task difficulty. While competitive on simpler datasets, our method's dominance becomes most pronounced on the challenging ImageNet-1K benchmark and at extreme pruning rates (e.g., 90\%). 

Beyond accuracy, \sys exhibits notable precision gains. As evidenced by the standard deviation metrics in \cref{tab:acc_std} and validated via statistical testing (\cref{sec:appendix_statistical_significance}), our approach delivers better precision compared to extrinsic geometric baselines. For instance, on ImageNet-1K at 90\% pruning, \sys reduces variance by up to $6.5\times$ relative to the Euclidean graph-based D2. This precision ensures highly consistent data selection, demonstrating that our coresets faithfully represent the underlying distribution without requiring multiple costly trials. Finally, to explicitly confirm that these gains in accuracy and precision are driven fundamentally by our topological construction rather than the mislabel filtering, we provide a targeted component ablation in \cref{Sec:Appendix_filtering_ablation}.

\subsection{Robustness to Noisy and Corrupted Representations}
\label{sec:results_noisy_features}

\begin{wraptable}{r}{0.5\linewidth}
    \vspace{-\baselineskip}
    \caption{\textbf{Robustness to noisy and corrupted representations at 90\% pruning rate.} In this extreme compression regime, where selection sensitivity is most pronounced, \sys maintains superior performance across all synthetic noise and realistic corruption types compared to geometric baselines.}
    \label{tab:robust_main}
    
    \centering
    \setlength{\tabcolsep}{3.0pt}
    \begin{subtable}{\linewidth}
        \centering
        \resizebox{\linewidth}{!}
        {
        \begin{tabular}{lcccc}
        \toprule
            Noise ($\rightarrow$) & $\mathcal{N}(0.25\sigma)$ & $\mathcal{N}(\sigma)$ & $\mathcal{N}(4\sigma)$ & $\mathcal{N}(8\sigma)$ \\
            \midrule
            Moderate & 33.2\footnotesize $\pm$0.9 & 33.9\footnotesize $\pm$0.2 & 32.0\footnotesize $\pm$1.2 & 32.1\footnotesize $\pm$1.4 \\
            D2 & 44.4\footnotesize $\pm$1.5 & 40.2\footnotesize $\pm$2.0 & 40.5\footnotesize $\pm$1.7 & 39.8\footnotesize $\pm$3.2 \\
            \textbf{\sys} & \textbf{45.4\footnotesize $\pm$0.8} & \textbf{45.5\footnotesize $\pm$0.6} & \textbf{46.1\footnotesize $\pm$0.7} & \textbf{43.9\footnotesize $\pm$0.4} \\
        \bottomrule
        \end{tabular}
        }
        \caption{Isotropic Latent Noise}
    \end{subtable}
    
    \begin{subtable}{\linewidth}
        \centering
        \resizebox{\linewidth}{!}
        {
        \begin{tabular}{lcccc}
        \toprule
            Noise ($\rightarrow$) & $\mathcal{N}(0.25\sigma)$ & $\mathcal{N}(\sigma)$ & $\mathcal{N}(4\sigma)$ & $\mathcal{N}(8\sigma)$ \\
            \midrule
            Moderate & 35.0\footnotesize $\pm$0.3 & 34.0\footnotesize $\pm$0.6 & 35.7\footnotesize $\pm$1.7 & 34.4\footnotesize $\pm$1.5 \\
            D2 & 40.4\footnotesize $\pm$1.5 & 43.4\footnotesize $\pm$0.9 & 41.3\footnotesize $\pm$2.2 & 42.6\footnotesize $\pm$2.5 \\
            \textbf{\sys} & \textbf{40.9\footnotesize $\pm$0.2} & \textbf{43.5\footnotesize $\pm$0.3} & \textbf{41.9\footnotesize $\pm$0.4} & \textbf{43.3\footnotesize $\pm$0.7} \\
        \bottomrule
        \end{tabular}
        }
        \caption{Input-level Noise}
    \end{subtable}

    \begin{subtable}{\linewidth}
        \centering
        \resizebox{\linewidth}{!}
        {
        \begin{tabular}{lcccc}
        \toprule
            Corr. ($\rightarrow$) & Contrast & Motion Blur & Frost & JPEG \\
            \midrule
            Moderate & 30.9\footnotesize $\pm$0.7 & 34.9\footnotesize $\pm$0.8 & 36.1\footnotesize $\pm$0.3 & 33.9\footnotesize $\pm$0.6 \\
            D2 & 43.2\footnotesize $\pm$2.2 & 41.3\footnotesize $\pm$1.2 & 43.2\footnotesize $\pm$1.4 & 40.9\footnotesize $\pm$1.1 \\
            \textbf{\sys} & \textbf{45.0\footnotesize $\pm$1.2} & \textbf{42.7\footnotesize $\pm$0.5} & \textbf{43.2\footnotesize $\pm$0.2} & \textbf{43.5\footnotesize $\pm$0.5}\\
        \bottomrule
        \end{tabular}
        }
        \caption{Image Corruptions}
    \end{subtable}

\vspace{-\baselineskip}
\end{wraptable}

To evaluate the robustness of our method against representation degradation, we compare \sys against geometry-based baselines, Moderate \cite{moderateds} and D2 \cite{d2} and evaluate three perturbation regimes on CIFAR-100. (1) \textit{Isotropic latent noise:} Gaussian noise injected into the penultimate layer embeddings. For each sample's feature vector $\mathbf{z} \in \mathbb{R}^D$ with standard deviation $\sigma_{\mathbf{z}}$, we create a perturbed version $\mathbf{z'} = \mathbf{z} + \epsilon$, where $\epsilon \sim \mathcal{N}(0, \sigma_{\mathbf{z}})$. (2) \textit{Input-level noise:} Pixel-level Gaussian perturbations applied to the images prior to feature extraction to observe how standard input noise propagates. (3) \textit{Realistic image corruptions:} To test structured, non-isotropic latent distortions, we utilize the CIFAR-C benchmark \cite{hendrycksbenchmarking}. We extract embeddings from images altered by four diverse corruption types (contrast, motion blur, frost, and JPEG compression) at severity level 3. Coresets are selected using these corrupted embeddings, but the resulting models are trained and evaluated on clean data. This serves to test whether topological sample importance is preserved under realistic degradation.

As shown in \cref{tab:robust_main}, \sys consistently demonstrates superior resilience across all three perturbation types at the challenging 90\% pruning rate, confirming that \sys identifies a high-fidelity coreset even when the underlying feature space is distorted. Detailed quantitative results for all pruning rates are provided in \cref{tab:noise_ablation}, \cref{tab:input_noise_ablation}, and \cref{tab:corruption_ablation} in the Appendix.

\paragraph{Implications.} The preservation of topological importance under both synthetic noise and realistic, structured corruptions indicates our method is not overly dependent on a perfectly optimized source model. This suggests that effective coresets could be generated using embeddings from models that are partially trained, quantized for edge devices, or applied to slightly out-of-distribution data as similarly shown in \cite{ph3}.

\subsection{Transferability \& Unified Representations Across Architectures}
\label{sec:results_arch_transfer}

\begin{figure}[!tp]
  \centering
  \begin{minipage}[b]{0.45\textwidth}
    \centering
    \includegraphics[width=\linewidth]{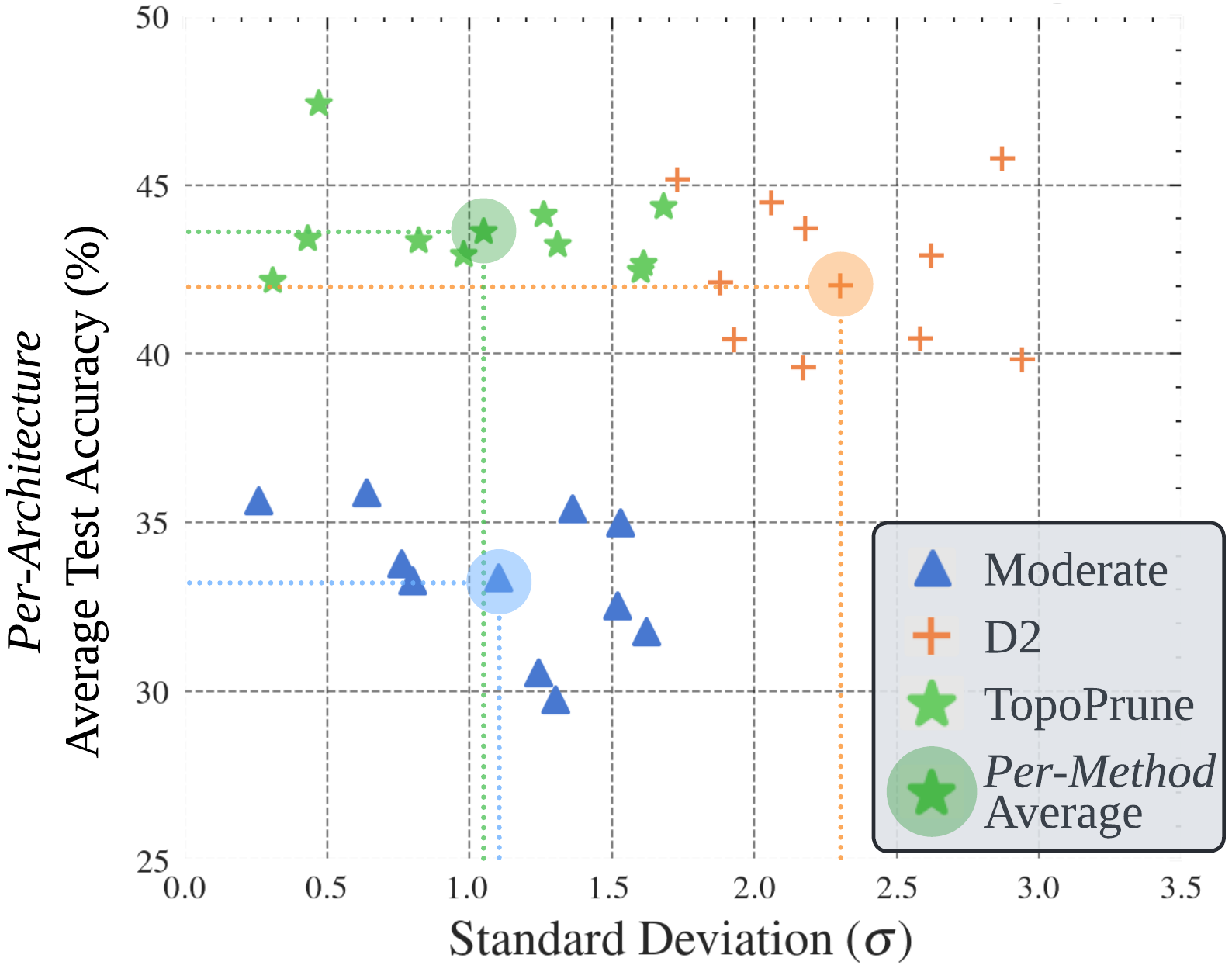}
    \caption{\textbf{Transferability of \textit{Diverse (larger) Embeddings} $\rightarrow$ \textit{Fixed (smaller) Target}.} \sys achieves higher mean accuracy and lower standard deviation across 10 architectures at a high pruning rate of 90\% for CIFAR-100, where top left (low standard deviation and high accuracy) is best.}
    \label{fig:diverse_to_fixed}
  \end{minipage}
  \hfill 
  \begin{minipage}[b]{0.53\textwidth}
    \setlength{\tabcolsep}{0.5pt}
    \centering
    \resizebox{\linewidth}{!}{
    \begin{tabular}{l c c c @{\hskip 5pt} c c c}
        \toprule
        Prun. Rate ($\rightarrow$) & \multicolumn{3}{c}{80\%} & \multicolumn{3}{c}{90\%}\\
        \cmidrule(lr){2-4} \cmidrule(lr){5-7}
        \textbf{CIFAR-100} & Oracle & ResNet-18 & $\Delta$ & Oracle & ResNet-18 & $\Delta$ \\
        \midrule
        ResNet-50 & 57.0\footnotesize $\pm$0.6 & 56.1\footnotesize $\pm$0.7 & \textbf{-0.9} & 38.7\footnotesize $\pm$1.4 & 39.6\footnotesize $\pm$1.6 & \textbf{+0.9} \\
        EffNet-B0 & 55.0\footnotesize $\pm$2.1 & 54.4\footnotesize $\pm$1.4 & \textbf{-0.6} & 39.8\footnotesize $\pm$1.9 & 39.7\footnotesize $\pm$3.3 & \textbf{-0.1} \\
        SwinV2-T & 38.1\footnotesize $\pm$0.7 & 39.9\footnotesize $\pm$0.9 & \textbf{+1.8} & 27.6\footnotesize $\pm$1.2 & 29.6\footnotesize $\pm$1.5 & \textbf{+2.0} \\
    \bottomrule
    \end{tabular}}

    \resizebox{\linewidth}{!}{
    \begin{tabular}{l c c c @{\hskip 5pt} c c c}
        \toprule
        Prun. Rate ($\rightarrow$) & \multicolumn{3}{c}{80\%} & \multicolumn{3}{c}{90\%}\\
        \cmidrule(lr){2-4} \cmidrule(lr){5-7}
        \textbf{ImageNet-1K} & Oracle & ResNet-50 & $\Delta$ & Oracle & ResNet-50 & $\Delta$ \\
        \midrule
        EffNetV2-M & 39.1\footnotesize\footnotesize $\pm$1.4 & 40.8\footnotesize $\pm$0.3 & \textbf{+1.7}  & 35.9\footnotesize $\pm$0.3 & 37.1\footnotesize $\pm$1.3 & \textbf{+1.2}  \\
        SwinV2-T & 57.8\footnotesize $\pm$0.7 & 59.0\footnotesize $\pm$1.1 & \textbf{+1.2} & 38.3\footnotesize $\pm$3.4 & 38.0\footnotesize $\pm$4.1 & \textbf{-0.3} \\
        SwinV2-B & 44.7\footnotesize $\pm$1.2 & 45.8\footnotesize $\pm$2.1 & \textbf{+1.1} & 44.4\footnotesize $\pm$2.1 & 45.1\footnotesize $\pm$1.1 & \textbf{+0.7}  \\
    \bottomrule
    \end{tabular}}
    \captionof{table}{\textbf{Transferability of \textit{Fixed (smaller) Embedding $\rightarrow$ Diverse (larger) Targets}}. We compare the "Oracle" performance (coreset selected from \sys using the target model's own embeddings) against coreset selected from \sys using smaller proxy model embeddings (ResNet-18 for CIFAR-100 and ResNet-50 for ImageNet-1k) to train a diverse range of models.}
    \label{tab:arch_ablation_2}
  \end{minipage}
  \vspace{-\baselineskip}
\end{figure}

We investigate the limitations of geometric metrics across a wide range of network architectures, finding that metric stability increases dramatically as we move toward topology-based metrics. (1) \textit{Metric Stability}: First, we analyze the Euclidean distance of samples to their class prototype. This metric proves highly inconsistent across architectures, demonstrating ``geometric brittleness'' (\cref{fig:proxy_networks_distance}). The global Density Score (\cref{fig:proxy_networks_density}) improves uniformity, while the local Persistence Score (\cref{fig:proxy_networks_persistence}) aligns almost perfectly across all tested architectures. (2) \textit{Diverse (larger) Embeddings $\rightarrow$ Fixed (smaller) Target}: Leveraging this metric stability, we evaluate transferability from diverse feature embeddings to a single target model. As detailed in \cref{fig:diverse_to_fixed}, \sys consistently yields higher accuracy and lower standard deviation across diverse proxies (ResNet, EfficientNet, Swin, ViT, OpenCLIP) to train a fixed ResNet-18 target, significantly outperforming geometric baselines. Please see Appendix \cref{tab:arch_ablation} for detailed values and \cref{Sec:Appendix_euclidean_topology} for further justification. (3) \textit{Fixed (smaller) Embedding $\rightarrow$ Diverse (larger) Targets}: Finally, we validate transferring from a static feature embedding to diverse target models. Using a standard ResNet proxy to select coresets for training EfficientNet and Swin Transformers, we find that \sys achieves performance competitive with, and in some cases exceeding, ``Oracle'' selection where the target model selects its own coreset (see \cref{tab:arch_ablation_2}). This confirms that topological importance derived from standard proxy embeddings is highly generalizable, allowing selection of a single, high-quality coreset effective for a wide range of downstream architectures.

\paragraph{Implications.} Recent literature spanning representational similarity \citep{kriegeskorte2008rsa}, model stitching \citep{lenc2015understanding, bansal2021revisiting}, and topological divergence \citep{barannikov2022rtd} establishes that as models scale, their latent spaces converge toward a shared statistical model of reality, formalized as the Platonic Representation Hypothesis \citep{platonic1}. The recent ``Aristotelian'' refinement \citep{platonic2} specifies that alignment is strictly anchored in \textit{local neighborhood connectivity}, while global geometry remain highly variable across architectures.

\sys builds on this insight. Rather than relying on brittle global geometry, we extract a topology-aware projection coupled with persistent homology (both grounded in topological connectivity rather than raw distances), the very signal the Aristotelian refinement identifies as preserved. Our metric stability analysis (\cref{fig:proxy_networks_all}) validates this: Euclidean-to-prototype distributions vary wildly, while our topology-aware density prior is more uniform and local persistence achieves near-perfect uniformity. This lets us access the aligned component of representations without requiring full geometric alignment, improving transferability for two regimes: (1) efficient scaling using small proxy embeddings to select data for larger targets (\cref{tab:arch_ablation_2}) and (2) transferable coresets from a single pretrained model (\cref{fig:diverse_to_fixed}).
\section{Conclusion}
\label{Sec:Conclusion}

In this work, we address the critical challenge of instability in geometric coreset selection methods, which arises from their reliance on extrinsic metrics. We present \sys, a novel framework that overcomes this ``geometric brittleness'' by leveraging topology to capture the data's intrinsic structure. Our dual-scale topological approach combines a global topology-aware manifold projection with a local importance score derived from differentiable persistent homology. \sys exhibits several key advantages: (1) yields coresets with higher accuracy and precision, (2) is resilient to representation degradation, withstanding latent noise, input-space perturbations, realistic image corruptions and (3) is more stable across a wide range of network architectures and transfer directions. By grounding data selection in the stable invariants of manifold density and persistent homology, \sys provides a principled, resilient foundation necessary for data-efficient learning.

\section*{Reproducibility Statement}
\label{Sec:Impact}

We strongly believe in the importance of reproducibility in scientific research and strive for full transparency in our work. The Methodology section (\cref{Sec:Methodology}) provides a comprehensive description of our dual-scale topological framework, including the global manifold projection, differentiable persistent homology optimization, and unified scoring formulation. The Experimental Setup subsection (\cref{sec:experimental_setup}) details the datasets, baselines, libraries (\texttt{UMAP}, \texttt{multipers}, \texttt{DeepCore}), and hardware environments used. Complete pseudocode for TopoPrune is provided in \cref{Sec:Appendix_pseudocode}, and all training, manifold projection, and persistent homology hyperparameters are documented in Appendix \cref{table:hyperparameters}. We plan to release our code publicly soon to facilitate extension of this work.
\section*{Limitations}

While \sys delivers strong accuracy, precision, and transferability gains, we highlight several limitations and suggest directions for future work. First, our topological backend currently lacks GPU acceleration and multi-processing support, resulting in higher wall-clock latency than non-topological baselines (\cref{Sec:appendix_complexity_analysis}). We expect maturing topological software infrastructure to help close this gap. Second, our empirical evaluation focuses on image classification benchmarks (CIFAR-10/100, ImageNet-1K). While the method makes no domain-specific assumptions and operates on generic latent embeddings, validation on other modalities such as multimodal foundation model embeddings, remains interesting future work. Finally, while we provide background for the transferability of topological metrics (\cref{Sec:Appendix_euclidean_topology}), the relationship between latent-space topology and downstream training dynamics remains an open question facing the broader topological data analysis community.
\section*{Impact Statement}

This work helps advance the understanding of deep learning by utilizing differentiable persistent homology as a rigorous tool for probing the intrinsic structure of neural representations. By bridging global manifold geometry and local topological interactions, \sys offers a novel lens for interpreting how deep models organize and separate data, complementing recent work on representation alignment across architectures. Beyond its immediate application in efficient coreset selection, this framework provides a robust, training-free mechanism for quantifying sample importance, filtering label noise, and enabling cross-architecture data curation without retraining. Ultimately, this work lays the foundation for future topological explainability tools, offering scalable insights into complex model dynamics while enabling high-performance training in resource-constrained environments.
\section*{Acknowledgments}


This project was supported in part by the Purdue Center for Secure Microelectronics Ecosystem – CSME\#210205 and the Center for the Co-Design of Cognitive Systems (CoCoSys), a DARPA-sponsored JUMP 2.0 center.

\bibliography{main.bib}

@inproceedings{moderateds,
    title={Moderate Coreset: A Universal Method of Data Selection for Real-world Data-efficient Deep Learning},
    author={Xiaobo Xia and Jiale Liu and Jun Yu and Xu Shen and Bo Han and Tongliang Liu},
    booktitle={International Conference on Learning Representations},
    year={2023},
}

@inproceedings{ccs,
    title={Coverage-centric Coreset Selection for High Pruning Rates},
    author={Haizhong Zheng and Rui Liu and Fan Lai and Atul Prakash},
    booktitle={International Conference on Learning Representations},
    year={2023},
}

@article{fdmat, 
    title={Feature Distribution Matching by Optimal Transport for Effective and Robust Coreset Selection}, 
    journal={AAAI Conference on Artificial Intelligence}, 
    author={Xiao, Weiwei and Chen, Yongyong and Shan, Qiben and Wang, Yaowei and Su, Jingyong}, 
    year={2024}
}

@inproceedings{d2,
    title={D2 Pruning: Message Passing for Balancing Diversity \& Difficulty in Data Pruning},
    author={Adyasha Maharana and Prateek Yadav and Mohit Bansal},
    booktitle={International Conference on Learning Representations},
    year={2024},
}

@InProceedings{mindboundary,
  title = 	 {Mind the Boundary: Coreset Selection via Reconstructing the Decision Boundary},
  author =       {Yang, Shuo and Cao, Zhe and Guo, Sheng and Zhang, Ruiheng and Luo, Ping and Zhang, Shengping and Nie, Liqiang},
  booktitle = 	 {International Conference on Machine Learning},
  year = 	 {2024},
}

@inproceedings{fairwass,
    title={Fair Wasserstein Coresets},
    author={Zikai Xiong and Niccolo Dalmasso and Shubham Sharma and Freddy Lecue and Daniele Magazzeni and Vamsi K. Potluru and Tucker Balch and Manuela Veloso},
    booktitle={Advances in Neural Information Processing Systems},
    year={2024},
}

@inproceedings{ses,
    title={Structural-Entropy-Based Sample Selection for Efficient and Effective Learning},
    author={Tianchi Xie and Jiangning Zhu and Guozu Ma and Minzhi Lin and Wei Chen and Weikai Yang and Shixia Liu},
    booktitle={International Conference on Learning Representations},
    year={2025},
}

@ARTICLE{bluenoise,
    author={Chen, Haidong and Chen, Wei and Mei, Honghui and Liu, Zhiqi and Zhou, Kun and Chen, Weifeng and Gu, Wentao and Ma, Kwan-Liu},
    journal={IEEE Transactions on Visualization \& Computer Graphics },
    title={Visual Abstraction and Exploration of Multi-class Scatterplots},
    year={2014},
}

@article{cui2025fast,
  title={FAST: Topology-Aware Frequency-Domain Distribution Matching for Coreset Selection},
  author={Cui, Jin and Zhao, Boran and Xu, Jiajun and Guo, Jiaqi and Guan, Shuo and Ren, Pengju},
  journal={ArXiv},
  year={2025}
}

@article{grad1,
  title={Coresets via Bilevel Optimization for Continual Learning and Streaming},
  author={Borsos, Zal{\'a}n and Mutny, Mojmir and Krause, Andreas},
  journal={Advances in Neural Information Processing Systems},
  year={2020}
}

@inproceedings{grad2,
title={Data Pruning via Moving-one-Sample-out},
author={Haoru Tan and Sitong Wu and Fei Du and Yukang Chen and Zhibin Wang and Fan Wang and Xiaojuan Qi},
booktitle={Advances in Neural Information Processing Systems},
year={2023},
}

@inproceedings{gradmatch,
  title={Grad-match: Gradient matching based data subset selection for efficient deep model training},
  author={Killamsetty, Krishnateja and Ramakrishnan, Ganesh and De, Abir and Iyer, Rishabh},
  booktitle={International Conference on Machine Learning},
  year={2021},
}

@inproceedings{glister,
  title={Glister: Generalization based data subset selection for efficient and robust learning},
  author={Killamsetty, Krishnateja and Sivasubramanian, Durga and Ramakrishnan, Ganesh and Iyer, Rishabh},
  booktitle={AAAI Conference on Artificial Intelligence},
  year={2021}
}

@inproceedings{grad4,
  title={Adaptive second order coresets for data-efficient machine learning},
  author={Pooladzandi, Omead and Davini, David and Mirzasoleiman, Baharan},
  booktitle={International Conference on Machine Learning},
  year={2022},
}

@inproceedings{grad5,
  title={Prioritized training on points that are learnable, worth learning, and not yet learnt},
  author={Mindermann, S{\"o}ren and Brauner, Jan M and Razzak, Muhammed T and Sharma, Mrinank and Kirsch, Andreas and Xu, Winnie and H{\"o}ltgen, Benedikt and Gomez, Aidan N and Morisot, Adrien and Farquhar, Sebastian and others},
  booktitle={International Conference on Machine Learning},
  year={2022},
}

@inproceedings{craig,
  title={Coresets for Data-efficient Training of Machine Learning Models},
  author={Baharan Mirzasoleiman and Jeff A. Bilmes and Jure Leskovec},
  booktitle={International Conference on Machine Learning},
  year={2019},
}

@inproceedings{dual,
    title={Lightweight Dataset Pruning without Full Training via Example Difficulty and Prediction Uncertainty},
    author={Yeseul Cho and Baekrok Shin and Changmin Kang and Chulhee Yun},
    booktitle={International Conference on Machine Learning},
    year={2025},
}

@inproceedings{lcmats,
  title={Loss-curvature matching for dataset selection and condensation},
  author={Shin, Seungjae and Bae, Heesun and Shin, Donghyeok and Joo, Weonyoung and Moon, Il-Chul},
  booktitle={International Conference on Artificial Intelligence and Statistics},
  year={2023},
}

@article{aum,
  title={Identifying mislabeled data using the area under the margin ranking},
  author={Pleiss, Geoff and Zhang, Tianyi and Elenberg, Ethan and Weinberger, Kilian Q},
  journal={Advances in Neural Information Processing Systems},
  year={2020}
}

@article{nagaraj2025coresets,
    title={Coresets from Trajectories: Selecting Data via Correlation of Loss Differences},
    author={Manish Nagaraj and Deepak Ravikumar and Kaushik Roy},
    journal={Transactions on Machine Learning Research},
    year={2025},
}

@inproceedings{sener2018active,
    title={Active Learning for Convolutional Neural Networks: A Core-Set Approach},
    author={Ozan Sener and Silvio Savarese},
    booktitle={International Conference on Learning Representations},
    year={2018},
}

@InProceedings{slocurve,
    author    = {Garg, Isha and Roy, Kaushik},
    title     = {Samples With Low Loss Curvature Improve Data Efficiency},
    booktitle = {IEEE/CVF Conference on Computer Vision and Pattern Recognition},
    year      = {2023},
}

@inproceedings{elfs,
    title={{ELFS}: Label-Free Coreset Selection with Proxy Training Dynamics},
    author={Haizhong Zheng and Elisa Tsai and Yifu Lu and Jiachen Sun and Brian R. Bartoldson and Bhavya Kailkhura and Atul Prakash},
    booktitle={International Conference on Learning Representations},
    year={2025},
}

@inproceedings{score1,
    title={You Only Condense Once: Two Rules for Pruning Condensed Datasets},
    author={Yang He and Lingao Xiao and Joey Tianyi Zhou},
    booktitle={Advances in Neural Information Processing Systems},
    year={2023},
}

@inproceedings{score2,
    title={Large-scale dataset pruning with dynamic uncertainty},
    author={He, Muyang and Yang, Shuo and Huang, Tiejun and Zhao, Bo},
    booktitle={IEEE/CVF Conference on Computer Vision and Pattern Recognition},
    year={2024}
}

@article{el2n,
    title={Deep learning on a data diet: Finding important examples early in training},
    author={Paul, Mansheej and Ganguli, Surya and Dziugaite, Gintare Karolina},
    journal={Advances in Neural Information Processing Systems},
    year={2021}
}

@inproceedings{forgetting,
    title={An Empirical Study of Example Forgetting during Deep Neural Network Learning},
    author={Mariya Toneva and Alessandro Sordoni and Remi Tachet des Combes and Adam Trischler and Yoshua Bengio and Geoffrey J. Gordon},
    booktitle={International Conference on Learning Representations},
    year={2019},
}

@article{score3,
  title={Dataset cartography: Mapping and diagnosing datasets with training dynamics},
  author={Swayamdipta, Swabha and Schwartz, Roy and Lourie, Nicholas and Wang, Yizhong and Hajishirzi, Hannaneh and Smith, Noah A and Choi, Yejin},
  journal={ArXiv},
  year={2020}
}

@inproceedings{proxy0,
    title={Selection via Proxy: Efficient Data Selection for Deep Learning},
    author={Cody Coleman and Christopher Yeh and Stephen Mussmann and Baharan Mirzasoleiman and Peter Bailis and Percy Liang and Jure Leskovec and Matei Zaharia},
    booktitle={International Conference on Learning Representations},
    year={2020},
}

@inproceedings{scaling2,
    title={Beyond neural scaling laws: beating power law scaling via data pruning},
    author={Ben Sorscher and Robert Geirhos and Shashank Shekhar and Surya Ganguli and Ari S. Morcos},
    booktitle={Advances in Neural Information Processing Systems},
    editor={Alice H. Oh and Alekh Agarwal and Danielle Belgrave and Kyunghyun Cho},
    year={2022},
}

@inproceedings{platonic1,
  title={Position: The Platonic Representation Hypothesis},
  author={Minyoung Huh and Brian Cheung and Tongzhou Wang and Phillip Isola},
  booktitle={International Conference on Machine Learning},
  year={2024},
}

@article{platonic2,
  title={Revisiting the platonic representation hypothesis: An aristotelian view},
  author={Gr{\"o}ger, Fabian and Wen, Shuo and Brbi{\'c}, Maria},
  journal={International Conference on Machine Learning},
  year={2026}
}

@article{kriegeskorte2008rsa,
  title={Representational similarity analysis -- connecting the branches of systems neuroscience},
  author={Kriegeskorte, Nikolaus and Mur, Marieke and Bandettini, Peter},
  journal={Frontiers in Systems Neuroscience},
  year={2008}
}

@inproceedings{lenc2015understanding,
  title={Understanding image representations by measuring their equivariance and equivalence},
  author={Lenc, Karel and Vedaldi, Andrea},
  booktitle={IEEE/CVF Conference on Computer Vision and Pattern Recognition},
  year={2015}
}

@inproceedings{bansal2021revisiting,
  title={Revisiting model stitching to compare neural representations},
  author={Bansal, Yamini and Nakkiran, Preetum and Barak, Boaz},
  booktitle={Advances in Neural Information Processing Systems},
  year={2021}
}

@inproceedings{barannikov2022rtd,
  title={Representation Topology Divergence: A method for comparing neural network representations},
  author={Barannikov, Serguei and Trofimov, Ilya and Balabin, Nikita and Burnaev, Evgeny},
  booktitle={International Conference on Machine Learning},
  year={2022}
}

@article{umap,
  title={Umap: Uniform manifold approximation and projection for dimension reduction},
  author={McInnes, Leland and Healy, John and Melville, James},
  journal={ArXiv},
  year={2018}
}

@article{pacmap,
  title={Understanding how dimension reduction tools work: an empirical approach to deciphering t-SNE, UMAP, TriMAP, and PaCMAP for data visualization},
  author={Wang, Yingfan and Huang, Haiyang and Rudin, Cynthia and Shaposhnik, Yaron},
  journal={Journal of Machine Learning Research},
  year={2021}
}

@article {densmap,
	title = {Assessing single-cell transcriptomic variability through density-preserving data visualization},
    author = {Narayan, Ashwin and Berger, Bonnie and Cho, Hyunghoon},
	year = {2021},
	journal = {Nature Biotechnology}
}

@article{tsne,
  author  = {Laurens van der Maaten and Geoffrey Hinton},
  title   = {Visualizing Data using t-SNE},
  journal = {Journal of Machine Learning Research},
  year    = {2008},
}

@article{pca,
    author = {Karl Pearson},
    title = {On lines and planes of closest fit to systems of points in space},
    journal = {The London, Edinburgh, and Dublin Philosophical Magazine and Journal of Science},
    year = {1901},
}

@article{man1,
  title={Dimensionality reduction for visualizing single-cell data using UMAP},
  author={Etienne Becht and Leland McInnes and John Healy and Charles-Antoine Dutertre and Immanuel Kwok and Lai Guan Ng and Florent Ginhoux and Evan William Newell},
  journal={Nature Biotechnology},
  year={2018},
}

@inproceedings{man3,
    title={A Holistic Approach to Unifying Automatic Concept Extraction and Concept Importance Estimation},
    author={Thomas Fel and Victor Boutin and Louis B{\'e}thune and Remi Cadene and Mazda Moayeri and L{\'e}o And{\'e}ol and Mathieu Chalvidal and Thomas Serre},
    booktitle={Advances in Neural Information Processing Systems},
    year={2023},
}

@inproceedings{man4,
    title={Understanding Visual Feature Reliance through the Lens of Complexity},
    author={Thomas Fel and Louis B{\'e}thune and Andrew Kyle Lampinen and Thomas Serre and Katherine Hermann},
    booktitle={Advances in Neural Information Processing Systems},
    year={2024},
}

@book{topo0,
  author    = {Seifert, Herbert and Threlfall, W.},
  title     = {A textbook of topology},
  publisher = {Academic Press},
  year      = {1980},
  isbn      = {0-12-634850-2},
  notes     = {Translation of: Lehrbuch der Topologie},
}

@article{topo2,
  title={Low-dimensional embeddings of high-dimensional data},
  author={de Bodt, Cyril and Diaz-Papkovich, Alex and Bleher, Michael and Bunte, Kerstin and Coupette, Corinna and Damrich, Sebastian and Sanmartin, Enrique Fita and Hamprecht, Fred A and Horv{\'a}t, Em{\H{o}}ke-{\'A}gnes and Kohli, Dhruv and others},
  journal={ArXiv},
  year={2025}
}

@inproceedings{topoae,
  title={Topological autoencoders},
  author={Moor, Michael and Horn, Max and Rieck, Bastian and Borgwardt, Karsten},
  booktitle={International Conference on Machine Learning},
  year={2020},
}

@inproceedings{rtd,
    title={Learning topology-preserving data representations},
    author={Ilya Trofimov and Daniil Cherniavskii and Eduard Tulchinskii and Nikita Balabin and Evgeny Burnaev and Serguei Barannikov},
    booktitle={International Conference on Learning Representations},
    year={2023},
}

@inproceedings{ph1,
  title={Stability of persistence diagrams},
  author={Cohen-Steiner, David and Edelsbrunner, Herbert and Harer, John},
  booktitle={Symposium on Computational Geometry},
  year={2005}
}

@article{ph1.1,
  title={The simplex tree: An efficient data structure for general simplicial complexes},
  author={Boissonnat, Jean-Daniel and Maria, Cl{\'e}ment},
  journal={Algorithmica},
  year={2014},
}

@article{ph1.2,
  title={An introduction to multiparameter persistence},
  author={Botnan, Magnus Bakke and Lesnick, Michael},
  journal={ArXiv},
  year={2022},
}

@inproceedings{ph2,
    title={Intrinsic Dimension, Persistent Homology and Generalization in Neural Networks},
    author={Tolga Birdal and Aaron Lou and Leonidas Guibas and Umut Simsekli},
    booktitle={Advances in Neural Information Processing Systems},
    year={2021},
}

@article{ph3,
  title={On the effectiveness of persistent homology},
  author={Turkes, Renata and Montufar, Guido F and Otter, Nina},
  journal={Advances in Neural Information Processing Systems},
  year={2022},
  notes={* PH features allow to solve the above tasks even in the case of limited training data (Section 3), noisy (Section 3) and out-of-distribution (Section 5) test data, and limited computational resources}
}

@article{ph4,
  title={On characterizing the evolution of embedding space of neural networks using algebraic topology},
  author={Suresh, Suryaka and Das, Bishshoy and Abrol, Vinayak and Roy, S Dutta},
  journal={Pattern Recognition Letters},
  year={2024},
  notes={* Investigates the topological structure of latent embeddings over training by looking at Betti numbers}
}

@inproceedings{ph5,
    title={A Framework for Fast and Stable Representations of Multiparameter Persistent Homology Decompositions},
    author={David Loiseaux and Mathieu Carri{\`e}re and Andrew Blumberg},
    booktitle={Advances in Neural Information Processing Systems},
    year={2023},
}

@article{ph7,
  title={Stability and machine learning applications of persistent homology using the Delaunay-Rips complex},
  author={Mishra, Amish and Motta, Francis C},
  journal={Frontiers in Applied Mathematics and Statistics},
  year={2023},
}

@article{beyondeuclid,
  title={Beyond Euclid: an illustrated guide to modern machine learning with geometric, topological, and algebraic structures},
  author={Papillon, Mathilde and Sanborn, Sophia and Mathe, Johan and Cornelis, Louisa and Bertics, Abby and Buracas, Domas and Lillemark, Hansen J and Shewmake, Christian and Dinc, Fatih and Pennec, Xavier and Miolane, Nina},
  journal={Machine Learning: Science and Technology},
  year={2025},
}

@article{ph9,
  title={A roadmap for the computation of persistent homology},
  author={Otter, Nina and Porter, Mason A and Tillmann, Ulrike and Grindrod, Peter and Harrington, Heather A},
  journal={EPJ Data Science},
  year={2017},
}

@inproceedings{ph11,
  title={Optimizing persistent homology based functions},
  author={Carri{\`e}re, Mathieu and Chazal, Fr{\'e}d{\'e}ric and Glisse, Marc and Ike, Yuichi and Kannan, Hariprasad and Umeda, Yuhei},
  booktitle={International Conference on Machine Learning},
  year={2021},
}

@article{ph13,
  author  = {Gregory Naitzat and Andrey Zhitnikov and Lek-Heng Lim},
  title   = {Topology of Deep Neural Networks},
  journal = {Journal of Machine Learning Research},
  year    = {2020},
}

@article{ph14,
  title={Stable vectorization of multiparameter persistent homology using signed barcodes as measures},
  author={Loiseaux, David and Scoccola, Luis and Carri{\`e}re, Mathieu and Botnan, Magnus Bakke and Oudot, Steve},
  journal={Advances in Neural Information Processing Systems},
  year={2023}
}

@article{ph15,
  title={Computing minimal presentations and bigraded betti numbers of 2-parameter persistent homology},
  author={Lesnick, Michael and Wright, Matthew},
  journal={SIAM Journal on Applied Algebra and Geometry},
  year={2022},
}

@article{ph16,
  title={Keeping it sparse: Computing persistent homology revisited},
  author={Bauer, Ulrich and Masood, Talha Bin and Giunti, Barbara and Houry, Guillaume and Kerber, Michael and Rathod, Abhishek},
  journal={ArXiv},
  year={2022}
}

@article{ph17,
    title = {Delaunay Bifiltrations of Functions on Point Clouds},
    author = {Alonso, {\`A}ngel and Kerber, Michael and Lam, Tung and Lesnick, Michael},
    year = {2024},
    title = {Delaunay Bifiltrations of Functions on Point Clouds},
    journal = {ACM-SIAM Symposium on Discrete Algorithms}
}

@article{ph18,
    title = {Topological Persistence and Simplification},
    author = {Edelsbrunner, Herbert and Letscher, David and Zomorodian, Afra},
    year = {2002},
    journal = {Discrete and Computational Geometry}
}

@article{ph19,
    title = {Computing Persistent Homology},
    author = {Afra Zomorodian and Gunnar Carlsson},
    year = {2005},
    journal = {Discrete and Computational Geometry}
}

@article{ph21,
    title = {The Theory of Multidimensional Persistence},
    author = {Carlsson, Gunnar and Zomorodian, Afra},
    year = {2007},
    journal = {Discrete and Computational Geometry},
}

@inproceedings{ph22,
 author = {Carri{\`e}re, Mathieu and Blumberg, Andrew},
 booktitle = {Advances in Neural Information Processing Systems},
 title = {Multiparameter Persistence Image for Topological Machine Learning},
 year = {2020}
}

@inproceedings{ph23,
    title={The Shape of Adversarial Influence: Characterizing {LLM} Latent Spaces with Persistent Homology},
    author={Aideen Fay and In{\'e}s Garc{\'\i}a-Redondo and Qiquan Wang and Haim Dubossarsky and Anthea Monod},
    booktitle={International Conference on Learning Representations},
    year={2026},
}

@inproceedings{dif_ph1,
    title={Differentiability and Optimization of Multiparameter Persistent Homology},
    author={Luis Scoccola and Siddharth Setlur and David Loiseaux and Mathieu Carri{\`e}re and Steve Oudot},
    booktitle={International Conference on Machine Learning},
    year={2024},
}

@inproceedings{dif_ph2,
 author = {Carri\`{e}re, Mathieu and Theveneau, Marc and Lacombe, Th\'{e}o},
 booktitle = {Advances in Neural Information Processing Systems},
 title = {Diffeomorphic interpolation for efficient persistence-based topological optimization},
 year = {2024}
}

@article{dif_ph3,
  title={D-gril: End-to-end topological learning with 2-parameter persistence},
  author={Mukherjee, Soham and Samaga, Shreyas N and Xin, Cheng and Oudot, Steve and Dey, Tamal K},
  journal={International Symposium on Computational Geometry},
  year={2026}
}

@inproceedings{mem1,
    title={Does learning require memorization? a short tale about a long tail},
    author={Feldman, Vitaly},
    booktitle={ACM SIGACT Symposium on Theory of Computing},
    year={2020}
}

@article{mem2,
    title={What Neural Networks Memorize and Why: Discovering the Long Tail via Influence Estimation},
    author={Feldman, Vitaly and Zhang, Chiyuan},
    journal={Advances in Neural Information Processing Systems},
    year={2020}
}

@inproceedings{mem3,
    title={Memorization through the lens of curvature of loss function around samples},
    author={Garg, Isha and Ravikumar, Deepak and Roy, Kaushik},
    booktitle={International Conference on Machine Learning},
    year={2024}
}

@book{cover2006elements,
  title={Elements of Information Theory},
  author={Cover, Thomas M. and Thomas, Joy A.},
  edition={2nd},
  publisher={Wiley-Interscience},
  year={2006}
}

@inproceedings{sun2022information,
  title={Information-theoretic Online Memory Selection for Continual Learning},
  author={Sun, Shengyang and Calandriello, Daniele and Hu, Huiyi and Li, Ang and Titsias, Michalis K.},
  booktitle={International Conference on Learning Representations (ICLR)},
  year={2022}
}

@article{multipers,
  title = {Multipers: {{Multiparameter Persistence}} for {{Machine Learning}}},
  shorttitle = {Multipers},
  author = {Loiseaux, David and Schreiber, Hannah},
  year = {2024},
  month = nov,
  journal = {Journal of Open Source Software},
  volume = {9},
  number = {103},
  pages = {6773},
  issn = {2475-9066},
  doi = {10.21105/joss.06773},
  langid = {english},
}

@inproceedings{deepcore,
  title={Deepcore: A comprehensive library for coreset selection in deep learning},
  author={Guo, Chengcheng and Zhao, Bo and Bai, Yanbing},
  booktitle={International Conference on Database and Expert Systems Applications},
  year={2022},
  organization={Springer}
}

@article{umap-software,
  title={UMAP: Uniform Manifold Approximation and Projection},
  author={McInnes, Leland and Healy, John and Saul, Nathaniel and Grossberger, Lukas},
  journal={The Journal of Open Source Software},
  volume={3},
  number={29},
  pages={861},
  year={2018}
}

@incollection{gudhi:RipsComplex
, author    = {Cl{\'{e}}ment Maria and Pawel Dlotko and Vincent Rouvreau and Marc Glisse}
, title     = {Rips complex}
, publisher = {GUDHI Editorial Board}
, edition   = {3.11.0}
, booktitle = {GUDHI User and Reference Manual}
, year      = {2025}
}

@inproceedings{resnet,
  title={Deep residual learning for image recognition},
  author={He, Kaiming and Zhang, Xiangyu and Ren, Shaoqing and Sun, Jian},
  booktitle={IEEE/CVF Conference on Computer Vision and Pattern Recognition},
  year={2016}
}

@inproceedings{efficientnet,
  title={Efficientnet: Rethinking model scaling for convolutional neural networks},
  author={Tan, Mingxing and Le, Quoc},
  booktitle={International Conference on Machine Learning},
  year={2019},
}

@inproceedings{efficientnetv2,
  title={Efficientnetv2: Smaller models and faster training},
  author={Tan, Mingxing and Le, Quoc},
  booktitle={International Conference on Machine Learning},
  year={2021},
}

@inproceedings{swinv2,
  title={Swin transformer v2: Scaling up capacity and resolution},
  author={Liu, Ze and Hu, Han and Lin, Yutong and Yao, Zhuliang and Xie, Zhenda and Wei, Yixuan and Ning, Jia and Cao, Yue and Zhang, Zheng and Dong, Li and others},
  booktitle={IEEE/CVF Conference on Computer Vision and Pattern Recognition},
  year={2022}
}

@inproceedings{vit,
  title={An Image is Worth 16x16 Words: Transformers for Image Recognition at Scale},
  author={Dosovitskiy, Alexey and Beyer, Lucas and Kolesnikov, Alexander and Weissenborn, Dirk and Zhai, Xiaohua and Unterthiner, Thomas and Dehghani, Mostafa and Minderer, Matthias and Heigold, Georg and Gelly, Sylvain and others},
  booktitle={International Conference on Learning Representations},
  year={2021}
}

@inproceedings{laion,
  title={{LAION}-5B: An open large-scale dataset for training next generation image-text models},
  author={Christoph Schuhmann and
          Romain Beaumont and
          Richard Vencu and
          Cade W Gordon and
          Ross Wightman and
          Mehdi Cherti and
          Theo Coombes and
          Aarush Katta and
          Clayton Mullis and
          Mitchell Wortsman and
          Patrick Schramowski and
          Srivatsa R Kundurthy and
          Katherine Crowson and
          Ludwig Schmidt and
          Robert Kaczmarczyk and
          Jenia Jitsev},
  booktitle={Advances in Neural Information Processing Systems Datasets and Benchmarks Track},
  year={2022},
}

@inproceedings{clip,
  title={Learning Transferable Visual Models From Natural Language Supervision},
  author={Alec Radford and Jong Wook Kim and Chris Hallacy and A. Ramesh and Gabriel Goh and Sandhini Agarwal and Girish Sastry and Amanda Askell and Pamela Mishkin and Jack Clark and Gretchen Krueger and Ilya Sutskever},
  booktitle={International Conference on Machine Learning},
  year={2021}
}

@inproceedings{sum_scalar0,
  title={Multicriteria Optimization (2. ed.)},
  author={Matthias Ehrgott},
  year={2005},
}

@article{sum_scalar1,
  title={The weighted sum method for multi-objective optimization: new insights},
  author={R. Timothy Marler and Jasbir Arora},
  journal={Structural and Multidisciplinary Optimization},
  year={2010},
}

@article{chazal2009gromov,
  title={Gromov-Hausdorff stable signatures for shapes using persistence},
  author={Chazal, Fr{\'e}d{\'e}ric and Cohen-Steiner, David and Guibas, Leonidas J and M{\'e}moli, Facundo and Oudot, Steve Y},
  journal={Computer Graphics Forum},
  year={2009}
}

@inproceedings{hendrycksbenchmarking,
  title={Benchmarking Neural Network Robustness to Common Corruptions and Perturbations},
  author={Hendrycks, Dan and Dietterich, Thomas},
  booktitle={International Conference on Learning Representations},
  year={2019}
}

@inproceedings{Zhu2022DetectingCL,
  title={Detecting Corrupted Labels Without Training a Model to Predict},
  author={Zhaowei Zhu and Zihao Dong and Yang Liu},
  booktitle={International Conference on Machine Learning},
  year={2022},
}
\newpage
\appendix
\etocdepthtag.toc{atoc}
\section*{Appendix}

\newcommand{\appendixtoc}{%
    \begingroup
    \etocsettagdepth{mtoc}{none}
    \etocsettagdepth{atoc}{subsubsection}
    
    \parbox[b]{0.96\textwidth}{
        \etocsettocstyle{\subsubsection*{Table of Contents\\ \vspace{-0.75em}\rule{\textwidth}{0.4pt}}\vspace{-.75em}}{}%
        \etoctableofcontents 
        \vspace{-0.25em} 
        \rule{\textwidth}{0.4pt}
    }\\
    \endgroup
}
\appendixtoc

\section{Conceptual Framework \& Intuition}
\subsection{Overview of Simplicial Complexes and Persistent Homology}
\label{Sec:appendix_persistent_homology}

Persistent homology is used to characterize the topological variations in the shape of a finite metric space across multiple scales. At a high level, this can be described as the ``birth'' and ``death'' (persistence) of topological structures (defined by a homology group). The process begins by constructing a \textit{simplicial complex}, a collection of points (0-simplices), edges (1-simplices), triangles (2-simplices), and their higher-dimensional counterparts that represents the data's structure \citep{ph1.1}. To analyze how this structure changes with scale, a \textit{filtration} is created (see \cref{fig:persistant_homology_overview}a). This is a nested sequence of simplicial complexes, $K_{r_1} \subseteq K_{r_2} \subseteq \dots \subseteq K_{r_n}$, indexed by a non-decreasing scale parameter $r$. For each complex $K_r$ in the filtration, we can compute its \textit{homology groups}, $H_k(K_r)$, which are vector spaces that algebraically capture its $k$-dimensional features. 

The rank of this group, known as the $k$-th \textit{Betti number} ($\beta_k = \text{rank}(H_k(K_r))$), provides a count of these features: $\beta_0$ counts connected components, $\beta_1$ counts loops or tunnels, $\beta_2$ counts voids, and so on. These values are central to understanding the distinction between an object's \textit{extrinsic} geometry and its \textit{intrinsic} topological properties. A classic example which illustrates this difference is that of a coffee mug and a torus (donut). These two objects are topologically equivalent because they share the same Betti numbers (see \cref{fig:persistant_homology_overview}b). Although their extrinsic geometries (including their shape, curvatures, and distances as embedded in 3D space are very different), their intrinsic topology is identical. This is because one can be continuously deformed into the other without tearing or gluing, preserving the single hole that defines them both.

\begin{figure*}[!tp]
    \centering
    \includegraphics[width=0.75\textwidth]{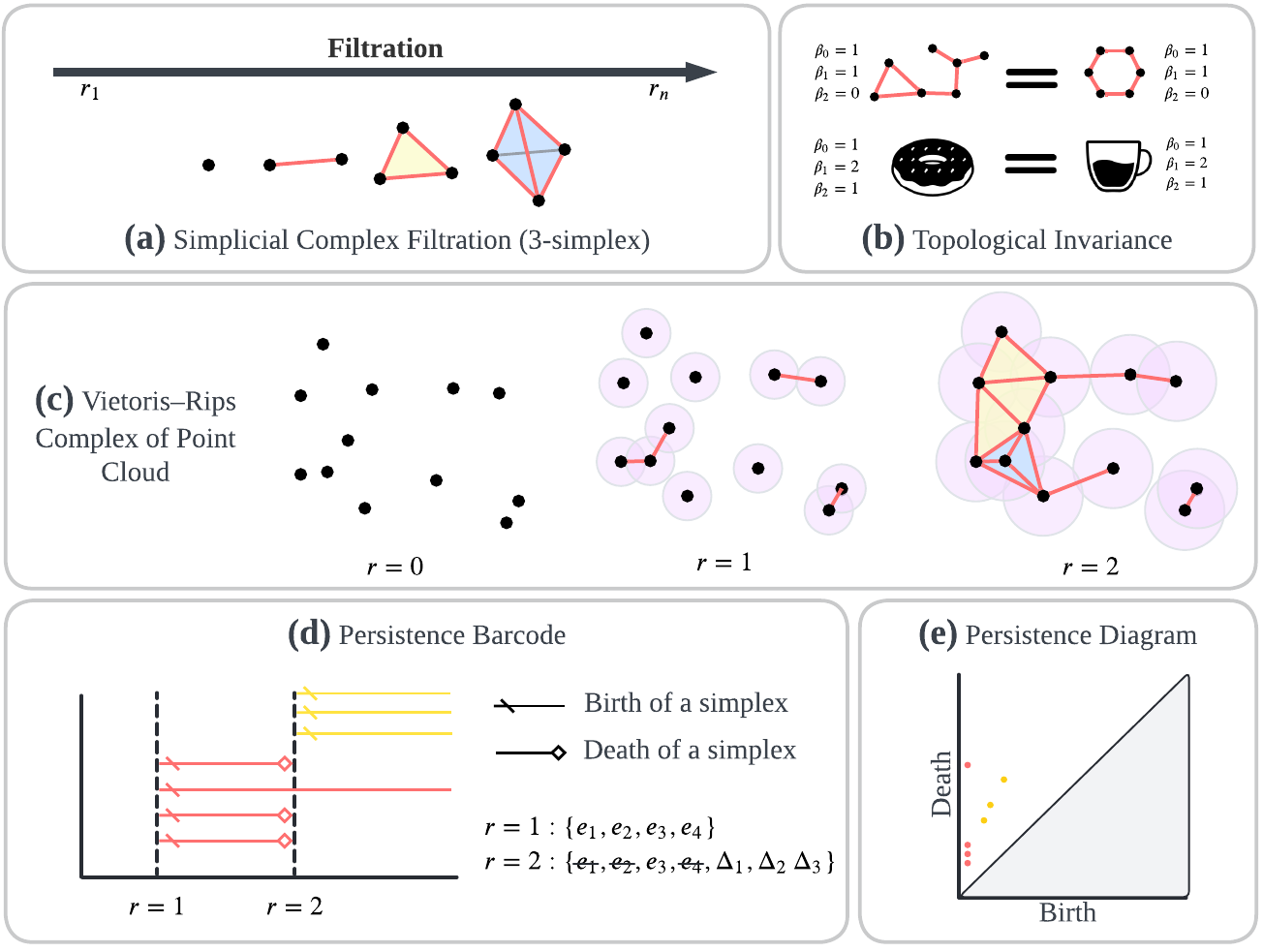}
    \caption{Overview of Simplicial Complexes and Persistent Homology}
    \label{fig:persistant_homology_overview}
\vspace{-\baselineskip}
\end{figure*}

Persistent homology is not wedded to any form of metric construction, and in fact you can do persistence on purely abstract simplicial complexes and any filtration on it. For conceptual clarity and alignment with foundational stability proofs, we illustrate this process on a point cloud $P = \{\mathbf{x}_i\}$ using the common \textit{Vietoris-Rips (VR) complex} (see \cref{fig:persistant_homology_overview}c). Note that while the VR complex serves as our pedagogical example in this section, the actual implementation of \sys employs a Delaunay filtration to maximize computational scalability \citep{ph7}. For a given scale $r \ge 0$, the complex $VR(P, r)$ contains all simplices $\sigma \subseteq P$ such that the Euclidean distance between any two points in $\sigma$ is at most $2r$. As $r$ increases, simplices are added to the complex, causing new components to merge with older components. \textit{Persistent homology tracks the birth and death of these topological features throughout the filtration.} The inclusion map $K_{r_i} \hookrightarrow K_{r_j}$ for $r_i \le r_j$ induces a homomorphism between the homology groups, $H_k(K_{r_i}) \to H_k(K_{r_j})$. A feature is said to be "born" at a scale $r_{\text{birth}}$ when it first appears and "dies" at a scale $r_{\text{death}}$ when it merges with an older feature visualized by the persistence barcode (\cref{fig:persistant_homology_overview}d).

\begin{definition}[Vietoris-Rips Filtration]\label{alg:vr}
For a point cloud $P \subset \mathbb{R}^n$ and a scale parameter $r \ge 0$, the Vietoris-Rips complex $VR(P, r)$ is the simplicial complex whose vertices are the points in $P$ and whose simplices are all finite subsets of $P$ with a diameter of at most $2r$. A filtration is the nested sequence of complexes $\{VR(P, r)\}_{r \ge 0}$.
\end{definition}

The output of this process is summarized in a \textit{persistence diagram} $\text{Dgm}(P)$, a multiset of points in the plane where each point corresponds to a single topological feature plotted at its $(\text{birth}, \text{death}) \rightarrow (b, d)$ coordinates (see \cref{fig:persistant_homology_overview}e). The \textit{persistence} of a feature is defined as its lifespan, $d - b$. \textit{Points in the diagram that are further from the diagonal line $y=x$ represent robust, structurally significant features of the data}, while points close to the diagonal are interpreted as topological noise with short lifespans. This provides a stable, multi-scale signature of the data's underlying shape.

\begin{definition}[Persistence Diagram]\label{alg:dgm}
Applying the homology functor $H_k(\cdot)$ (for a fixed dimension $k$, e.g., $k=0$ for connected components) to a filtration yields a set of birth-death pairs $(b, d)$ representing the scales at which topological features appear and disappear. This multiset of pairs is the persistence diagram, denoted $\text{Dgm}(P)$. The persistence of a feature $(b, d)$ is defined as $d - b$.
\end{definition}

Please note that for clarity and ease of visualization in this overview section, we present the 1-parameter persistence analysis. It is important to note, however, that our method, \sys, employs a multi-parameter persistence module, which is more complex to visualize but provides a richer description of the data's topology.

\subsection{Background: On the Transferability of Topological vs. Euclidean Features}
\label{Sec:Appendix_euclidean_topology}

Previous literature formally demonstrates the superior transferability of topological features derived from persistent homology compared to conventional Euclidean measurements \citep{beyondeuclid}. The performance gap across disparate network architectures stems from two distinct structural asymmetries: (1) Euclidean centroids are extrinsic vector-space constructions that do not commute with non-linear mappings, whereas persistent homology is computed purely from intrinsic pairwise distances; and (2) persistent homology enjoys a rigorous Lipschitz bound against metric distortion, whereas centroid-based distance rankings have no such guarantee.

\paragraph{Preliminaries and Notation.} Let $X$ be the input data space and $Y = \{1, \dots, K\}$ be the set of $K$ class labels. A neural network architecture is a function $f: X \to \mathbb{R}^n$ mapping input data to an $n$-dimensional embedding space. Let $f_A$ and $f_B$ denote two distinct network architectures (e.g., ResNet18 and ViT-L-16). The outputs of these networks for the dataset $X$ are the point clouds $X_A = f_A(X)$ and $X_B = f_B(X)$. We equip these spaces with the standard Euclidean metric, $d_E$.

\begin{definition}[Bottleneck Distance]
The similarity between two persistence diagrams $\text{Dgm}_1$ and $\text{Dgm}_2$ is measured by the bottleneck distance $d_B(\text{Dgm}_1, \text{Dgm}_2)$, defined as the infimum over all bijections $\eta: \text{Dgm}_1 \to \text{Dgm}_2$ of the supremum of distances between matched points, where $p \in \text{Dgm}$ represents a birth-death pair $p=(b, d)$:
$$d_B(\text{Dgm}_1, \text{Dgm}_2) = \inf_{\eta} \sup_{p \in \text{Dgm}_1} \| p - \eta(p) \|_{\infty}$$
\end{definition}

\begin{definition}[Gromov-Hausdorff Distance]
The distance between two metric spaces $(M_1, d_1)$ and $(M_2, d_2)$ is measured by the Gromov-Hausdorff distance $d_{GH}(M_1, M_2)$, which is the infimum of distances over all possible isometric embeddings into a common metric space. It bounds the maximum metric distortion required to map one space onto another.
\end{definition}

\subsubsection{Structural Fragility of Euclidean Centroids}

The fundamental limitation of Euclidean prototype (centroid) distances is that the centroid is a vector-space construction, not a metric one. For any non-linear transformation $\phi$ between network embedding spaces, the centroid of the mapped points is not equal to the mapped centroid of the original points: $\phi\left(\frac{1}{|X|}\sum x_i\right) \neq \frac{1}{|X|}\sum \phi(x_i)$. Consequently, under non-linear architectural shifts, the target centroid has no structural analog to the source centroid. Furthermore, even under simple affine transformations, distance rankings to the centroid are highly brittle.

\begin{definition}[Class Prototype and Distance Distribution]
For an embedding $f(X)$ and a class $k \in Y$, the class prototype (centroid) is $c_k = \frac{1}{|X_k|} \sum_{x \in X_k} f(x)$. The set of distances to the prototype is $S_k(f) = \{ d_E(f(x), c_k) \mid \text{label}(x)=k \}$. 
\end{definition}

\begin{proposition}[Sensitivity to Anisotropic Distortion]
Let $f_A$ be a network embedding. Consider a new embedding $f_B$ defined by a non-uniform anisotropic scaling transformation, $f_B(x) = \Lambda f_A(x)$, where $\Lambda$ is a diagonal matrix with non-equal strictly positive entries. Under this transformation, the relative Euclidean distance rankings of samples to the class prototype can be inverted.
\end{proposition}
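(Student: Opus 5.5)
The plan is to prove this existential statement with an explicit two-dimensional counterexample, and then note that it embeds in any dimension $n \ge 2$.

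First, record that the centroid commutes with linear maps. For $f_B = \Lambda f_A$ we have $c_k^B = \frac{1}{|X_k|}\sum_{x\in X_k}\Lambda f_A(x) = \Lambda c_k^A$. So after centering, $u_x := f_A(x) - c_k^A$, the distances in the new embedding are
\[
d_E(f_B(x), c_k^B) = \|\Lambda u_x\|_2 = \Bigl(\textstyle\sum_j \lambda_j^2 u_{x,j}^2\Bigr)^{1/2}.
\]
This reduces the problem to comparing the weighted norms $\|\Lambda u\|$ of the centered vectors against their unweighted norms $\|u\|$.

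Next, construct the class. Take $n=2$ and $\Lambda = \mathrm{diag}(\lambda_1,\lambda_2)$ with $\lambda_1 > \lambda_2 > 0$. Choose a class of four points, placed symmetrically so that the centroid is the origin:
\[
f_A(x_1)=(a,0),\quad f_A(x_2)=(-a,0),\quad f_A(x_3)=(0,b),\quad f_A(x_4)=(0,-b), \qquad 0<a<b.
\]
Under $f_A$, the points $x_3,x_4$ are farther from the prototype ($b > a$). Under $f_B$ the distances become $\lambda_1 a$ and $\lambda_2 b$. Any choice with $b/a < \lambda_1/\lambda_2$ therefore gives $\lambda_1 a > \lambda_2 b$, so the ranking is strictly inverted.

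For a fully concrete witness, take $\Lambda=\mathrm{diag}(3,1)$, $a=1$, $b=2$: the distances go from $1<2$ to $3>2$. In higher dimensions, pad with zero coordinates and arbitrary remaining distinct positive diagonal entries. Since the centroid stays at the origin, the argument is unchanged. Because $\Lambda$ has non-equal entries, such $a,b$ always exist, namely any ratio $b/a \in (1, \lambda_{\max}/\lambda_{\min})$. So the inversion is achievable for every admissible $\Lambda$, not just a cherry-picked one. This is slightly stronger than the statement, and worth remarking.

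The only real obstacle is interpretive rather than technical. "Can be inverted" must be read existentially: there exist an embedding and a class for which the ordering flips. Ideally the chosen $\Lambda$ should be arbitrary, which the interval argument above handles.

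I would add a closing remark contrasting this with the isotropic case. When $\Lambda = \lambda I$, all distances scale uniformly and rankings are preserved. Anisotropy is therefore exactly what breaks the ranking. The remark would also stress that the example is not degenerate: small perturbations of the four points preserve the strict inversion, since all the inequalities involved are strict.
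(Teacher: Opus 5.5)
Your proof is correct and takes the same route as the paper, which uses $x_1=(1,0)$, $x_2=(0,1.1)$ and $\Lambda=\mathrm{diag}(2,1)$ to turn the distances $1<1.1$ into $2>1.1$. Your version is more careful and more general in three ways: the symmetric four-point class puts the centroid genuinely at the origin (the paper's two listed samples alone have centroid $(0.5,0.55)$, so its ``without loss of generality'' tacitly relies on other class members); the identity $c_k^B=\Lambda c_k^A$ justifies the paper's bare assertion that the centroid stays at the origin; and the interval $b/a\in(1,\lambda_{\max}/\lambda_{\min})$, with the points placed on the axes carrying $\lambda_{\max}$ and $\lambda_{\min}$, shows that the inversion occurs for every admissible $\Lambda$, not just one chosen example.
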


\begin{proof}
Without loss of generality, let the centroid under $f_A$ be at the origin, $c_k = (0,0)$. Consider two samples $x_1 = (1, 0)$ and $x_2 = (0, 1.1)$. Under $f_A$, $x_1$ is strictly closer to the centroid than $x_2$ because $d_E(x_1, c_k) = 1 < d_E(x_2, c_k) = 1.1$. 

Now apply an anisotropic scaling matrix $\Lambda = \text{diag}(2, 1)$. The new centroid remains at the origin, $c'_k = (0,0)$. The mapped samples are $f_B(x_1) = (2, 0)$ and $f_B(x_2) = (0, 1.1)$. In the new embedding space, $d_E(f_B(x_1), c'_k) = 2$ and $d_E(f_B(x_2), c'_k) = 1.1$. Thus, $d_E(f_B(x_1), c'_k) > d_E(f_B(x_2), c'_k)$, completely inverting the relative importance ranking of the samples.
\end{proof}

\begin{remark}
This illustrates that even arbitrarily small amounts of anisotropic distortion, which are unavoidable when mapping representations across different deep learning architectures, can invert sample distance rankings.
\end{remark}

\subsubsection{Stability Guarantees for Persistent Homology}

Unlike Euclidean centroids, persistent homology is constructed exclusively from intrinsic pairwise distances (e.g., via the Vietoris-Rips filtration). Because it does not rely on averaging coordinates, its transferability across embedding spaces is directly governed by established stability theorems.

\begin{proposition}[Stability of Persistent Homology \citep{chazal2009gromov, ph1}]\label{prop:ph_stability}
Let $X_A$ and $X_B$ be two point clouds representing the same data manifold mapped into distinct embedding spaces. The bottleneck distance between their respective Vietoris-Rips persistence diagrams is bounded by the Gromov-Hausdorff distance between their metric spaces:
$$d_B(\text{Dgm}(X_A), \text{Dgm}(X_B)) \le 2 \cdot d_{GH}((X_A, d_E), (X_B, d_E))$$
\end{proposition}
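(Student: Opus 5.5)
The plan is to derive the bound from the classical route of \citet{chazal2009gromov}: turn a small Gromov--Hausdorff distance into an interleaving of the two Vietoris--Rips filtrations, and then turn the interleaving into a bottleneck bound on the diagrams.

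\textbf{Step 1: from $d_{GH}$ to a correspondence.} We use the standard characterization
$$d_{GH}(X_A,X_B)=\tfrac12\inf_C \operatorname{dis}(C).$$
Here $C\subseteq X_A\times X_B$ ranges over correspondences, meaning relations whose projections onto both factors are surjective. The distortion is
$$\operatorname{dis}(C)=\sup_{(a,b),(a',b')\in C}|d_E(a,a')-d_E(b,b')|.$$
Fix $\epsilon>d_{GH}$ and pick $C$ with $\operatorname{dis}(C)\le 2\epsilon$. Since both clouds are finite, the infimum is attained, so we may take $\epsilon=d_{GH}$ directly.

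\textbf{Step 2: from the correspondence to an interleaving.} Choose maps $\varphi:X_A\to X_B$ and $\psi:X_B\to X_A$ whose graphs lie in $C$. Consider a simplex $\sigma\in VR(X_A,r)$, i.e.\ one of diameter at most $2r$. Its image $\varphi(\sigma)$ has diameter at most $2r+2\epsilon$, so $\varphi(\sigma)\in VR(X_B,r+\epsilon)$. Hence $\varphi$ induces simplicial maps $VR(X_A,r)\to VR(X_B,r+\epsilon)$, and $\psi$ does the same in the other direction.

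The compositions $\psi\circ\varphi$ and the inclusion $VR(X_A,r)\hookrightarrow VR(X_A,r+2\epsilon)$ are contiguous. For a simplex $\sigma$, the set $\sigma\cup\psi\varphi(\sigma)$ has diameter at most $2r+4\epsilon$: pairs $(x,x')$, $(x,\psi\varphi x)$ and $(\psi\varphi x,x')$ all lie in composed correspondences with distortion at most $4\epsilon$. This set is therefore a simplex of $VR(X_A,r+2\epsilon)$.

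Contiguous maps induce equal maps on homology. So after applying $H_k$, the persistence modules $H_k(VR(X_A,\cdot))$ and $H_k(VR(X_B,\cdot))$ are $\epsilon$-interleaved.

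\textbf{Step 3: from the interleaving to the bottleneck bound.} Both modules come from finite complexes, so they are pointwise finite-dimensional and q-tame. The algebraic stability (isometry) theorem then gives $d_B\le\epsilon$ in the $r$-parametrization.

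\textbf{Reconciling the factor $2$.} The statement's constant comes from the scale convention. Definition~\ref{alg:vr} uses diameter $\le 2r$, which yields $d_B\le d_{GH}$. If diagrams are instead recorded in the diameter scale $2r$, the bound becomes $2d_{GH}$. In either convention the stated inequality $d_B\le 2\,d_{GH}$ holds, so I will state which convention is in force and note that the claimed bound is the weaker one.

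\textbf{Main obstacle.} I expect Step 2 to be the delicate part. The maps $\varphi,\psi$ are not inverse to each other and need not be injective, so the interleaving squares commute only up to contiguity. The diameter bookkeeping has to be done carefully, using the triangle-type estimate on composed correspondences to keep the total shift at $2\epsilon$ rather than something larger. Step 3 I would cite rather than reprove.
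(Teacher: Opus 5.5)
Your proof is correct. The paper does not derive this proposition; it imports the bound from Chazal et al.\ (2009) and Cohen-Steiner, Edelsbrunner and Harer (2005). Your chain of steps is the standard interleaving proof of that cited result: optimal correspondence, vertex maps $\varphi,\psi$ inducing shifted simplicial maps, contiguity of $\psi\circ\varphi$ with the inclusion, then algebraic stability. So there is no competing argument in the paper to compare it with.

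The one real divergence is how the constant is explained. The paper's remark attributes the factor $2$ to Vietoris--Rips approximating the \v{C}ech complex, whereas you attribute it to the scale convention, and you are right. Under the paper's own definition (simplices of diameter at most $2r$), your maps shift the parameter by $\epsilon = d_{GH}$. You therefore obtain $d_B \le d_{GH}$, and the stated inequality holds with a factor of $2$ to spare. The $2$ appears only when the filtration is indexed by diameter, and no Rips--\v{C}ech comparison enters the argument.

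The Step~2 bookkeeping you flagged is cleaner as a direct estimate than via ``composed correspondences.'' From $(\psi\varphi x,\varphi x)\in C$ and $(x',\varphi x')\in C$ you get $d_E(\psi\varphi x, x') \le d_E(\varphi x,\varphi x') + 2\epsilon \le d_E(x,x') + 4\epsilon$. The same reasoning bounds $d_E(\psi\varphi x,\psi\varphi x')$.
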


\begin{remark}
This provides a Lipschitz-type bound against metric distortion. The factor of $2$ explicitly arises from the use of the Vietoris-Rips complex (which relies strictly on pairwise distances) as an approximation of the \v Cech complex. While structural distortions across varying model capacities (measured by $d_{GH}$) will inevitably perturb the persistence diagrams, this bounded behavior ensures that the fundamental topological features of the dataset (and thus their resulting importance scores) degrade gracefully.
\end{remark}

\subsection{Topological Connectivity and the Geometry of Sample Memorization}
\label{Sec:Appendix_topology_memorization}

\begin{figure}[!tp]
    \vspace{-\baselineskip}
    \centering
    \includegraphics[width=0.83\textwidth]{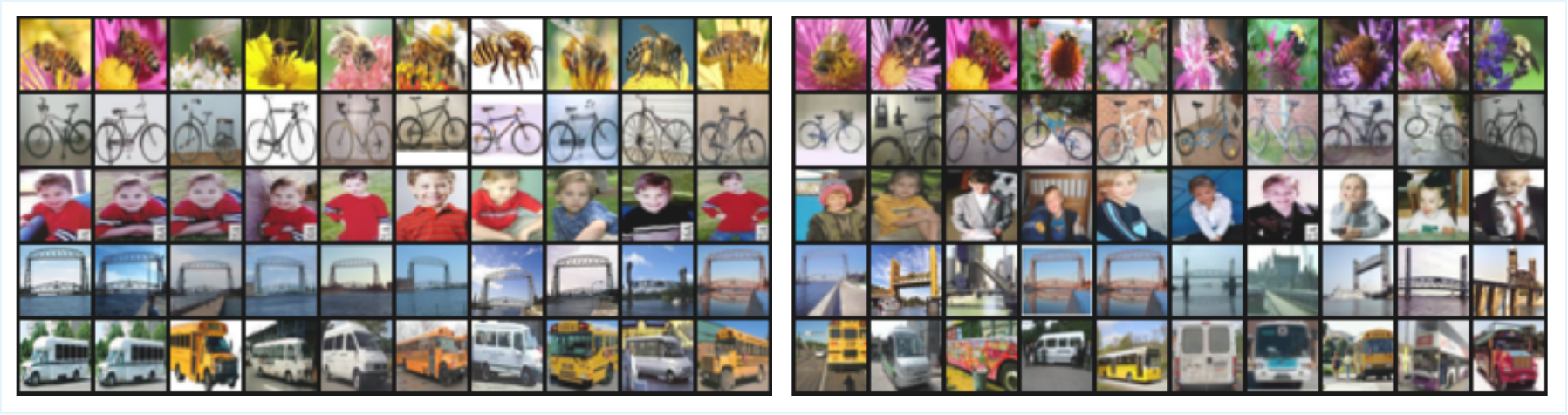}
    \caption{\textbf{Prototypical Samples:} Top-10 lowest curvature samples (left) vs. highest density samples (right) of the same class, for five CIFAR-100 classes.}
    \label{fig:mem_prototypical}
\end{figure}

\begin{figure}[!tp]
    \centering
    \includegraphics[width=0.83\textwidth]{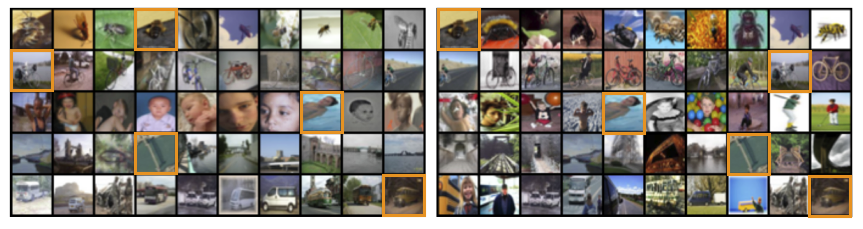}
    \caption{\textbf{Atypical Samples:} Top-10 highest curvature samples (left) vs. lowest density samples (right) of the same class, for five CIFAR-100 classes. We highlight a few examples of matched samples for each class in orange.}
    \label{fig:mem_atypical}
\end{figure}

Grounded in established coreset frameworks \citep{scaling2, moderateds}, we utilize the intra-class manifold density as a global signal for sample prototypicality (where dense regions represent common samples and sparse regions represent rare, atypical samples). To qualitatively illustrate this, we compare our density metric against sample memorization \citep{mem1, mem2}, measured via input curvature \citep{mem3}. We observe a general correspondence: \textit{high-density} samples consistently exhibit \textit{low input curvature} (un-memorized), while \textit{low-density} samples show \textit{high input curvature} (memorized).

While intended as a qualitative illustration, both metrics successfully identify similar prototypical subsets (e.g., orange highlights in \cref{fig:mem_atypical}). However, divergent selections arise because input curvature relies on the Hessian of the loss, making it highly sensitive to training dynamics and intrinsic biases. As depicted in \cref{fig:mem_prototypical}, it can capture spurious correlations such as ``boys with red shirts'' as the least memorized. In contrast, our density score utilizes frozen, pre-trained embeddings. This training-agnostic approach yields a robust structural prior for our fine-grained persistence optimization.

\section{Implementation \& Computational Analysis}
\subsection{Comparative Analysis of Manifold Projection Techniques}
\label{Sec:Appendix_manifold_projection_techniques}

Our global manifold projection is critical for achieving metric stability across diverse neural network architectures. While high-dimensional embeddings vary drastically in their extrinsic geometry and absolute cluster densities, they share a common intrinsic topology. UMAP leverages this shared structure to construct a low-dimensional manifold that explicitly normalizes the data distribution, effectively abstracting away architecture-specific scaling and artifacts. This standardization ensures that the global density score, a core component of our sample importance calculation, is a stable and reliable metric regardless of the source network.

\subsubsection{Evaluation of Standard Linear and Non-linear Manifold Approximations}
To further elaborate the standardization of topology-based manifold approximation and projection across perturbations in the embedding space we look at correlation (\cref{fig:manifold_projection_ablation}) of per-sample distance to prototypes across different manifold projection and feature reduction techniques (a) PCA \citep{pca}, (b) t-SNE \citep{tsne} (c) PaCMAP \citep{pacmap}, (d) DensMAP \citep{densmap}, and (e) UMAP \citep{umap}. We see that the topology-based methods PaCMAP, DensMAP and UMAP demonstrate significantly higher correlation and thus better transferability across architectures compared to linear PCA or the more locally-focused t-SNE. Notably, UMAP exhibits strong transferability, outperforming PaCMAP. Although DensMAP yields a marginal +0.02 improvement in average correlation, UMAP provides more than sufficient structural fidelity for our framework. This high correlation between smaller (e.g., ResNet-18) and larger models is particularly valuable, as it confirms that computationally inexpensive networks can reliably generate embeddings to guide data selection for much larger models.

\begin{figure}[!tbp]
    \centering
    
    \begin{subfigure}[b]{0.32\textwidth}
        \centering
        \includegraphics[width=\linewidth]{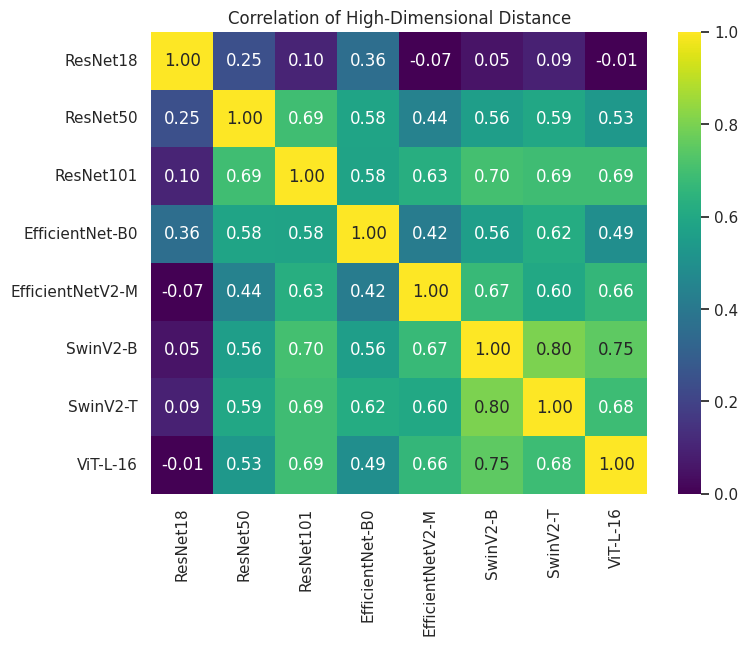}
        \caption{Euclidean Distance to Prototype}
        \label{fig:corr_pca}
    \end{subfigure}
    \begin{subfigure}[b]{0.32\textwidth}
        \centering
        \includegraphics[width=\linewidth]{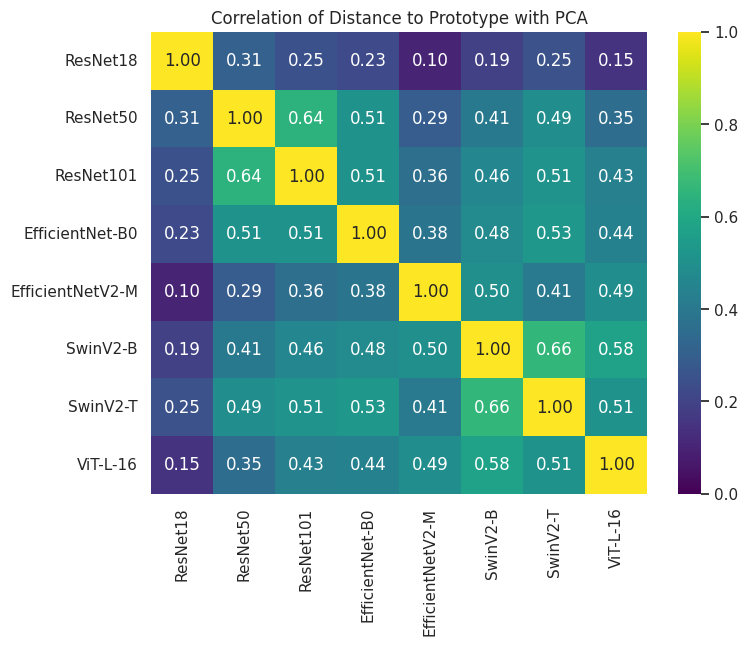}
        \caption{PCA}
        \label{fig:corr_pca}
    \end{subfigure}
    \begin{subfigure}[b]{0.32\textwidth}
        \centering
        \includegraphics[width=\linewidth]{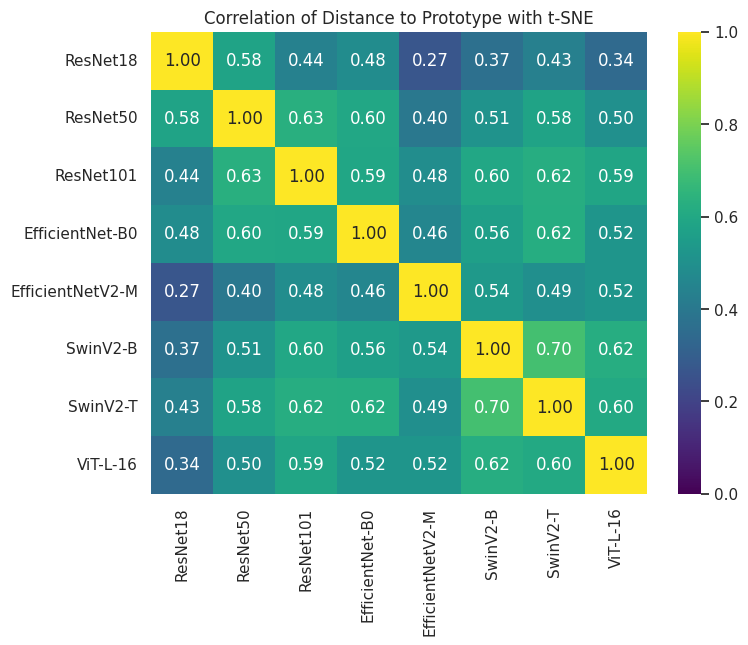}
        \caption{t-SNE}
        \label{fig:corr_tsne}
    \end{subfigure}
    \begin{subfigure}[b]{0.32\textwidth}
        \centering
        \includegraphics[width=\linewidth]{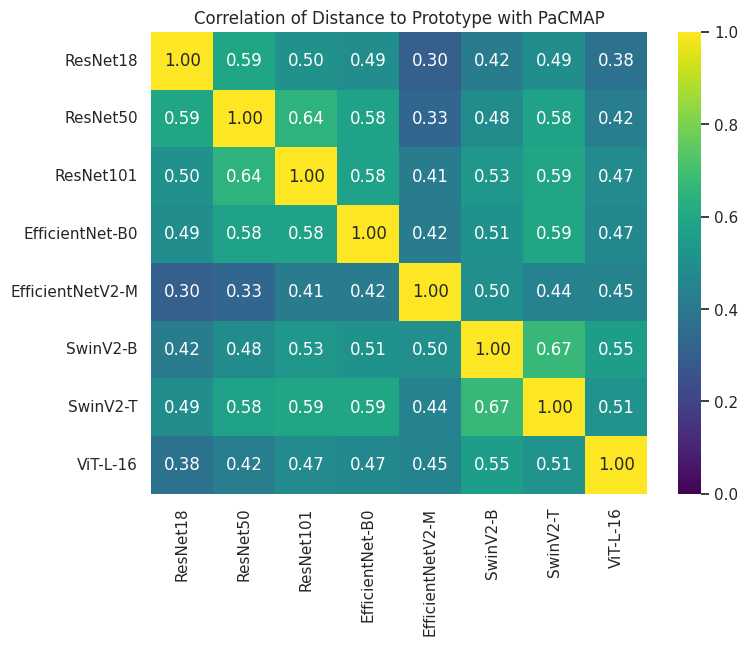}
        \caption{PaCMAP}
        \label{fig:corr_pacmap}
    \end{subfigure}
    \begin{subfigure}[b]{0.32\textwidth}
        \centering
        \includegraphics[width=\linewidth]{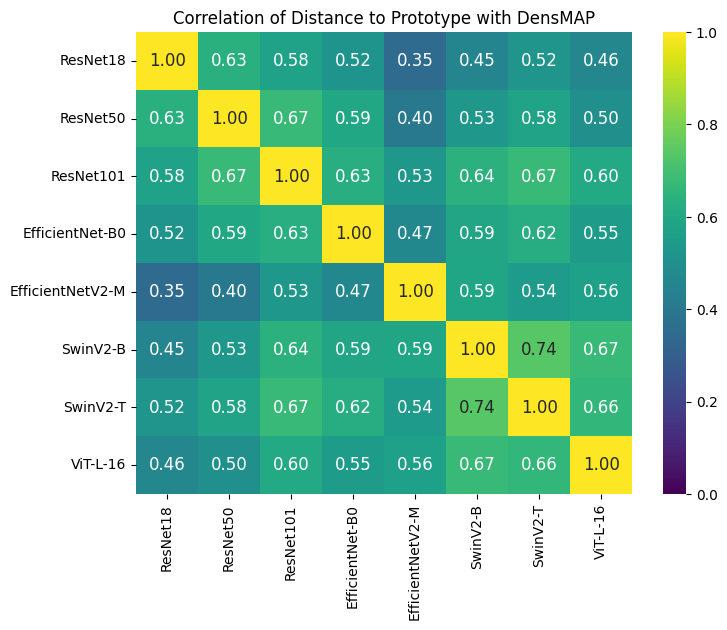}
        \caption{DensMAP}
        \label{fig:corr_umap}
    \end{subfigure}
    \begin{subfigure}[b]{0.32\textwidth}
        \centering
        \includegraphics[width=\linewidth]{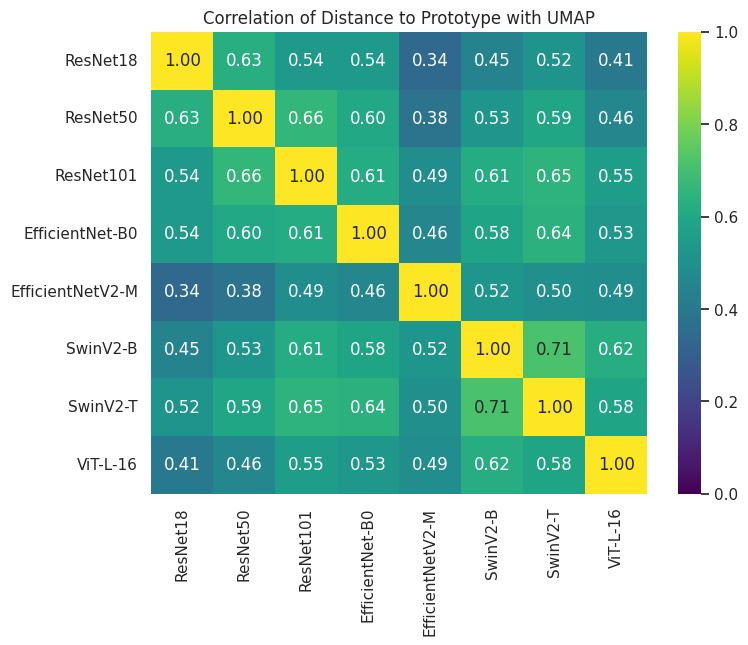}
        \caption{UMAP}
        \label{fig:corr_umap}
    \end{subfigure}
    
    \caption{Correlation of per-sample distance to prototype across different architectures when applying different linear and non-linear manifold projection techniques.}
    \label{fig:manifold_projection_ablation}
\end{figure}

\subsubsection{Evaluation of Deep Topological Autoencoders}
\label{sec:appendix_topo_ae}
We also considered Topological Auto-Encoders (TopoAE) \citep{topoae} and Representation Topological Divergence (RTD) \citep{rtd} for our manifold embedding. While these methods are powerful for preserving global topology, we chose UMAP for computational efficiency, domain suitability, and alignment with coreset selection goals.

\paragraph{Computational cost and empirical validation.} \sys aims to be a lightweight, training-free method applicable to frozen features, a requirement UMAP fits perfectly. In contrast, training a topological autoencoder is computationally prohibitive for preprocessing; training RTD on CIFAR-100 takes $\sim$8 hours compared to UMAP's $\sim$22 seconds, representing a speedup of $> 1000 \times$ (see \cref{tab:topo_ae_latency}). To empirically validate our choice, we trained both TopoAE and RTD models on CIFAR-10/100 and used their embeddings as a replacement for UMAP in our pipeline. As shown in \cref{tab:topo_ae_acc}, UMAP consistently yields superior or comparable accuracy without this massive training overhead.

\paragraph{Domain suitability and structural alignment.} The observed drop in coreset accuracy when using topological autoencoders likely stems from a fundamental misalignment of objectives. First, regarding latent space quality, TopoAE struggles to produce clean, separated representations for complex datasets. As noted in the TopoAE paper itself, CIFAR-10 is ``challenging to embed... in a purely unsupervised manner'' (Section 5.2.2 in \citet{topoae}), often resulting in latent spaces where classes are homogeneously mixed rather than cleanly separated (see Figure 4 in \citet{topoae}). This lack of separation severely hampers the effectiveness of our per-class density estimation, contrasting with the distinct cluster delineation achieved by UMAP. Furthermore, this issue is exacerbated because methods like TopoAE and RTD prioritize preserving global structural similarity (e.g., maintaining relative distances between distinct mammoth ``head'' and ``foot'' clusters as shown in Figure 1 in \citet{rtd}). While this global constraint is valuable for visualization, it is less relevant for coreset selection, where we partition the data into class-based manifolds.

\begin{table}[!tp]
    \vspace{-\baselineskip}
    \caption{\textbf{Throughput and accuracy when using Topological autoencoders.} \textbf{(a)} Topological autoencoders require costly training vs. UMAP's algorithmic projection. \textbf{(b)} Substituting UMAP with TopoAE or RTD for the global manifold embedding shows performance degradation.}
    \vspace{1em}
    \label{tab:topo_ae}
    
    \centering
    \begin{subtable}[b]{0.47\textwidth}
        \centering

        {
        \begin{tabular}{lcc}\toprule
        \multicolumn{1}{c}{\textbf{Method}} & \textbf{CIFAR-10} & \textbf{CIFAR-100} \\
        \midrule
        TopoAE \citep{topoae} & 14,847.79 & 15,248.11 \\
        RTD \citep{rtd} & 26,622.89 & 28,661.29 \\
        \textbf{UMAP} \citep{umap} & \textbf{22.42} & \textbf{22.74} \\
        \bottomrule
        \end{tabular}
        }
        
        \caption{Latency (s)}
        \label{tab:topo_ae_latency}
    \end{subtable}
    \hspace{0.02\textwidth}
    \begin{subtable}[b]{0.47\textwidth}
        \centering

        {
        \begin{tabular}{lcc}\toprule
        \multicolumn{1}{c}{\textbf{Method}} & \textbf{CIFAR-10} & \textbf{CIFAR-100} \\
        \midrule
        TopoAE \citep{topoae} & 75.0$\pm$0.3 & 41.2$\pm$1.0 \\
        RTD \citep{rtd} & \underline{78.0$\pm$1.7} & \textbf{46.4$\pm$0.4} \\
        \textbf{UMAP} \citep{umap} & \textbf{82.1$\pm$0.3} & \underline{45.8$\pm$0.7} \\
        \bottomrule
        \end{tabular}
        }
        
        \caption{Accuracy at 90\% Pruning Rate (Avg. over 3 runs)}
        \label{tab:topo_ae_acc}
    \end{subtable}
    \vspace{-2em}

\end{table}

In summary, while topological autoencoders are robust tools for manifold learning, UMAP provides a more efficient, domain-appropriate, and higher-performing foundation for our specific multi-scale framework. We encourage future work exploring topological autoencoders that flexibly balance global and local structural priorities, specifically optimized for the task of point-cloud sparsification.

\subsection{Computational Complexity Analysis}
\label{Sec:appendix_complexity_analysis}

\paragraph{Theoretical Complexity Relative to Geometric Baselines.} Our analysis follows the framework for evaluation of coreset selection complexity in \citet{nagaraj2025coresets}. The core computational overhead of \sys stems from the local topological optimization. However, we maintain tractability by leveraging efficient reductions of multi-parameter persistence:

\begin{itemize}
    \item Crucially, we utilize the Hilbert decomposition signed measure, which \textit{reduces the multi-parameter problem to one-parameter persistence slices along a grid}. As detailed in Appendix D.1 of \citet{ph14}, for a 2-parameter filtration (Delaunay + Density) on a grid of size $m$, the algorithm performs $m$ runs of a 1-parameter persistence optimization.
    \item Consequently, the persistent homology optimization cost is equivalent to that of a 1-parameter optimization on $N_c$ points. While the theoretical worst-case for persistence is cubic \citep{ph15}, in the 1-parameter persistence case the computation is empirically linear \citep{ph16}.
    \item Furthermore, computing the gradient of the loss $\mathcal{L}_{\text{pers}}$ (\cref{eq:lpers}) simplifies to summing feature persistences (see Corollary E.2 in \citet{dif_ph1}). This operation is bounded by a constant $K$ derived from the simplicial complex $\mathcal{K}$, making the backward pass $\mathcal{O}(K)$. Thus, a single local optimization cost is strictly $\mathcal{O}(m \cdot N_c \log N_c)$.
\end{itemize}

By applying this efficient reduction strategy per-class (where $N_c$ is relatively small), \sys scales far more efficiently in high-dimensional spaces than current geometric baselines. For instance, while D2 \cite{d2} reports a graph construction complexity of $\mathcal{O}(Nkd)$, this accounts only for edge weighting. Their official implementation relies on exact $k$-NN search without approximation (\texttt{sklearn.neighbors.kneighbors\_graph}) and dense matrix allocation (\texttt{todense()}), which fundamentally scale unfeasibly at $\mathcal{O}(N^2 d)$ in time and $\mathcal{O}(N^2)$ in space for high-dimensional data. In contrast, \sys avoids this geometric memory bottleneck entirely by utilizing approximate nearest neighbor descent via UMAP, scaling efficiently at $\mathcal{O}(N \log N)$.

\begin{table}[!tp]
\vspace{-2em}
\caption{\textbf{Complexity analysis of geometry-based methods.} $N$ is the dataset size with $C$ classes, $N_c \approx N/C$ samples per class, $d$ dimension, and $k$ neighbors. For \sys, cost is dominated by $T$ optimization steps and $m$ grid resolution.}
\vspace{0.5em}
\centering
\resizebox{\textwidth}{!}
{
\begin{tabular}{lll}\toprule
& \textbf{Computational Complexity} & \textbf{Explanation} \\

\midrule
Moderate \citep{moderateds} & $\mathcal{O}(Nd) + C*\mathcal{O}(N_c \log N_c)$ & Distance calc. ($Nd$) + Prototype sorting \\
D2 \citep{d2} & $\mathcal{O}(Nkd) + \mathcal{O}(T \cdot Nk)$ & kNN graph ($Nkd$) + Message passing \\
\textbf{\sys} & $\mathcal{O}(N \log N) + C * \mathcal{O}(Tm \cdot N_c \log N_c)$ & Global UMAP + Local persistence \\

\bottomrule
\end{tabular}

}
\label{table:complexity}
\end{table}

\begin{table}[!tp]
\caption{\textbf{Latency and utilization} when performing selection on CIFAR-100. To ensure a fair comparison between single and multi-threaded implementations, latency is normalized by CPU utilization.}
\centering
{
\begin{tabular}{lcccc}\toprule
& \textbf{Global (s)} & \textbf{Local (s)} & \textbf{Total (s)} & \textbf{Max CPU Util.} \\

\midrule
D2 & - & - & 86.05 & 82\% \\
\textbf{\sys} (Vietoris-Rips Complex) & 22.74 & 280.78 & 303.52 & 16\% \\
\textbf{\sys} (Delaunay Complex) & 22.74 & 146.16 & 168.90 & 16\% \\

\bottomrule
\end{tabular}

}
\label{table:wall_clock}
\end{table}

\paragraph{Empirical Wall-Clock Analysis.} We benchmarked the latency and resource utilization of \sys against baselines on an AMD EPYC 7502 (32-Core) CPU. Despite exhibiting higher absolute latency, our profiling reveals that \sys significantly under-utilizes available hardware (16\% CPU utilization vs. 82\% for baselines). This indicates an implementation-specific bottleneck rather than a fundamental algorithmic flaw; our topological backend (\texttt{multipers}) currently lacks support for multi-processing and GPU acceleration. As topological software infrastructure matures to support parallelization, we expect this wall-clock gap to close significantly, offering the superior stability of topological methods with negligible latency trade-offs. Still, \sys offers a positive cost-benefit for high-stakes data selection: 
\begin{itemize}
    \item \sys is orders of magnitude faster than ``training-dynamic'' coreset methods (e.g., Glister, Forgetting), which require training a proxy model from scratch (hours of compute) compared to our topological probe of frozen embeddings (minutes). 
    \item Unlike fast geometric heuristics (e.g., D2, Moderate) which suffer from high variance and lower precision (as shown in \cref{tab:acc_std}), \sys accepts a marginally higher upfront computational cost to guarantee a precise, high-fidelity coreset. This structural reliability eliminates the need for repeated, redundant selection runs to mitigate randomness, ultimately yielding a highly efficient and trustworthy pipeline.
\end{itemize}

\newpage
\subsection{\sys Pseudocode}
\label{Sec:Appendix_pseudocode}
\begin{algorithm}[!h]
   \caption{Coreset Selection with \texttt{TopoPrune}}
   \label{alg:topoprune}
\begin{algorithmic}
    \STATE {\bfseries Input:} 
    Dataset $\mathcal{D} = \{(x_i, y_i)\}_{i=1}^N$, Penultimate Layer Encoder $h_\theta$, Pruning Rate $p$, Mislabel Ratio $\gamma$, Weights $\alpha, \beta$, Persistence Optimization Steps $T$
    \STATE {\bfseries Callables:} \texttt{UMAP}, \texttt{KDE}, \texttt{KNN} \hfill $\rhd$ see Hyperparameters in \cref{table:hyperparameters}
    \STATE {\bfseries Output:} Selected Indices $\mathcal{S}$
    \newline
    
    \STATE \hrulefill
    \STATE \textit{\# Dual-Scale Topological Scoring}
    \STATE \hrulefill
    \STATE Extract embeddings: $Z \leftarrow h_\theta(X)$
    \STATE Global manifold projection: $Y \leftarrow \texttt{UMAP}(Z)$
    \STATE Initialize score vectors $S_{pers}, S_{dens} \leftarrow \mathbf{0}^N$
    
    \FOR{each class $c \in \{1, \dots, C\}$}
        \STATE Get indices $\mathcal{I}_c \leftarrow \{i \mid y_i = c\}$ and subset $Y_c \leftarrow Y[\mathcal{I}_c]$
        \STATE $S_{dens}[\mathcal{I}_c] \leftarrow \texttt{KDE}(Y_c)$ \hfill $\rhd$ Global Density Score
        
        \STATE Initialize optimizable inputs $Y'_c \leftarrow Y_c$
        \FOR{$t=1$ {\bfseries to} $T$}
            \STATE Compute Hilbert decomposition signed measure: $ \scriptstyle \mu_{H_1(Del_{Y'_c}, \hat{f})}^{Hil}$
            \STATE Compute topological loss: $\mathcal{L}_{\text{pers}}(Y'_c) \leftarrow \text{OT}(\mu_{H_1(Del_{Y'_c}, \hat{f})}^{Hil}, \mathbf{0})$
            \STATE Update coordinates: $Y'_c \leftarrow Y'_c + \eta \nabla_{Y'_c} \mathcal{L}_{pers}$
        \ENDFOR
        \STATE $S_{pers}[i] \leftarrow \|\mathbf{y}_i - \mathbf{y}'_i\|_2 \quad \forall i \in \mathcal{I}_c$ \hfill $\rhd$ Local Persistence Score
    \ENDFOR
    \newline
    \STATE \hrulefill
    \STATE \textit{\# Mislabel Detection and Filtering}
    \STATE \hrulefill
    \IF{method is NLPS}
        \FOR{each sample $i \in \{1, \dots, N\}$}
            \STATE Compute $k$ nearest neighbors: $\mathcal{N}_k(z_i) \leftarrow \texttt{KNN}(Z)$
            \STATE Count mismatched neighbors: $m_i \leftarrow \sum_{j \in \mathcal{N}_k(z_i)} \mathbb{I}(y_j \neq y_i)$
            \STATE Compute label purity ratio: $S_{mis}^{(i)} \leftarrow m_i / k$ \hfill $\rhd$ Training-Free
        \ENDFOR
    \ELSIF{method is AUM}
        \STATE Grab precomputed score: $S_{mis} \leftarrow \text{AUM}(Z)$ \hfill $\rhd$ with Training Dynamics
    \ENDIF
    \STATE Identify mislabeled sample indices: $\mathcal{I}_{mis} \leftarrow \text{TopK}(S_{mis}, \gamma \cdot N)$
    \STATE Define clean candidate set: $\mathcal{D}_{clean} \leftarrow \mathcal{D} \setminus \mathcal{I}_{mis}$
    \newline
    \STATE \hrulefill
    \STATE \textit{\# Stratified Sampling on Unified Score}
    \STATE \hrulefill
    \STATE $S_{unified}^{(i)} \leftarrow \alpha \cdot \mathcal{N} (S_{pers}^{(i)}) + \beta \cdot \mathcal{N}(S_{dens}^{(i)})$ \hfill $\rhd$ Min-max normalization $\mathcal{N}$
    \STATE $\mathcal{S} \leftarrow \text{StratifiedSample}(\mathcal{D}_{clean}, S_{unified}, p)$ \hfill $\rhd$ see Algorithm 1 in \citet{ccs}
    
    \STATE \textbf{return} $\mathcal{S}$
\end{algorithmic}
\end{algorithm}

\newpage
\section{Ablations \& Component Analyses}

\subsection{Interplay of Local Persistence ($\alpha$) and Global Density ($\beta$)}
\label{Sec:appendix_density_persistence_ablation}

As discussed in \cref{sec:comprehensive_score}, relying exclusively on either global density or local persistence is insufficient for optimal coreset selection at high compression rates. In this section, we provide the extended empirical analysis validating the orthogonal nature of these metrics and demonstrating why their unified formulation (\cref{eq:toposcore}) is critical for model optimization.

\subsubsection{Multi-Objective Formulation and Metric Independence}
\label{sec:appendix_pearson}

Formally, our unified metric functions as a weighted sum-scalarization \citep{sum_scalar0, sum_scalar1} of two complementary objectives: global representativeness (density) and local structural informativeness (persistence). For this scalarization to yield a non-trivial Pareto front, the underlying objectives must be non-redundant. To empirically verify this independence, we conducted a joint distribution analysis by extracting latent representations from 8 diverse network architectures evaluated on CIFAR-100. For each architecture, we computed both scores for all training samples and calculated the Pearson correlation coefficient ($r$). As detailed in \cref{tab:distribution_corr}, the linear correlation is consistently minimal across all architectures, yielding an average of $r = 0.102$. This near-zero correlation quantitatively confirms that global manifold density and local persistence do not redundantly encode the same feature space properties. Consequently, combining them with strictly positive weights ($\alpha, \beta > 0$) guarantees a Pareto optimal coreset selection that strictly outperforms the weakly Pareto optimal endpoints of using either metric in isolation.

\begin{table}[h!]
\caption{Pearson correlation ($r$) between density and persistence scores.}
\centering
\resizebox{0.95\linewidth}{!}
{
\begin{tabular}{lccccccccc}\toprule
& RN-18 & RN-50 & RN-101 & EffNet-B0 & EffNetV2-M & SwinV2-T & SwinV2-B & ViT-L-16 & \textbf{Avg.} \\

\midrule
$r$ & 0.117 & 0.106 & 0.076 & 0.102 & 0.110 & 0.140 & 0.104 & 0.061 & \textbf{0.102} \\

\bottomrule
\end{tabular}

}
\label{tab:distribution_corr}
\end{table}

\begin{figure}[!bp]
    \centering
    \begin{subfigure}[b]{0.24\textwidth}
        \centering
        \includegraphics[width=\textwidth]{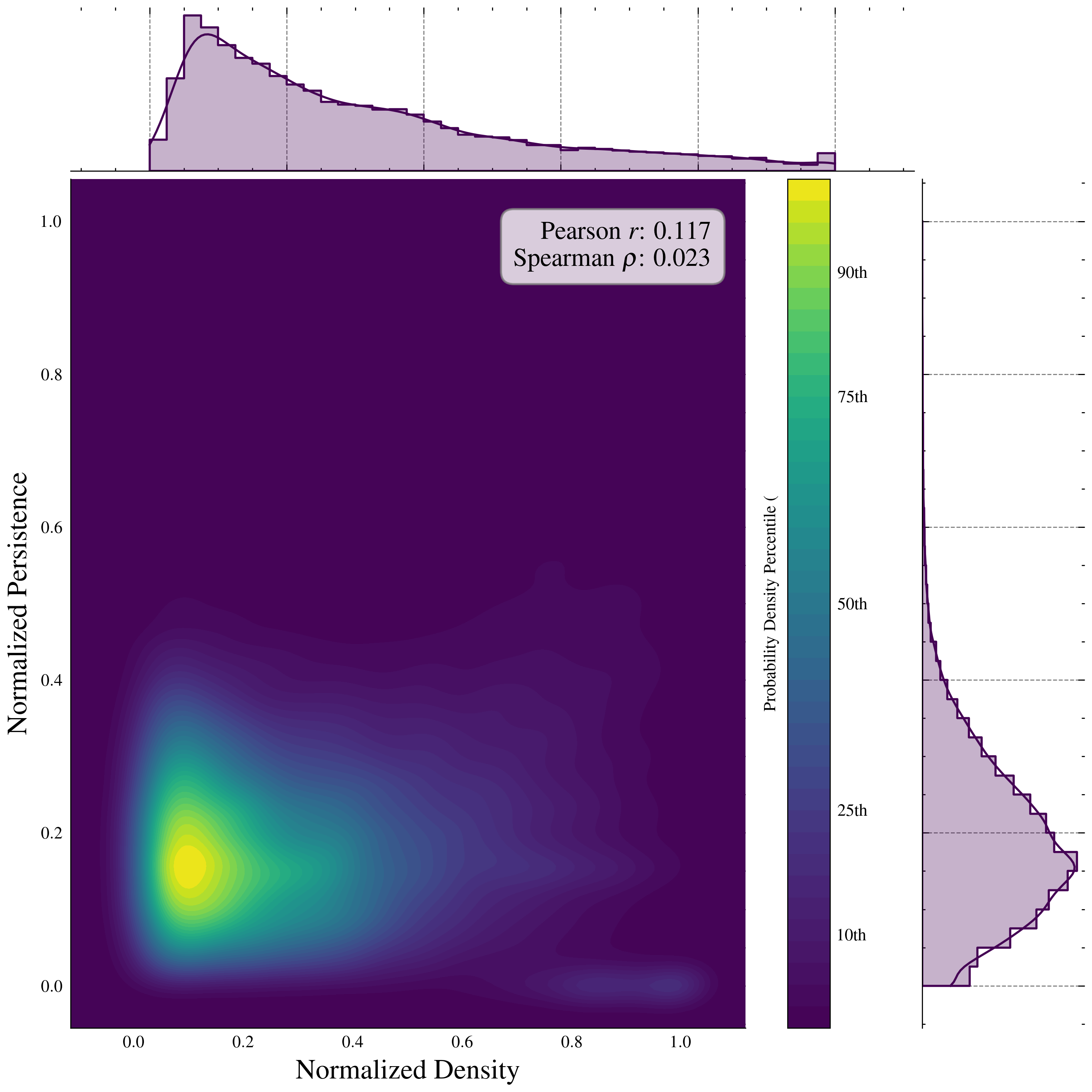}
        \caption{ResNet-18}
    \end{subfigure}
    \begin{subfigure}[b]{0.24\textwidth}
        \centering
        \includegraphics[width=\textwidth]{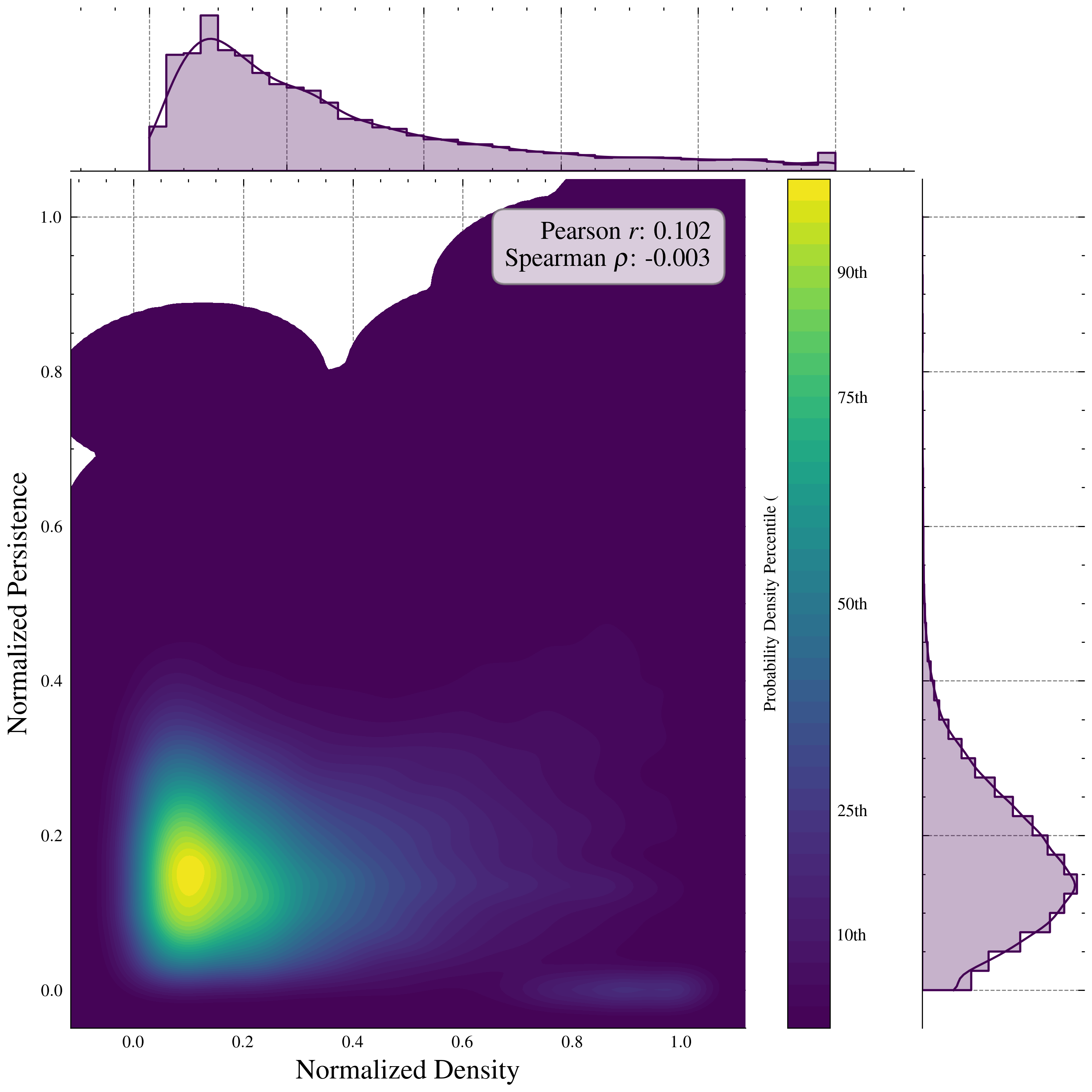}
        \caption{EfficientNet-B0}
    \end{subfigure}
    \begin{subfigure}[b]{0.24\textwidth}
        \centering
        \includegraphics[width=\textwidth]{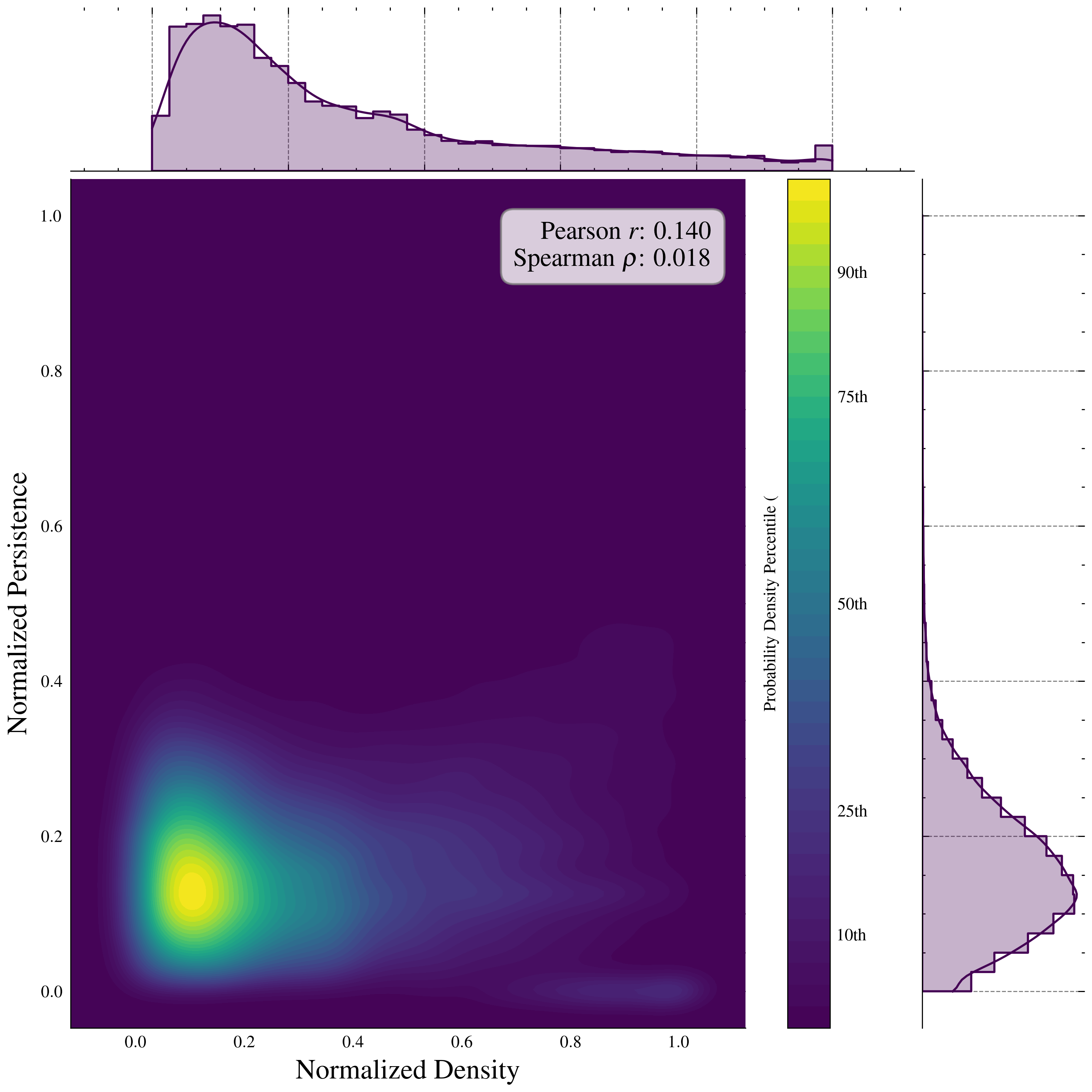}
        \caption{SwinV2-T}
    \end{subfigure}
    \begin{subfigure}[b]{0.24\textwidth}
        \centering
        \includegraphics[width=\textwidth]{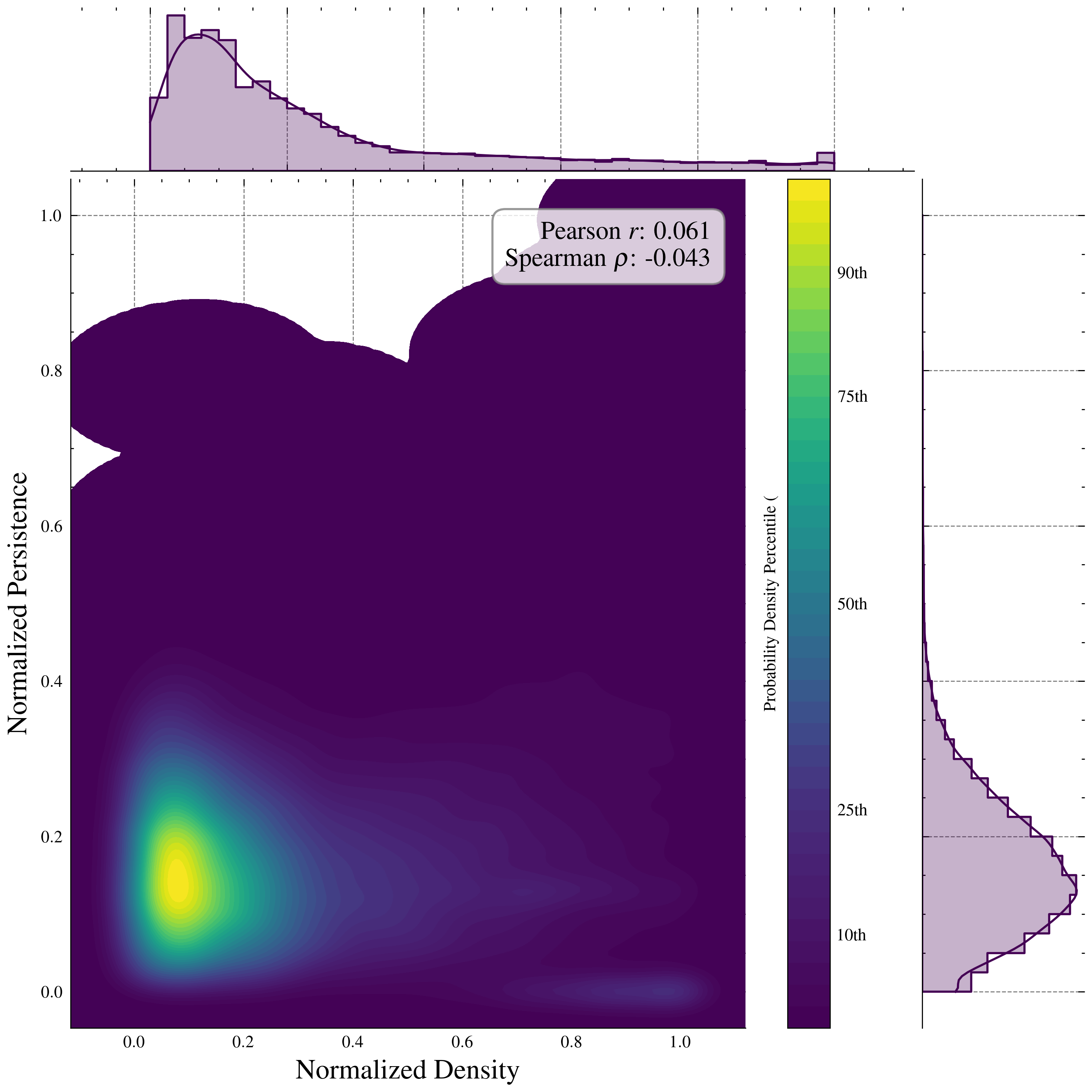}
        \caption{ViT-L-16}
    \end{subfigure}

    \caption{Joint probability distributions of persistence and density scores across networks. As the joint distributions are similar across architectures (all long tailed) we provide visualizations for just one architecture per network family.}
\end{figure}

\subsubsection{Qualitative Visualization across Networks}
\label{sec:appendix_visualizations}
To illustrate the importance of fine-grained local neighborhood structure, we visualize the global density (KDE), local persistence, and our unified score of a single target class. This qualitative comparison demonstrates how combining these metrics effectively resolves the redundancy problem inherent among densely co-located samples.
\begin{itemize}[itemsep=1mm, parsep=0pt, topsep=0mm]
    \item \textit{Density (\cref{fig:viz_kde_label14}):} Exhibits smooth, continuous gradients. Samples situated tightly together in the dense core of the cluster share nearly identical scores. If a coreset algorithm samples purely based on high density, it will select highly redundant points, starving the model of boundary information.
    \item \textit{Persistence (\cref{fig:viz_pers_label14}):} Exhibits a highly localized, non-continuous distribution. Within the high-density core, persistence highlights specific structural "anchors" while assigning low scores to immediately adjacent, redundant neighbors.
    \item \textit{Unified Score (\cref{fig:viz_unified_label14}):} Effectively overlays the fine-grained structural map onto the global distributional map. 
\end{itemize}

\begin{figure}[!tbp]
    \centering
    \begin{subfigure}[b]{0.2\textwidth}
        \centering
        \includegraphics[width=\textwidth]{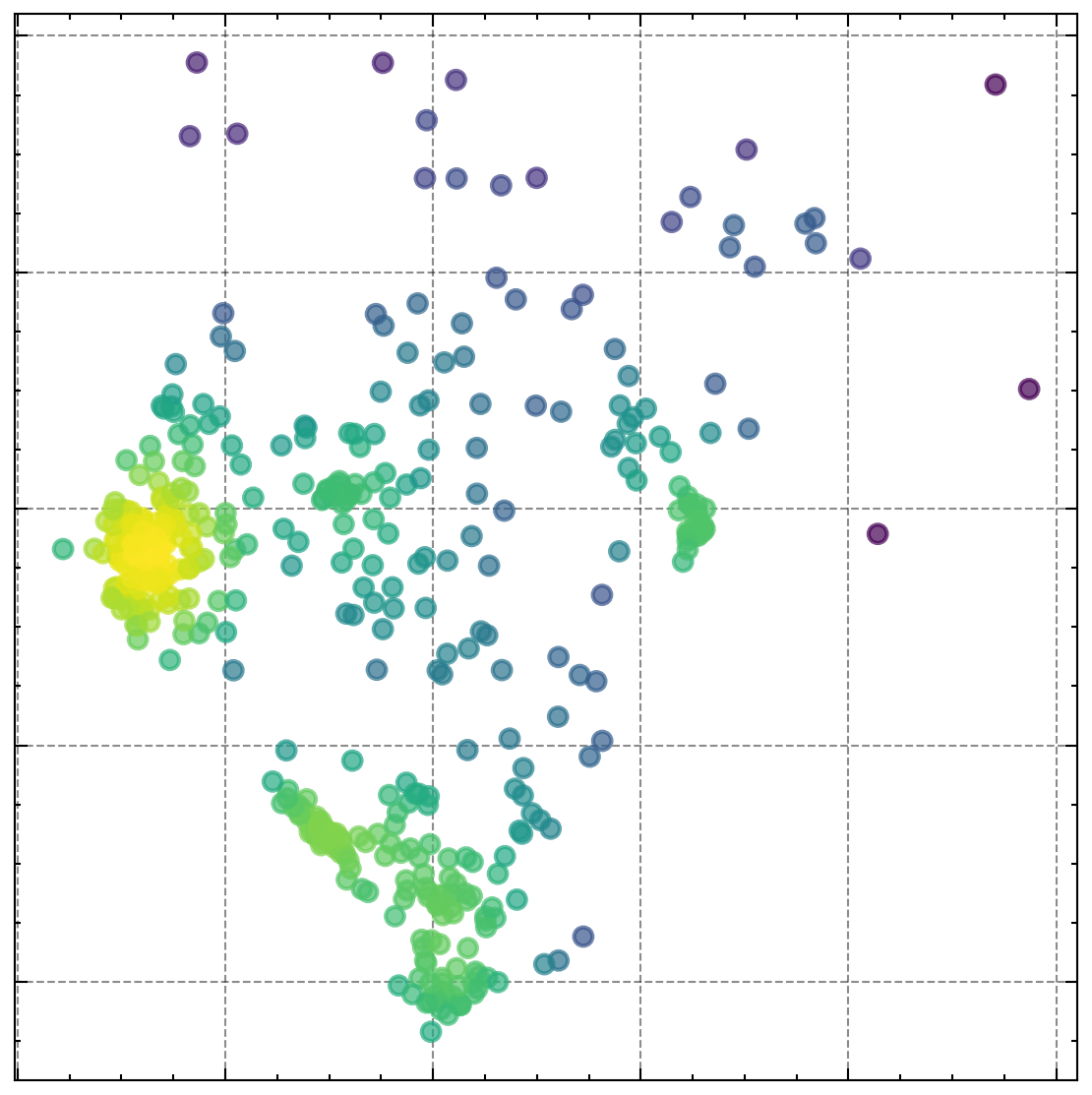}
        \caption{ResNet-18}
    \end{subfigure}
    \hspace{0.01\textwidth}
    \begin{subfigure}[b]{0.2\textwidth}
        \centering
        \includegraphics[width=\textwidth]{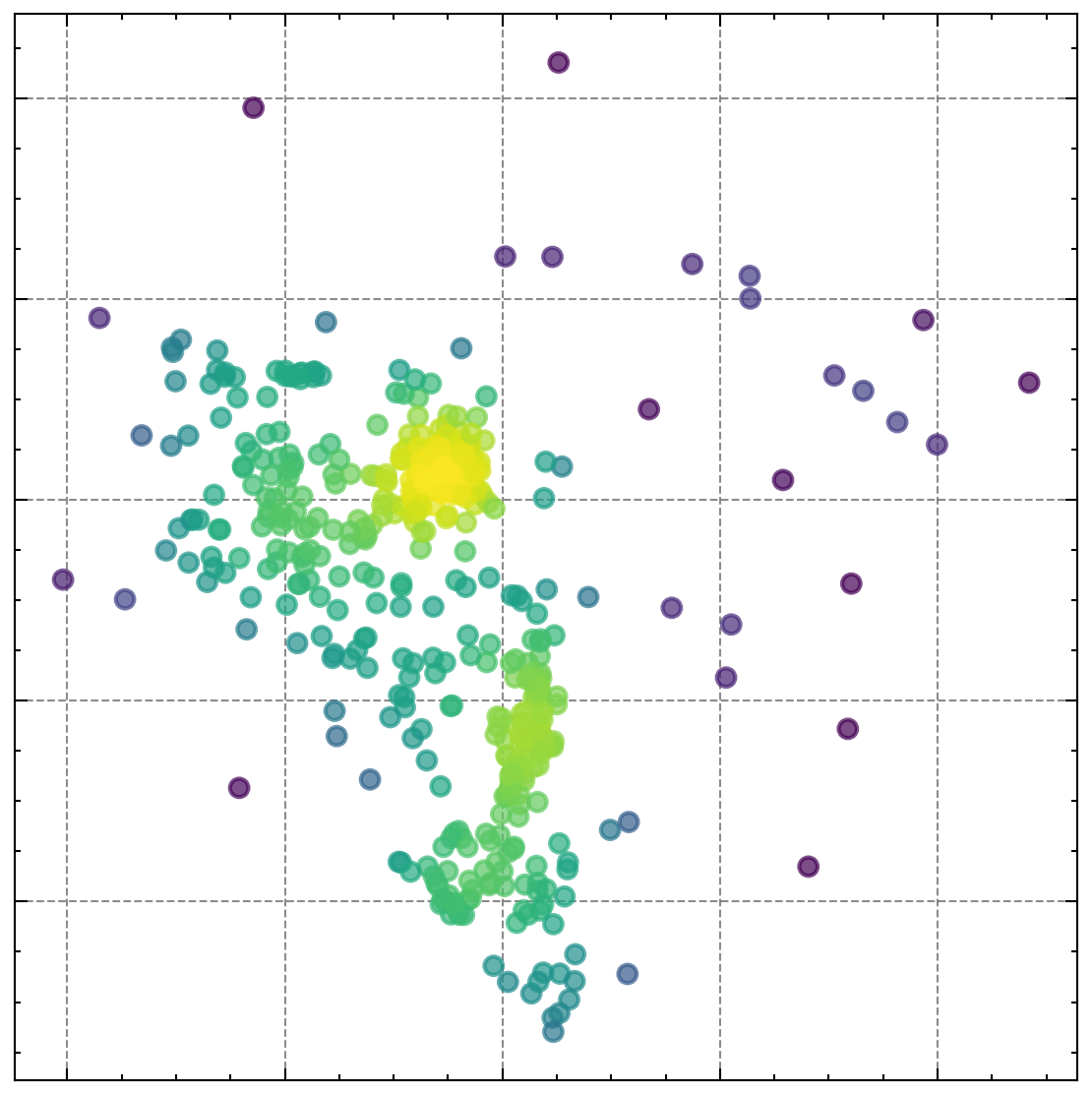}
        \caption{ResNet-50}
    \end{subfigure}
    \hspace{0.01\textwidth}
    \begin{subfigure}[b]{0.2\textwidth}
        \centering
        \includegraphics[width=\textwidth]{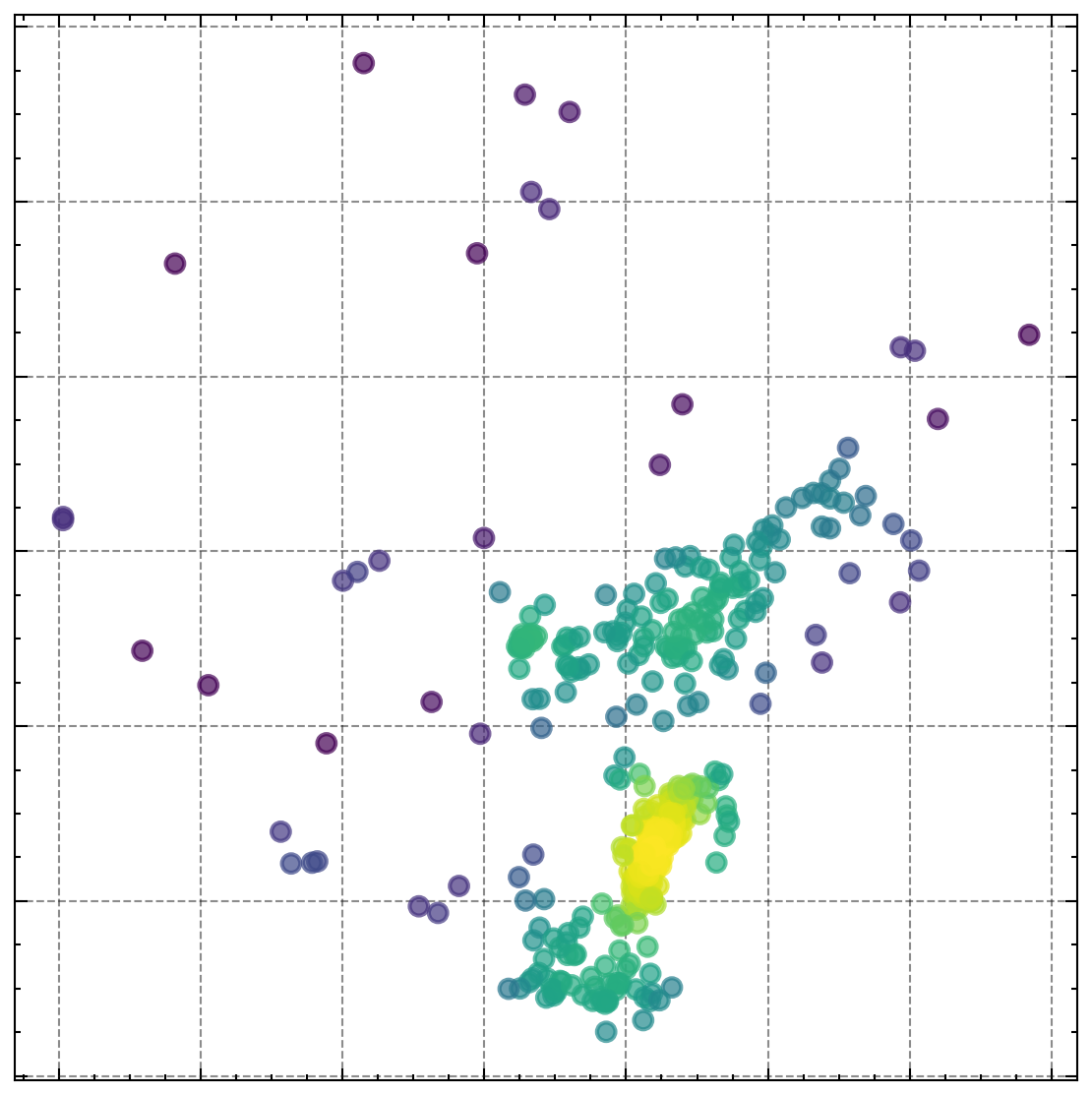}
        \caption{ResNet-101}
    \end{subfigure}
    \hspace{0.01\textwidth}
    \begin{subfigure}[b]{0.235\textwidth}
        \centering
        \includegraphics[width=\textwidth]{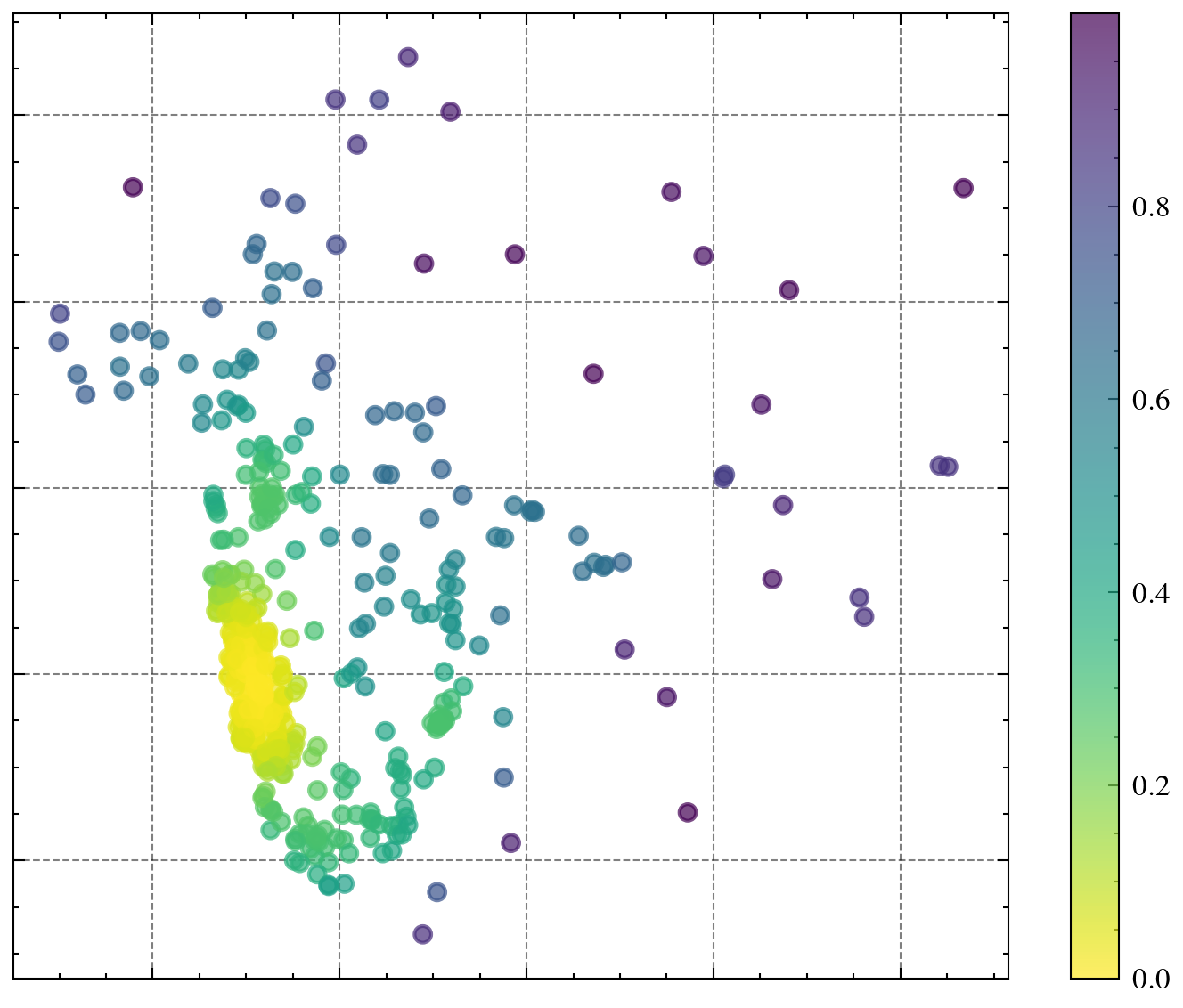}
        \caption{EfficientNet-B0}
    \end{subfigure}

    \vspace{0.01\textwidth}
    \begin{subfigure}[b]{0.2\textwidth}
        \centering
        \includegraphics[width=\textwidth]{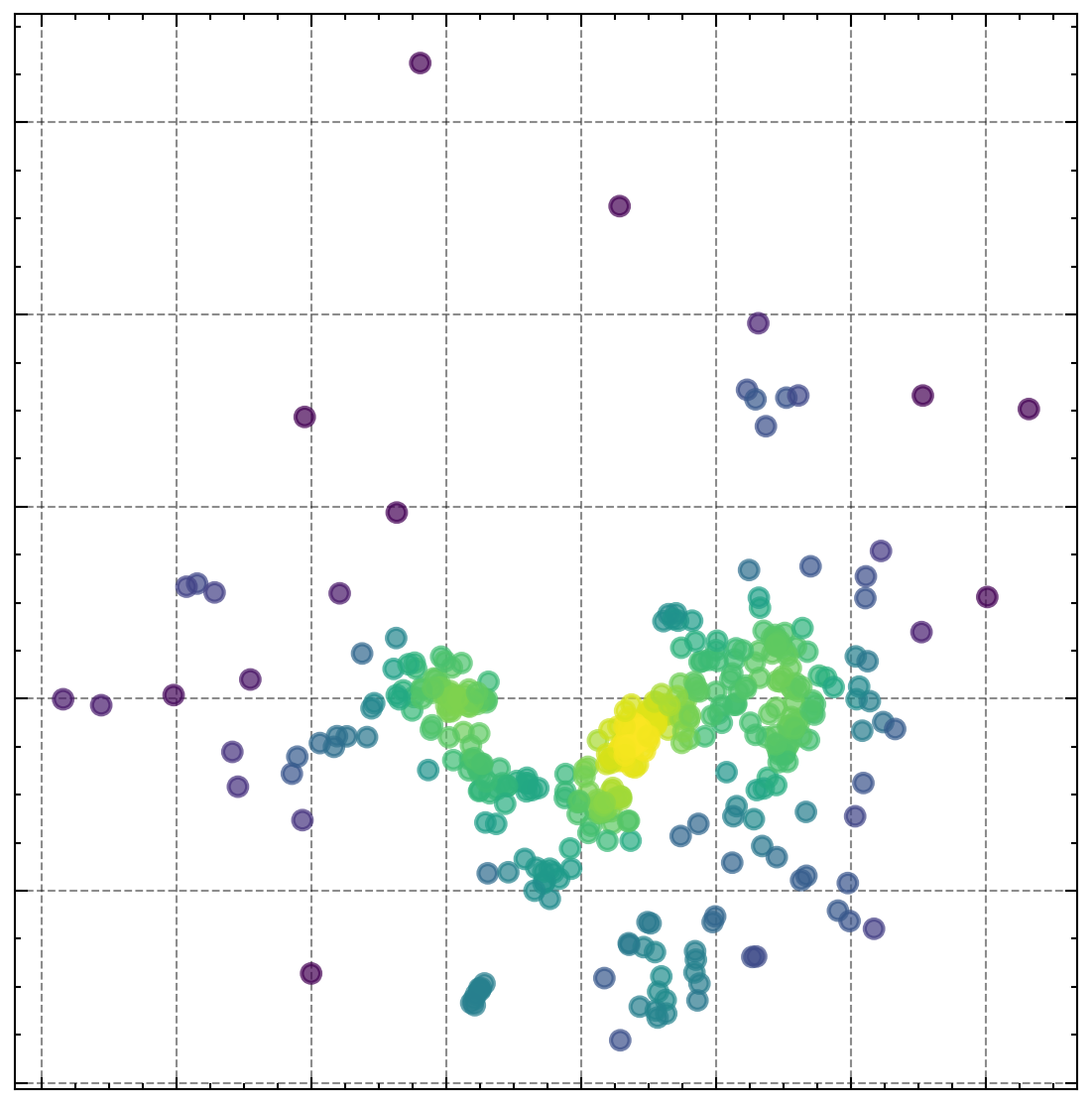}
        \caption{EffNetV2-M}
    \end{subfigure}
    \hspace{0.01\textwidth}
    \begin{subfigure}[b]{0.2\textwidth}
        \centering
        \includegraphics[width=\textwidth]{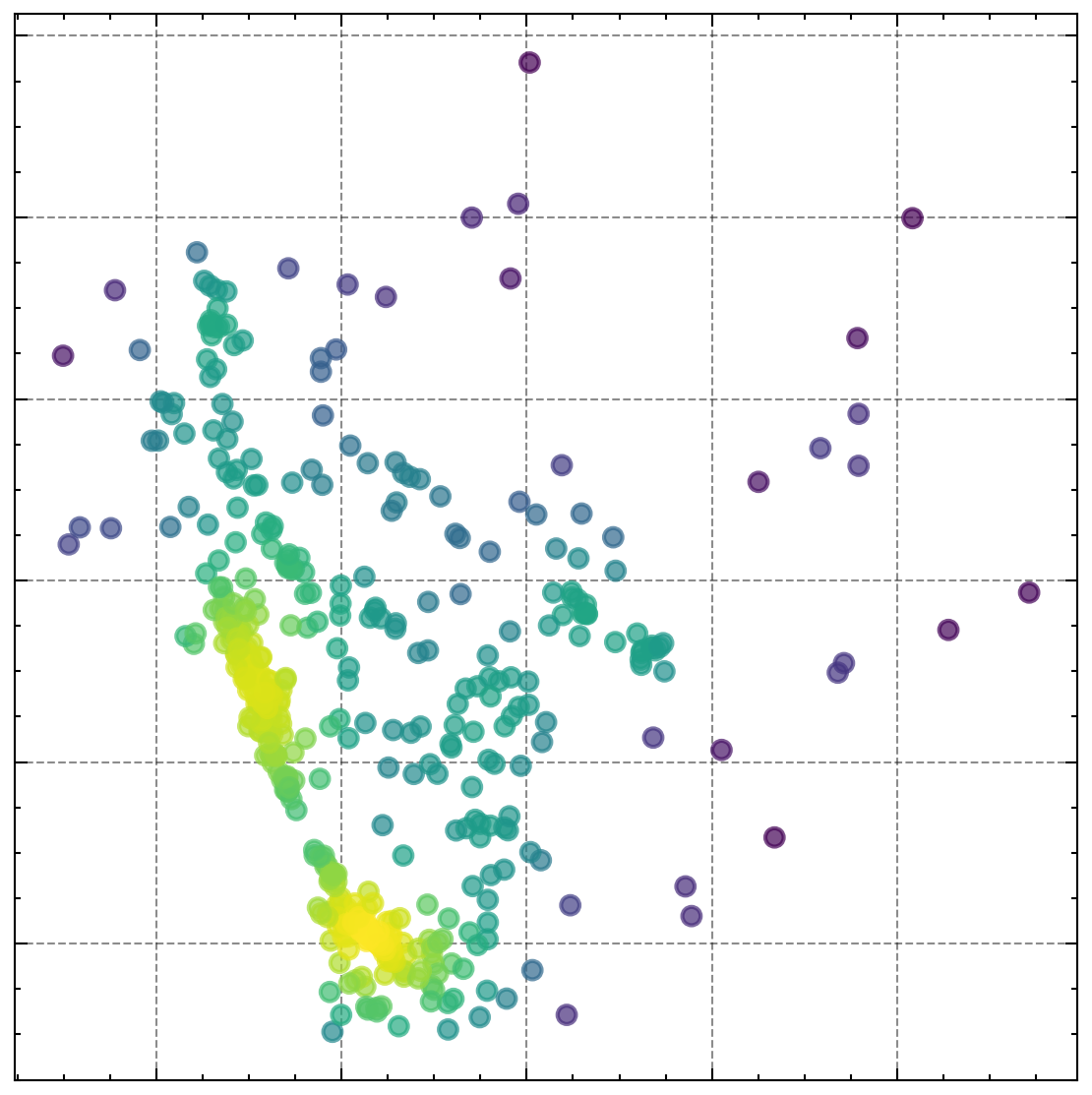}
        \caption{SwinV2-T}
    \end{subfigure}
    \hspace{0.01\textwidth}
    \begin{subfigure}[b]{0.2\textwidth}
        \centering
        \includegraphics[width=\textwidth]{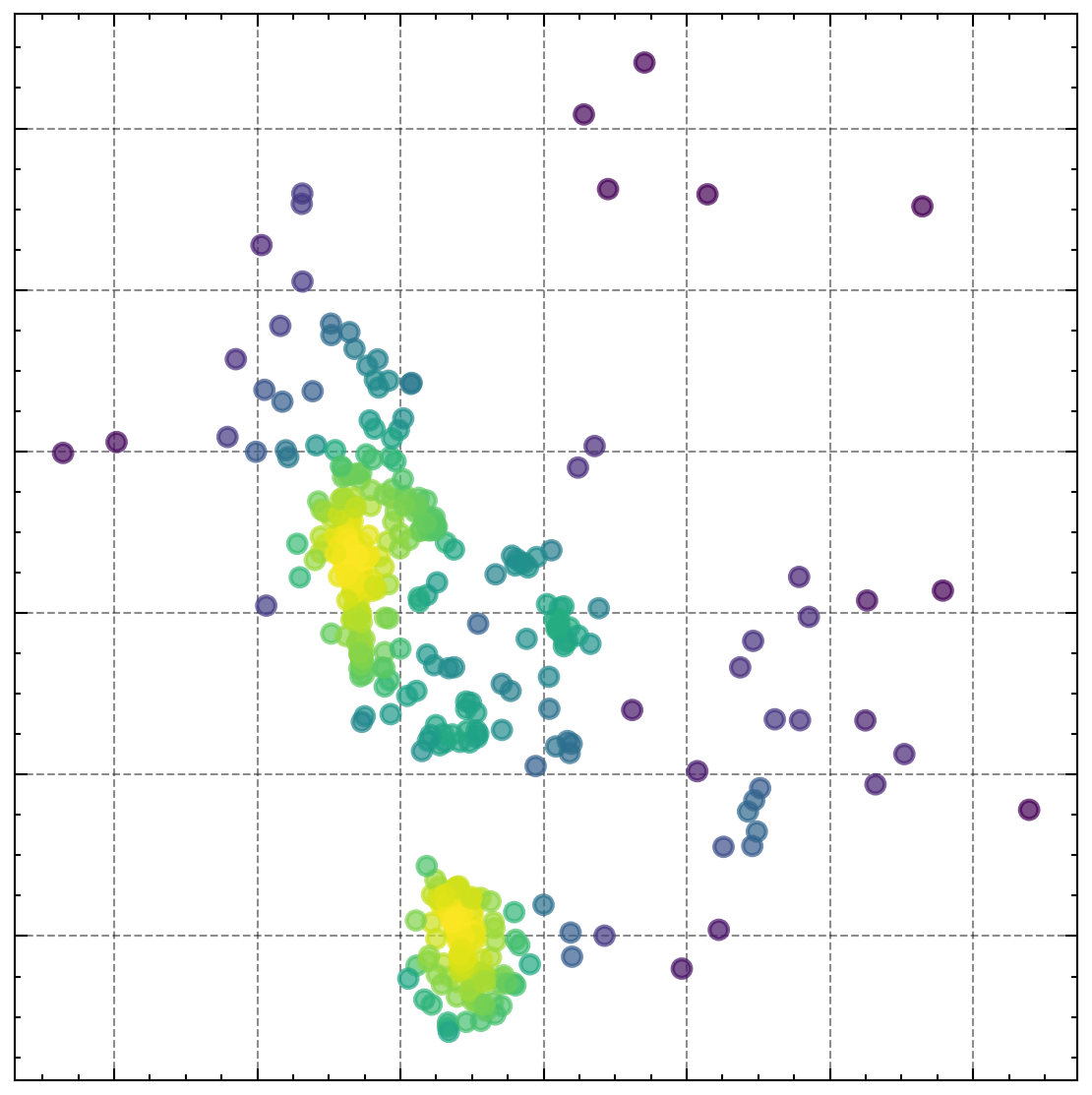}
        \caption{SwinV2-B}
    \end{subfigure}
    \hspace{0.01\textwidth}
    \begin{subfigure}[b]{0.2\textwidth}
        \centering
        \includegraphics[width=\textwidth]{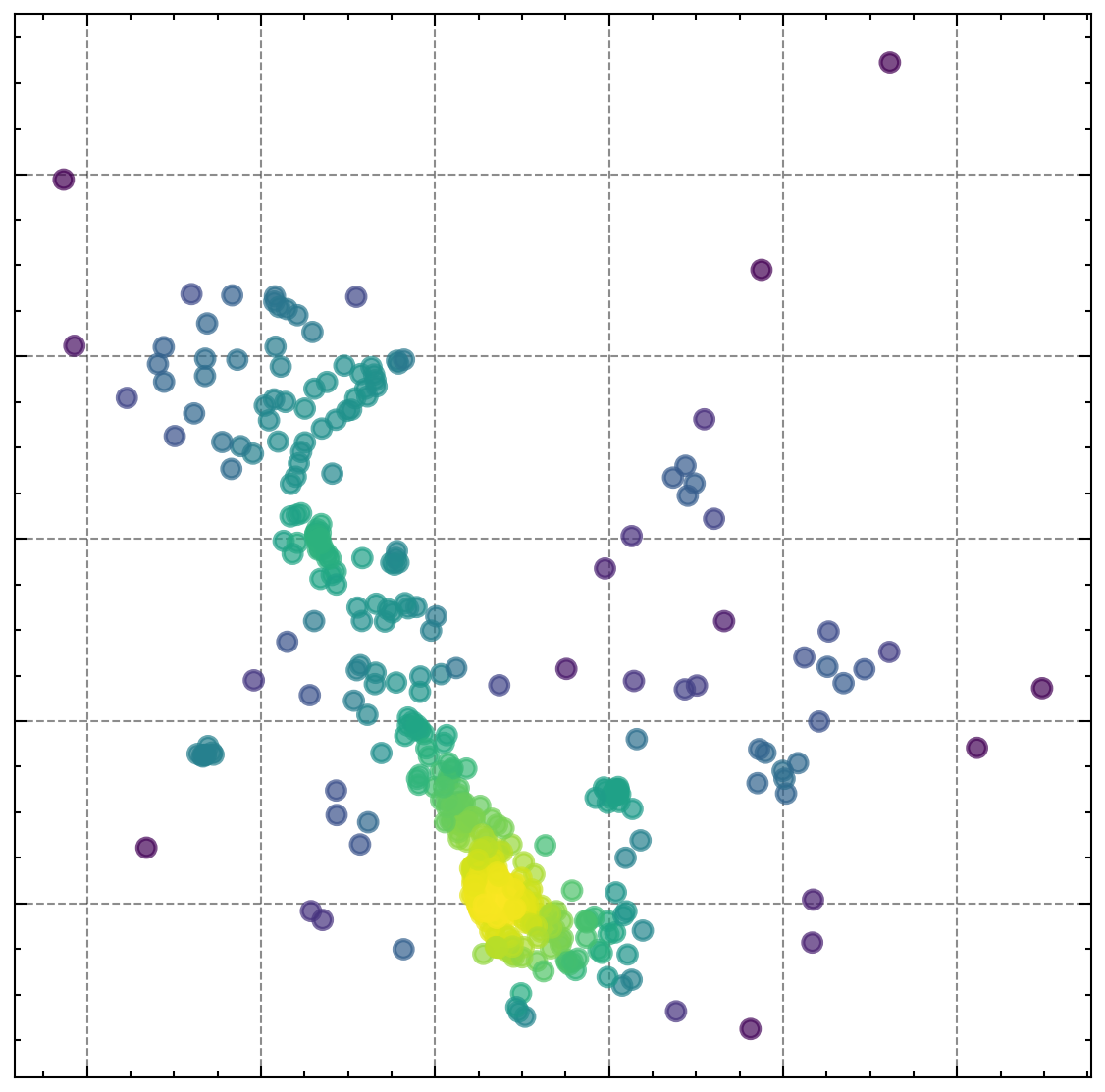}
        \caption{ViT-L-16}
    \end{subfigure}
    
    \caption{$\text{Score}_{\text{dens}}$ on Cifar-100 (butterfly class; label 14). Score [0, 1]: (0: Yellow indicated high density, 1: blue indicated low density).}
    \label{fig:viz_kde_label14}
\end{figure}

\begin{figure}[!tbp]
    \centering
    \begin{subfigure}[b]{0.2\textwidth}
        \centering
        \includegraphics[width=\textwidth]{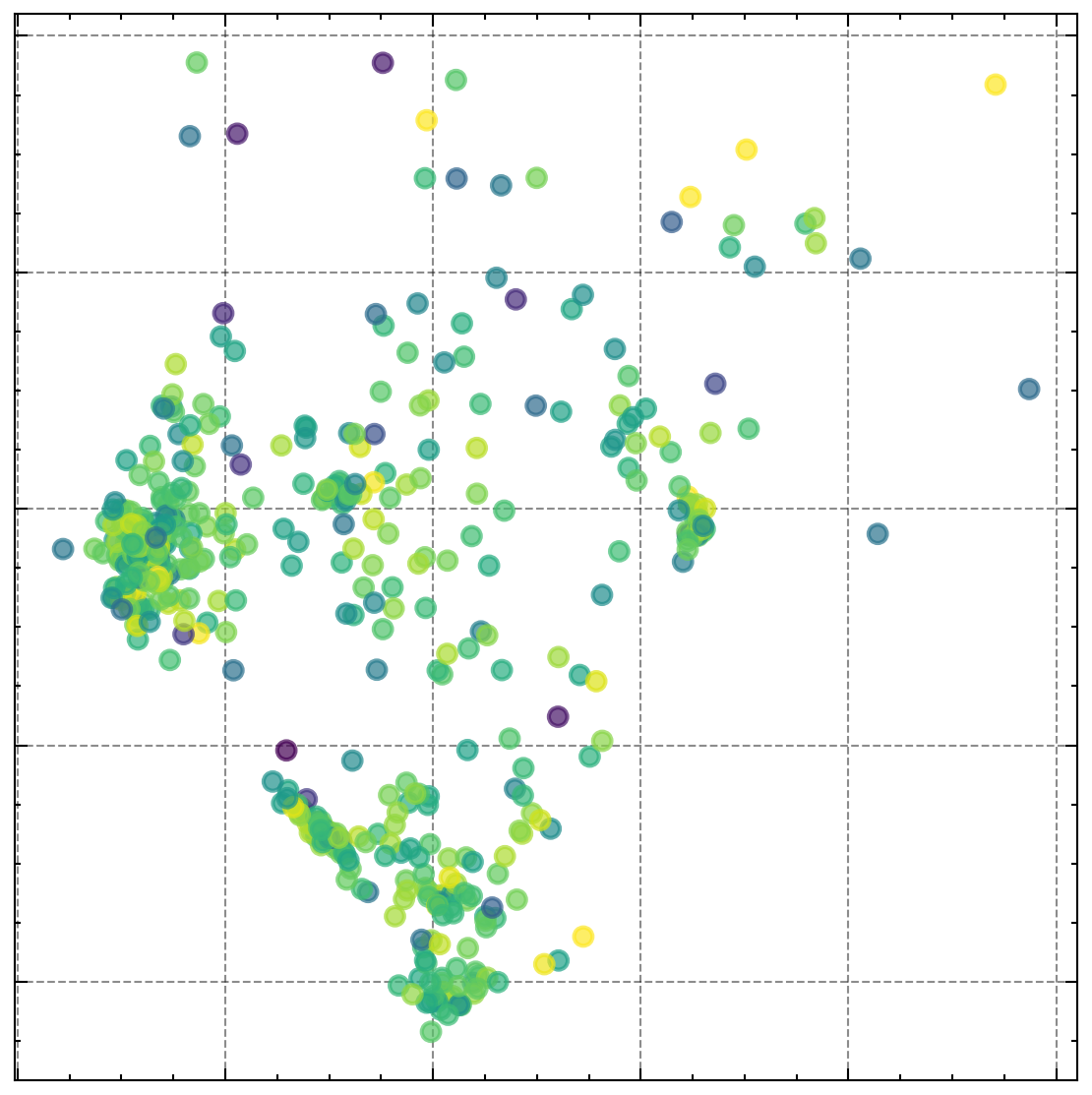}
        \caption{ResNet-18}
    \end{subfigure}
    \hspace{0.01\textwidth}
    \begin{subfigure}[b]{0.2\textwidth}
        \centering
        \includegraphics[width=\textwidth]{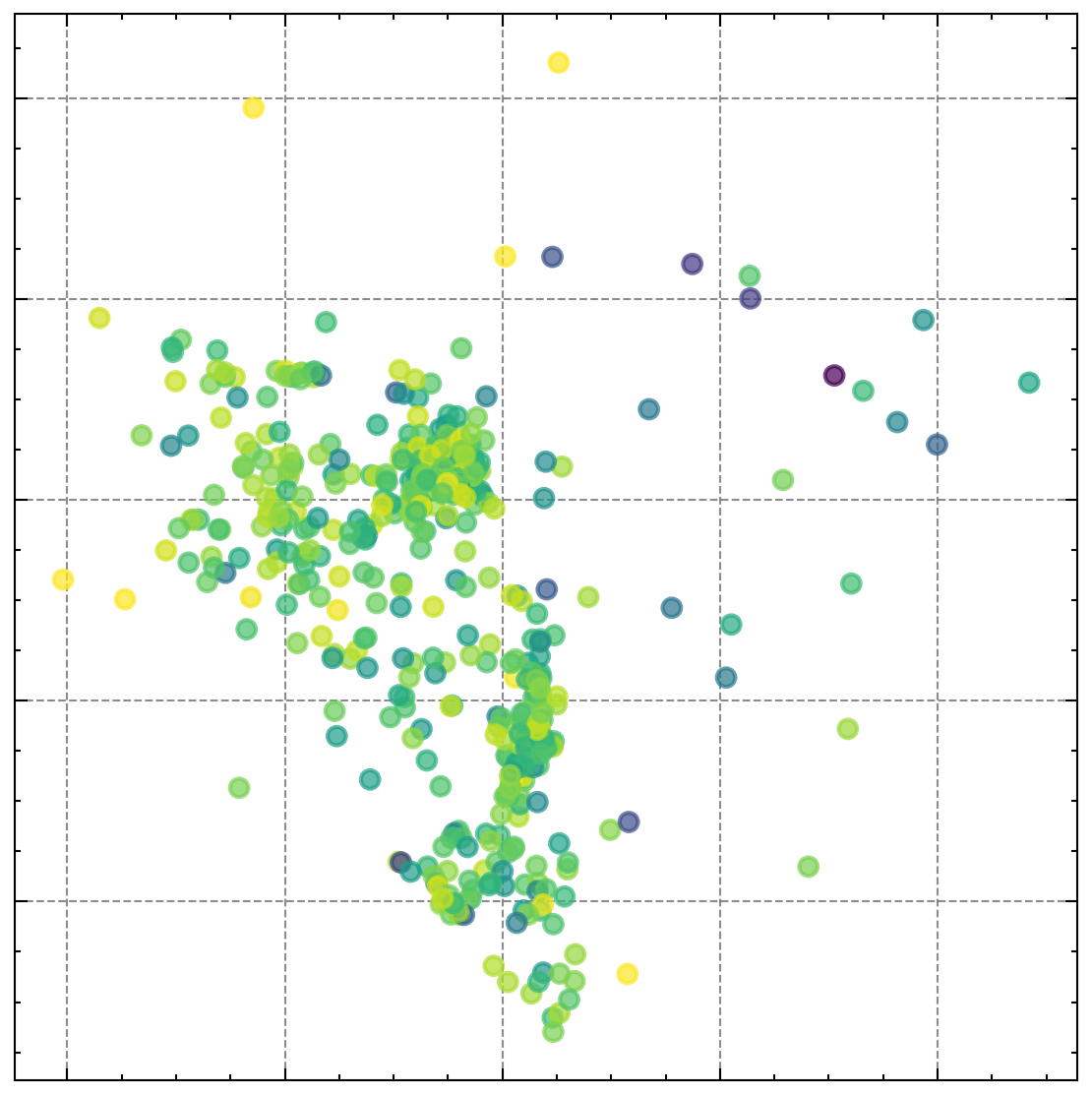}
        \caption{ResNet-50}
    \end{subfigure}
    \hspace{0.01\textwidth}
    \begin{subfigure}[b]{0.2\textwidth}
        \centering
        \includegraphics[width=\textwidth]{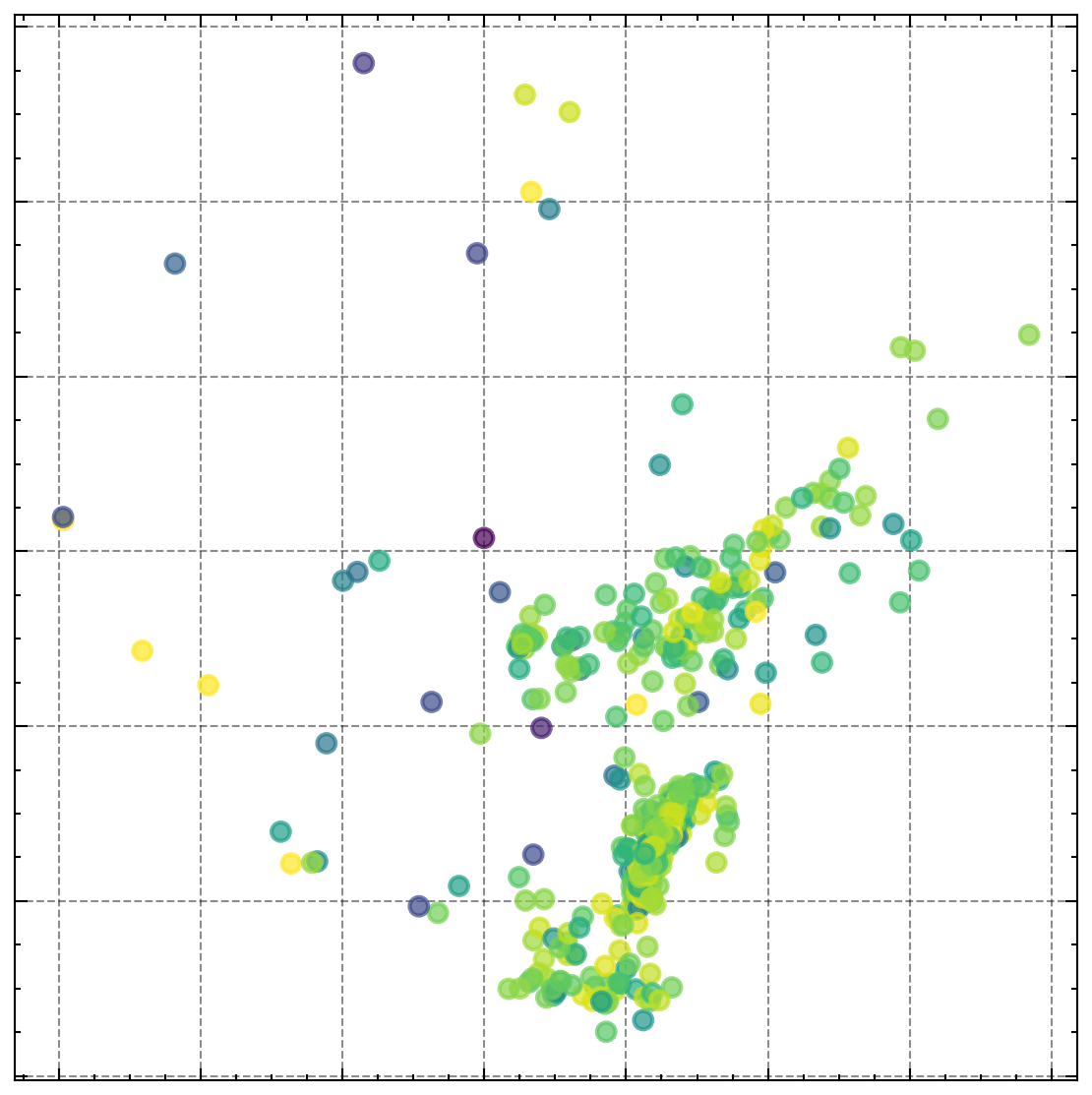}
        \caption{ResNet-101}
    \end{subfigure}
    \hspace{0.01\textwidth}
    \begin{subfigure}[b]{0.235\textwidth}
        \centering
        \includegraphics[width=\textwidth]{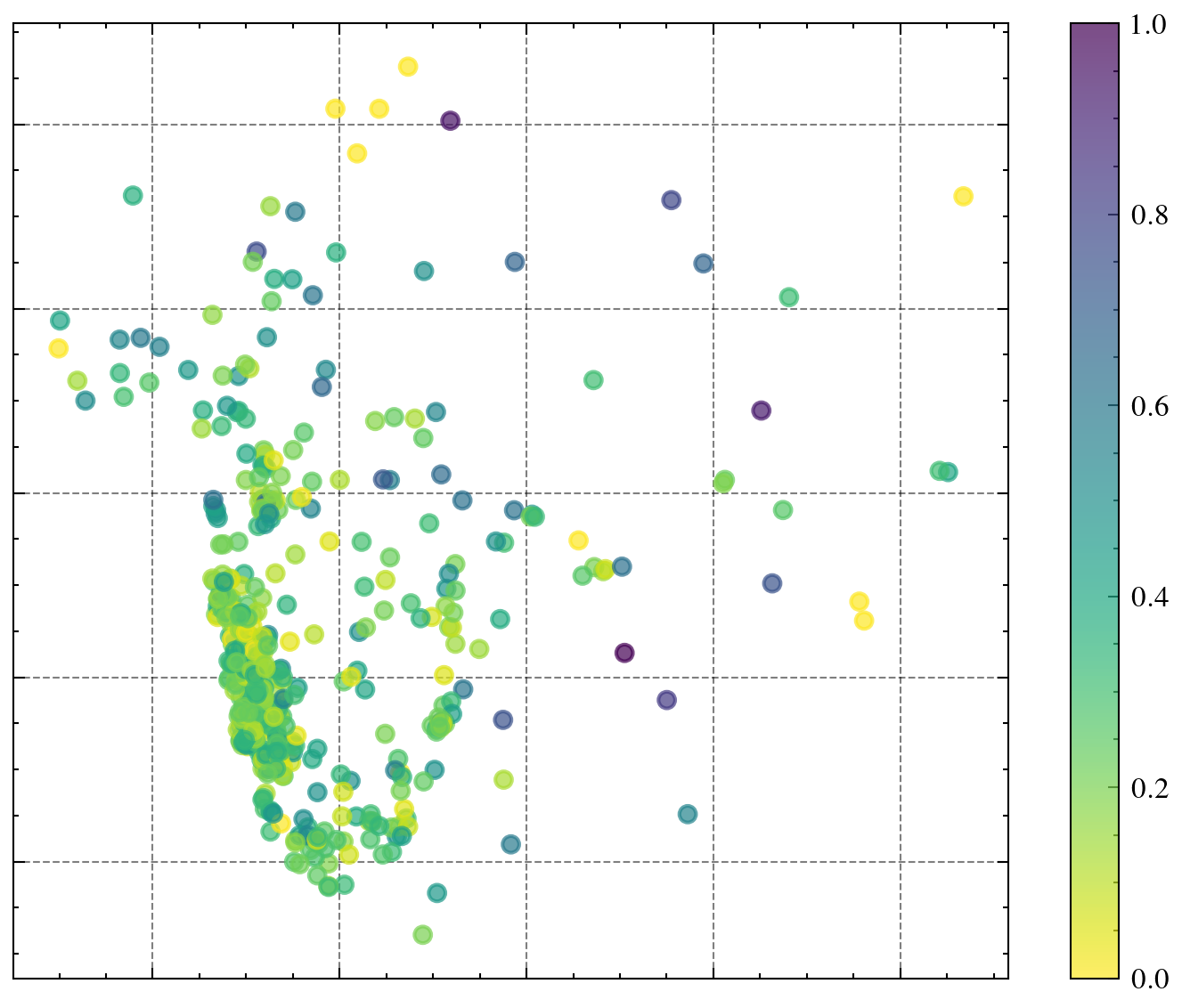}
        \caption{EfficientNet-B0}
    \end{subfigure}

    \vspace{0.01\textwidth}
    \begin{subfigure}[b]{0.2\textwidth}
        \centering
        \includegraphics[width=\textwidth]{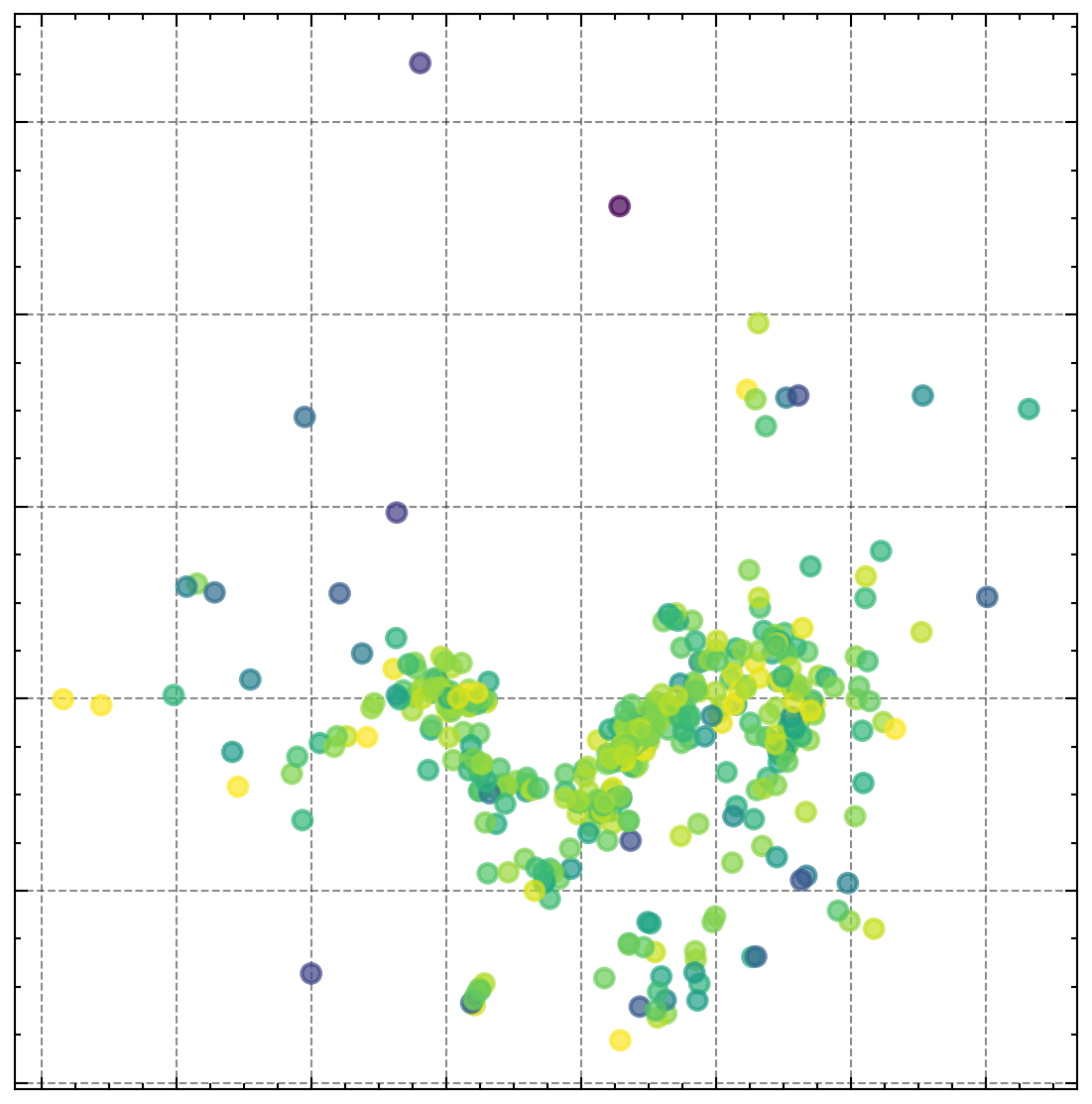}
        \caption{EffNetV2-M}
    \end{subfigure}
    \hspace{0.01\textwidth}
    \begin{subfigure}[b]{0.2\textwidth}
        \centering
        \includegraphics[width=\textwidth]{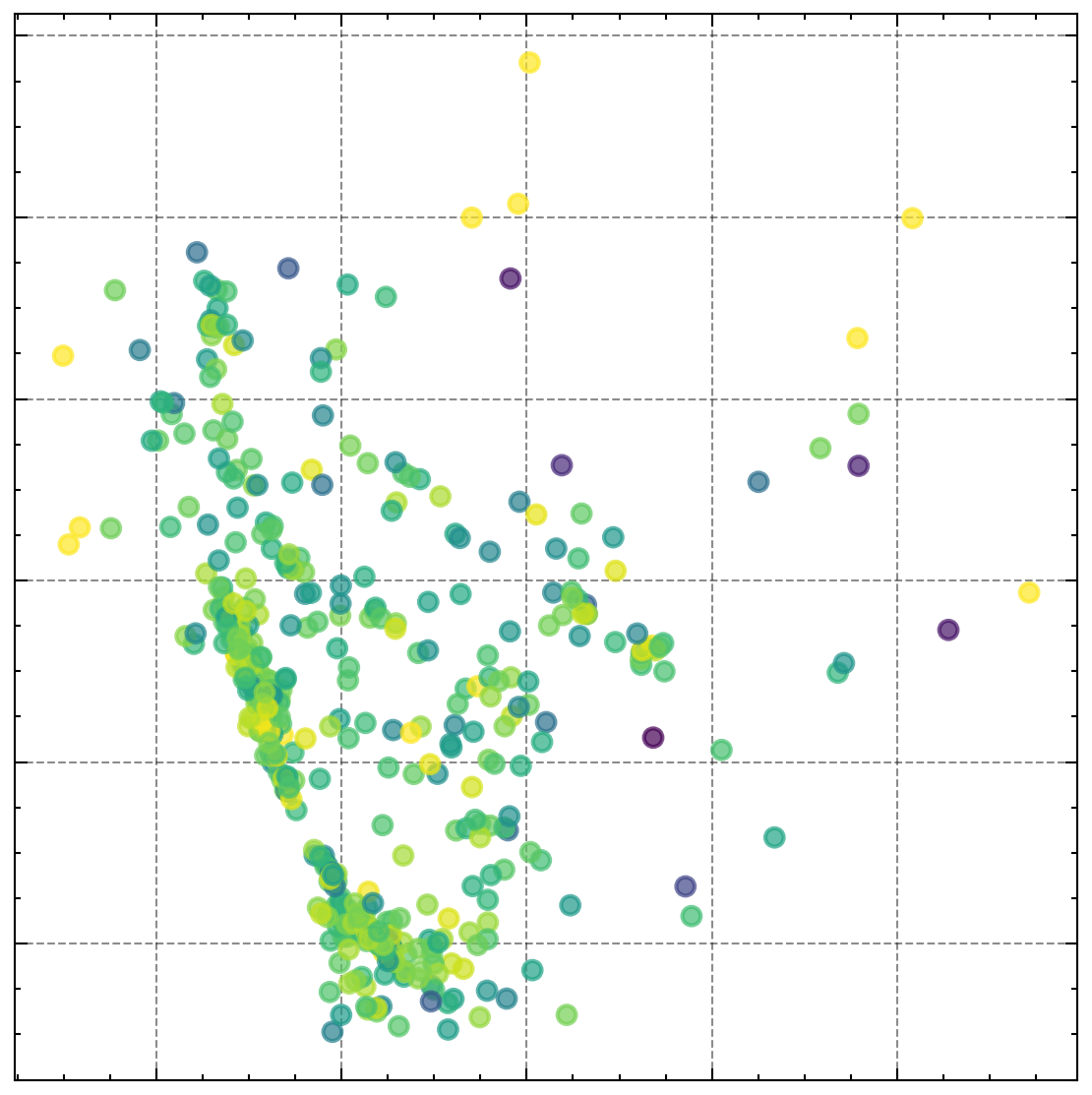}
        \caption{SwinV2-T}
    \end{subfigure}
    \hspace{0.01\textwidth}
    \begin{subfigure}[b]{0.2\textwidth}
        \centering
        \includegraphics[width=\textwidth]{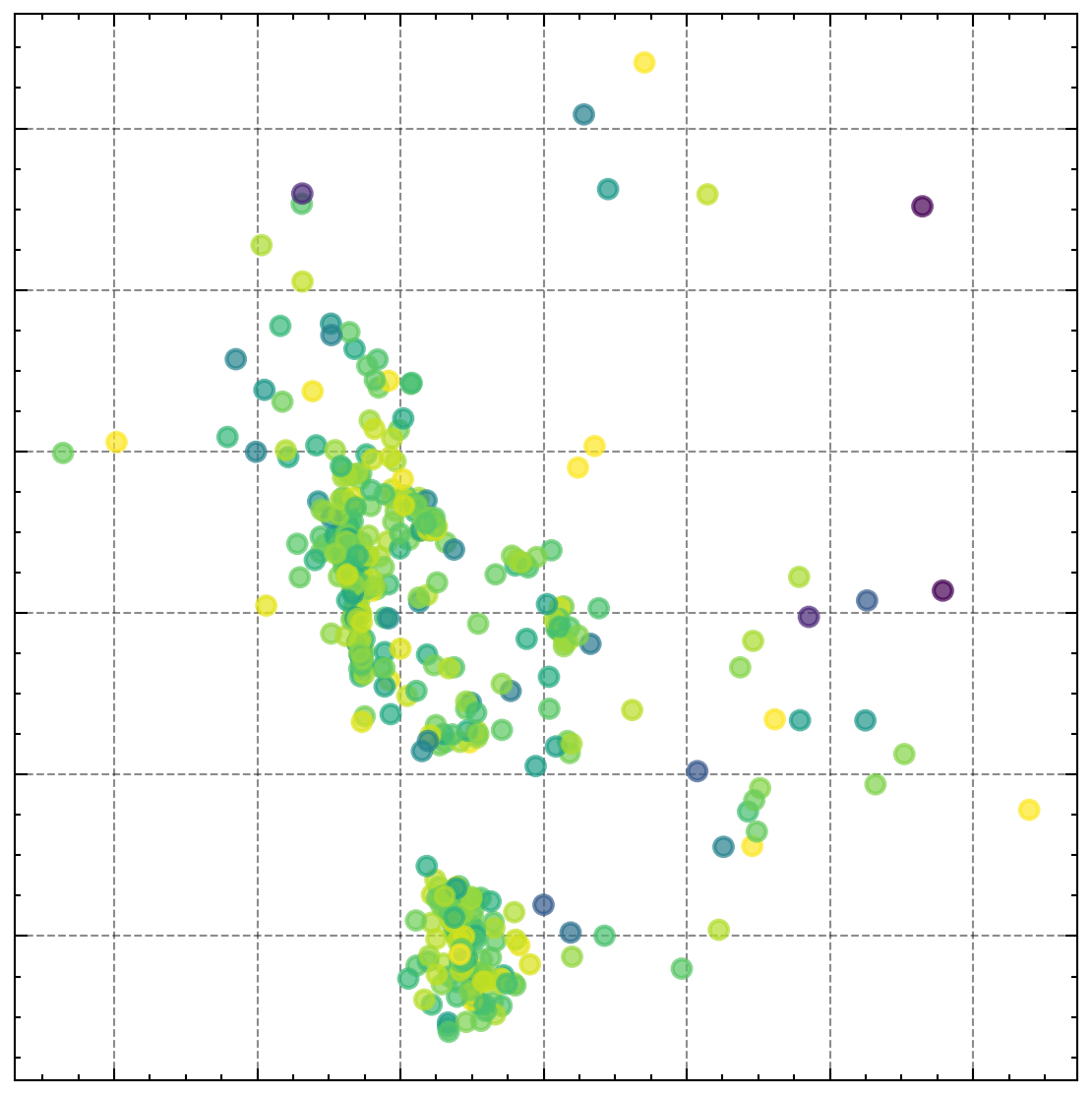}
        \caption{SwinV2-B}
    \end{subfigure}
    \hspace{0.01\textwidth}
    \begin{subfigure}[b]{0.2\textwidth}
        \centering
        \includegraphics[width=\textwidth]{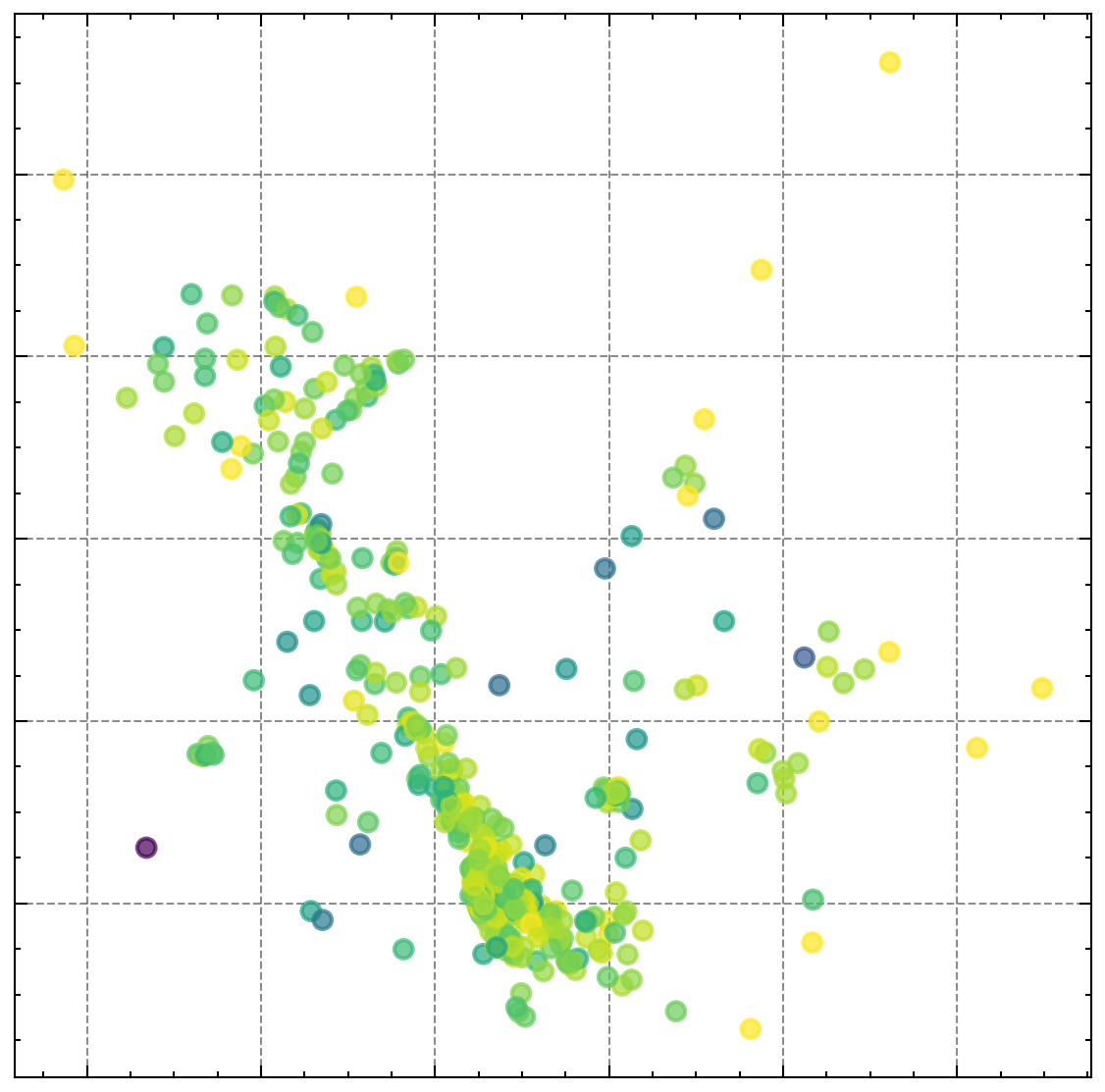}
        \caption{ViT-L-16}
    \end{subfigure}
    
    \caption{$\text{Score}_{\text{pers}}$ on Cifar-100 (butterfly class; label 14) with 6 optimization steps. Score [0, 1]: (0: Yellow = low persistence score (e.g., small optim. distance); 1: blue = higher persistence score (e.g., higher optim. distance).}
    \label{fig:viz_pers_label14}
\end{figure}

\begin{figure}[!tbp]
    \centering
    \begin{subfigure}[b]{0.2\textwidth}
        \centering
        \includegraphics[width=\textwidth]{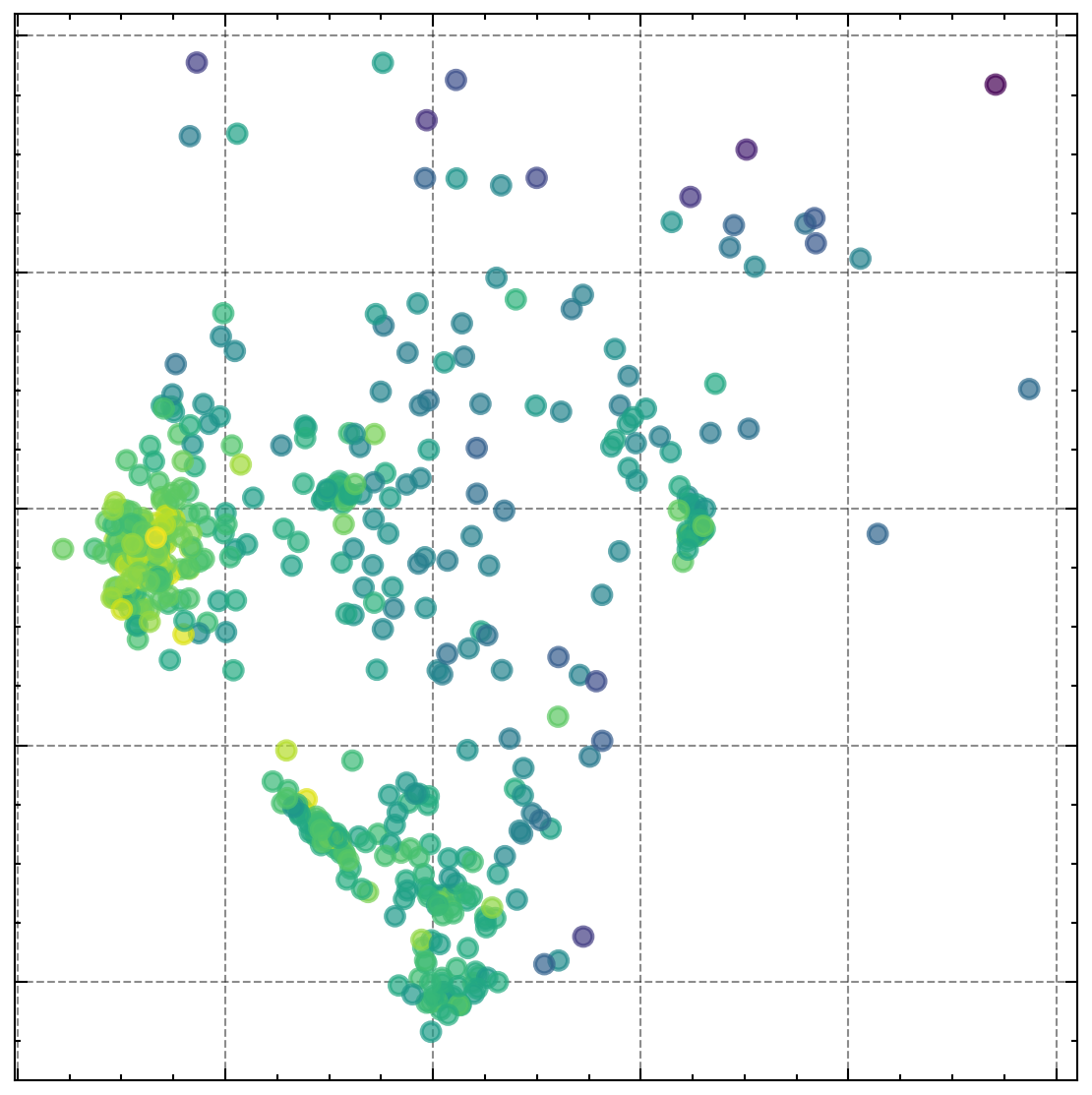}
        \caption{ResNet-18}
    \end{subfigure}
    \hspace{0.01\textwidth}
    \begin{subfigure}[b]{0.2\textwidth}
        \centering
        \includegraphics[width=\textwidth]{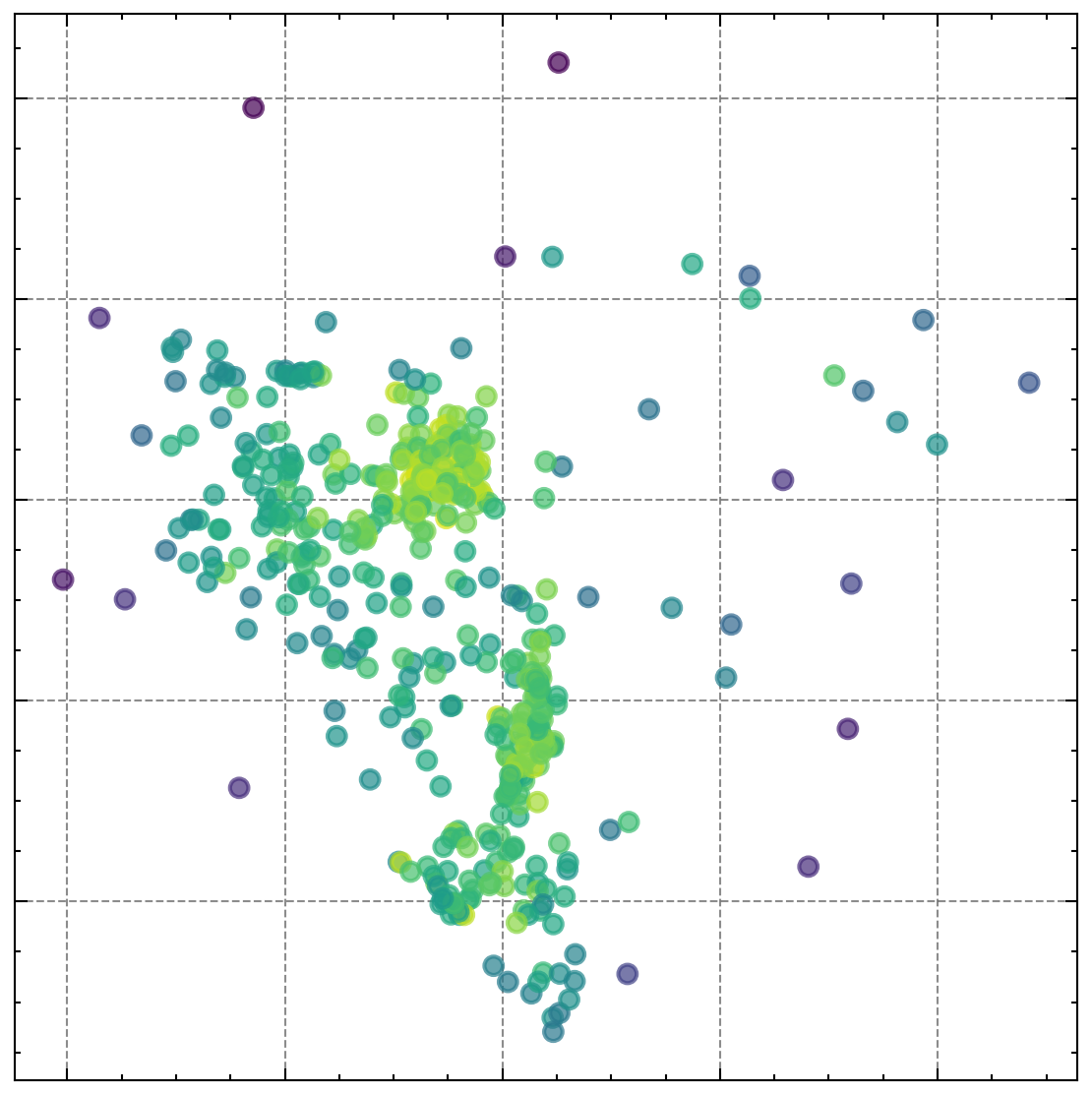}
        \caption{ResNet-50}
    \end{subfigure}
    \hspace{0.01\textwidth}
    \begin{subfigure}[b]{0.2\textwidth}
        \centering
        \includegraphics[width=\textwidth]{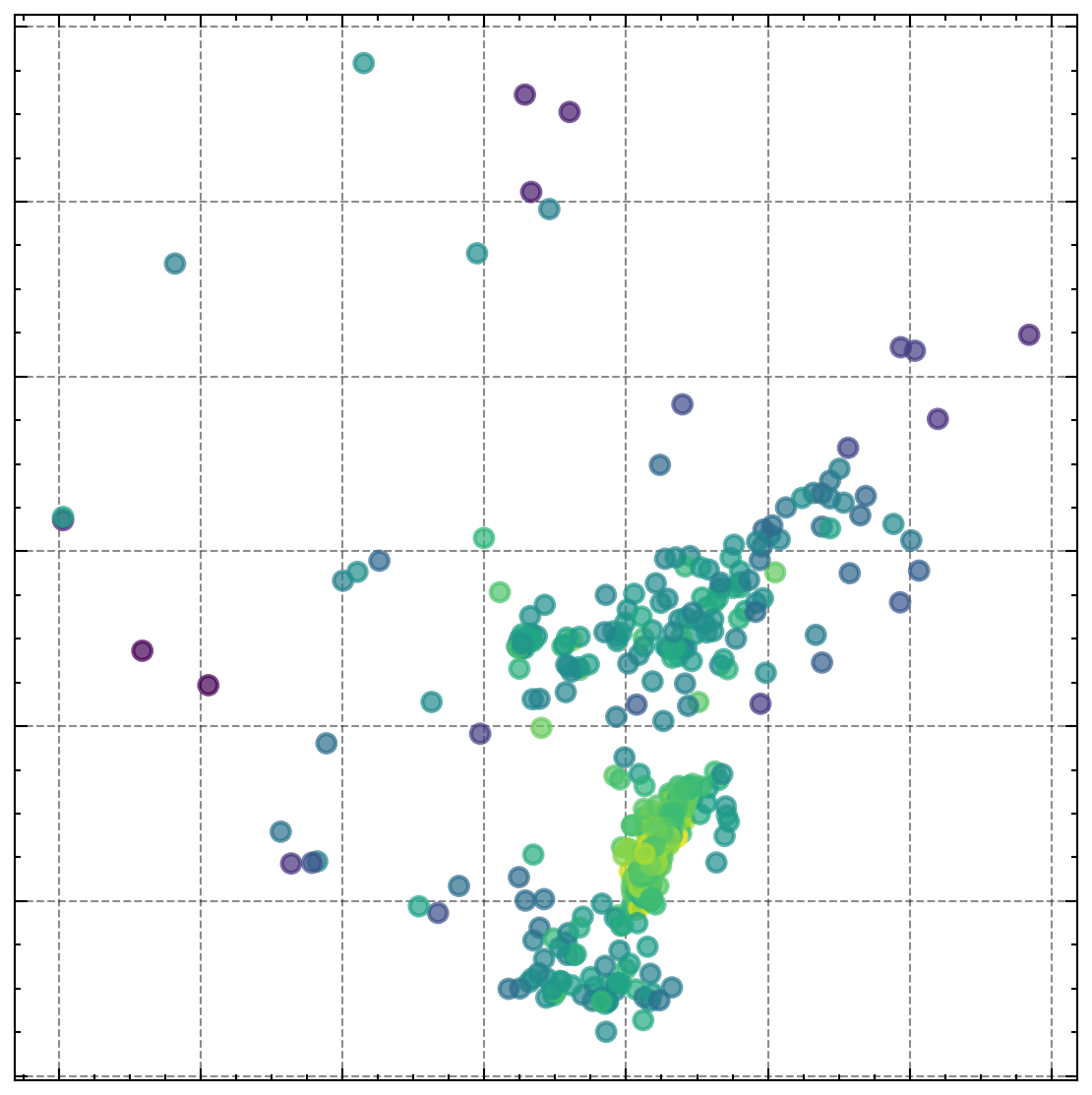}
        \caption{ResNet-101}
    \end{subfigure}
    \hspace{0.01\textwidth}
    \begin{subfigure}[b]{0.235\textwidth}
        \centering
        \includegraphics[width=\textwidth]{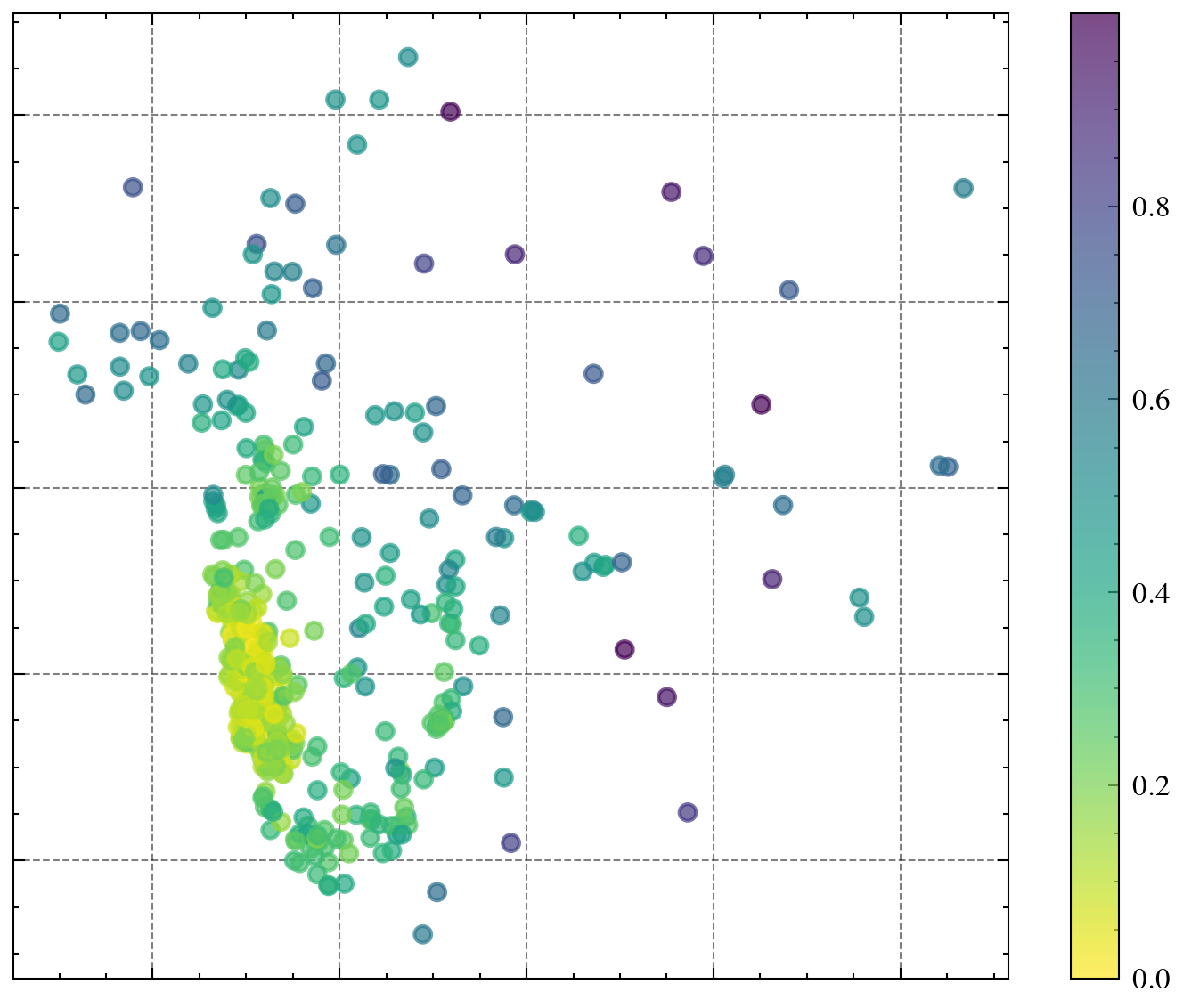}
        \caption{EfficientNet-B0}
    \end{subfigure}

    \vspace{0.01\textwidth}
    \begin{subfigure}[b]{0.2\textwidth}
        \centering
        \includegraphics[width=\textwidth]{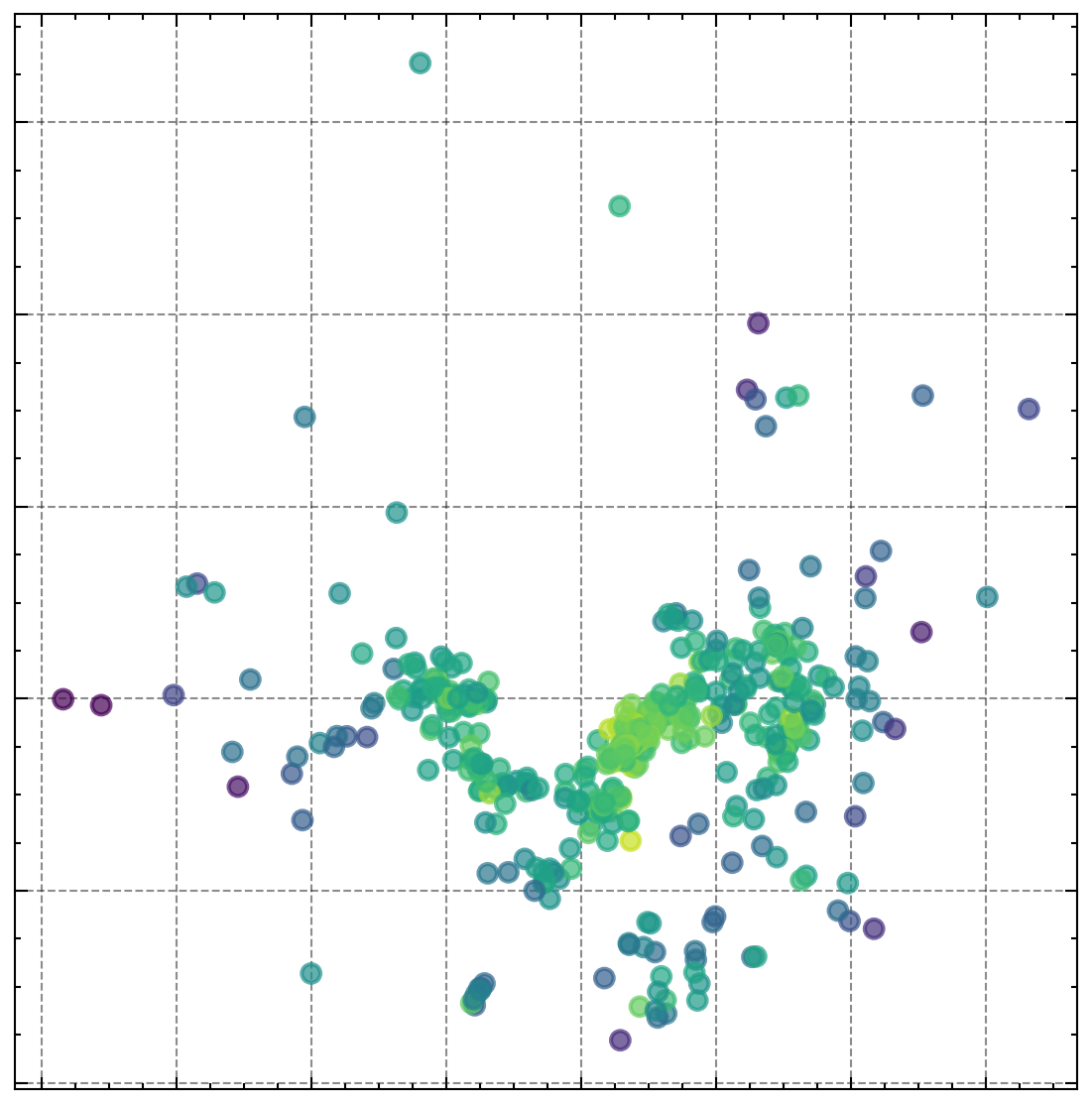}
        \caption{EffNetV2-M}
    \end{subfigure}
    \hspace{0.01\textwidth}
    \begin{subfigure}[b]{0.2\textwidth}
        \centering
        \includegraphics[width=\textwidth]{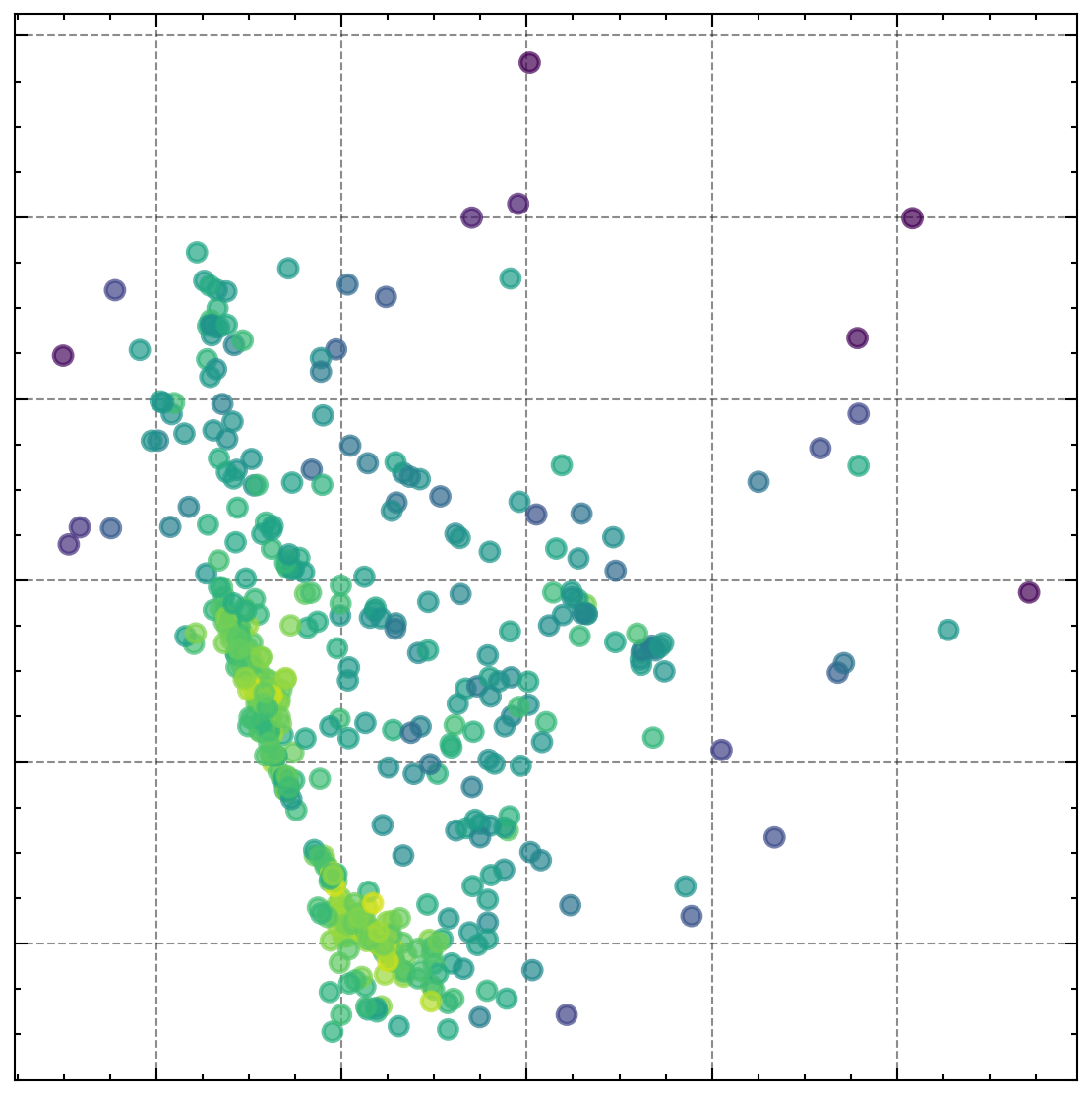}
        \caption{SwinV2-T}
    \end{subfigure}
    \hspace{0.01\textwidth}
    \begin{subfigure}[b]{0.2\textwidth}
        \centering
        \includegraphics[width=\textwidth]{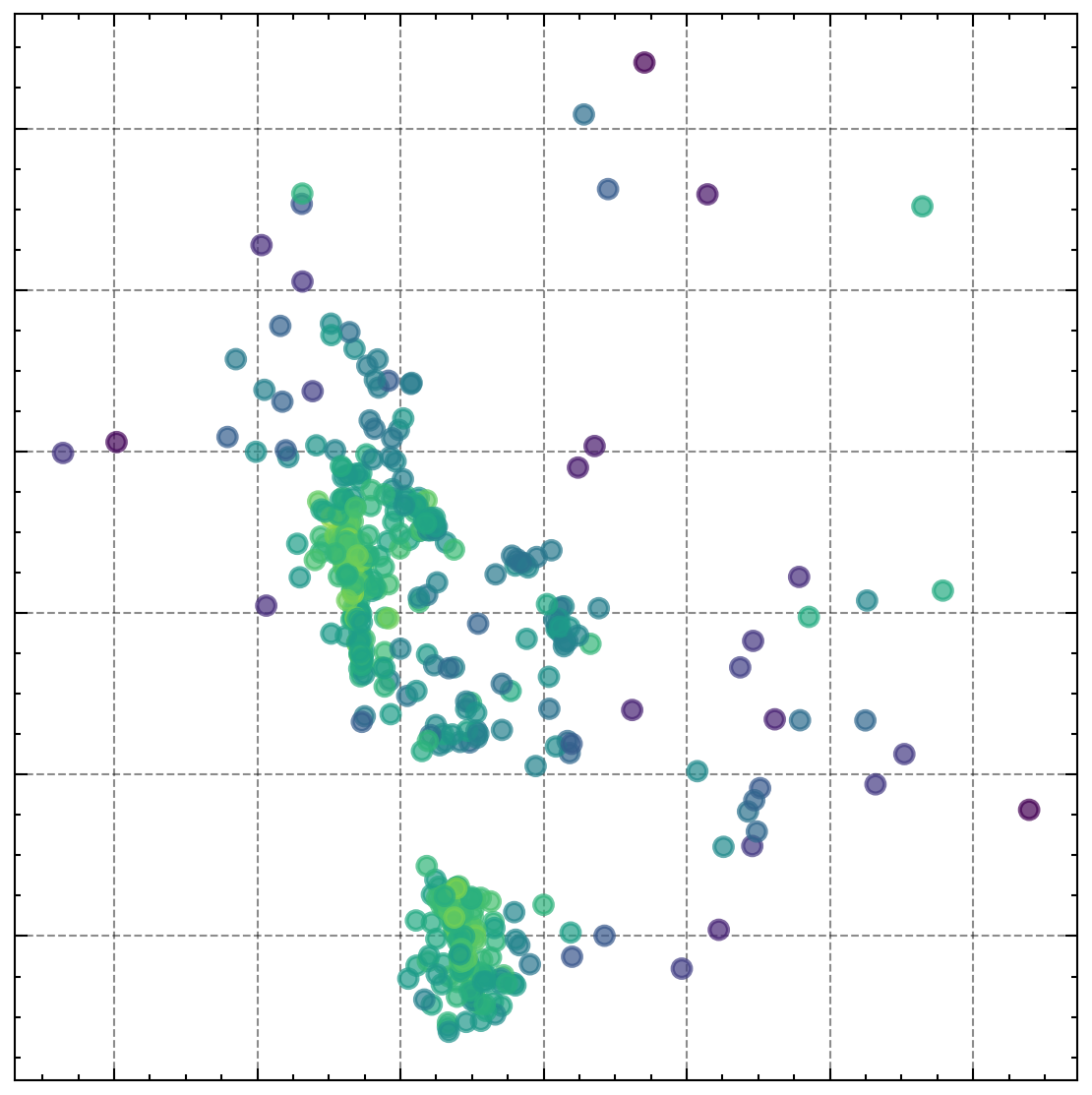}
        \caption{SwinV2-B}
    \end{subfigure}
    \hspace{0.01\textwidth}
    \begin{subfigure}[b]{0.2\textwidth}
        \centering
        \includegraphics[width=\textwidth]{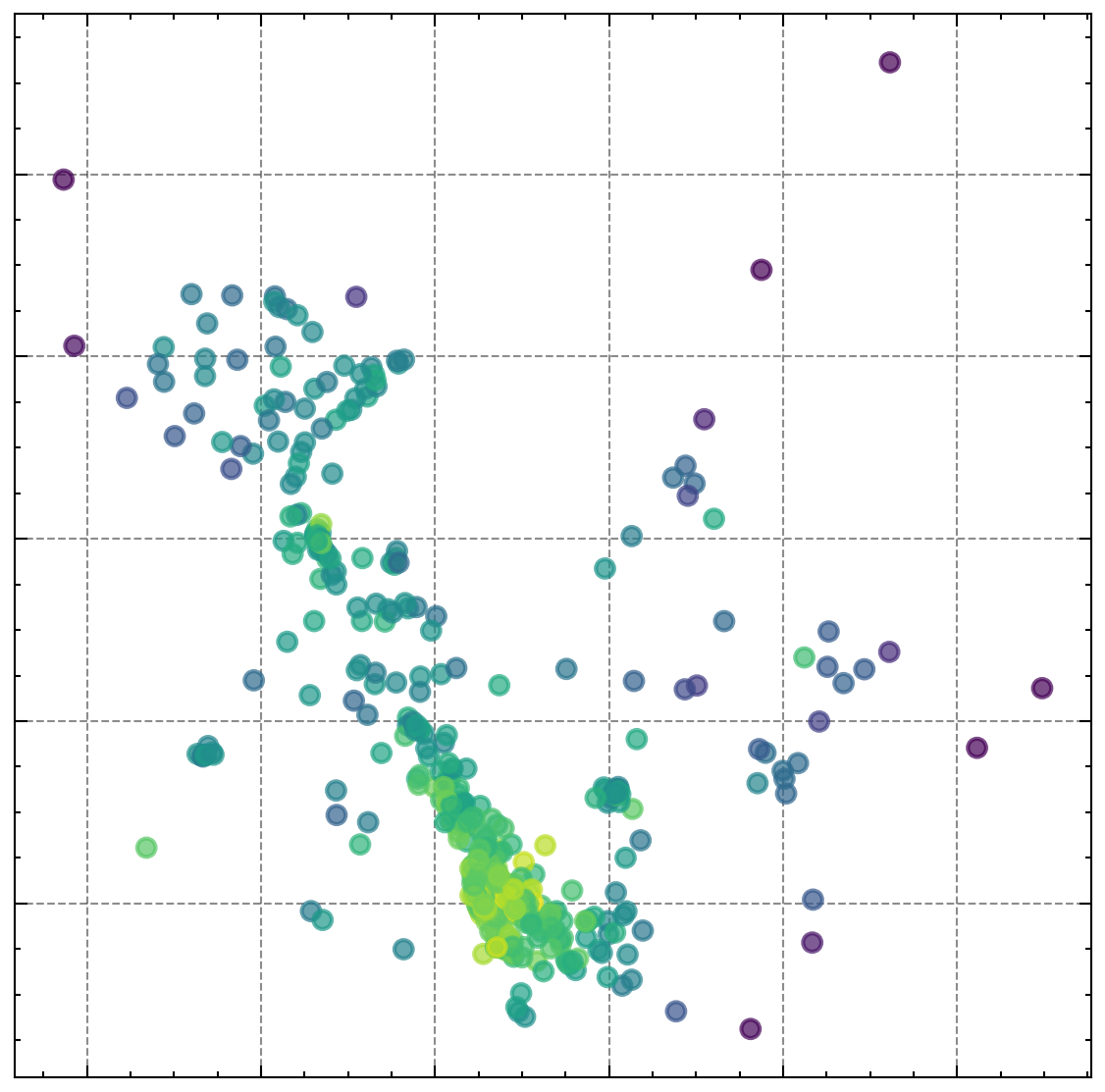}
        \caption{ViT-L-16}
    \end{subfigure}
    
    \caption{$\text{Score}_{\text{unified}}$ on Cifar-100 (butterfly class; label 14). Setting $\alpha=0.5, \beta=0.5$.}
    \label{fig:viz_unified_label14}
\end{figure}

\subsubsection{Performance Impact of the Unified Score}
\label{sec:appendix_ablation_weights}
We investigate the performance impact of the hyperparameters $\alpha$ and $\beta$ from \cref{eq:toposcore}, which balance the influence of our global density and local persistence scores (see \cref{fig:ablation_density_persistence}). Our analysis reveals that while the coreset quality is generally stable across a range of $(\alpha, \beta)$ values, a combination of both metrics consistently yields the best performance. Although using either density or persistence alone provides a reasonable baseline, combining them is particularly crucial at high pruning rates (e.g., 90\%), where a balanced score improves accuracy by up to 5.4\% over using either metric in isolation. This demonstrates that both global and local topology are vital for optimal selection and justifies our use of a fixed and balanced configuration set at $(50/50)$ across all experiments, minimizing the need for extensive hyperparameter tuning.

\begin{figure}[H]
    \centering
    \begin{subfigure}[b]{0.44\textwidth}
        \includegraphics[width=\textwidth]{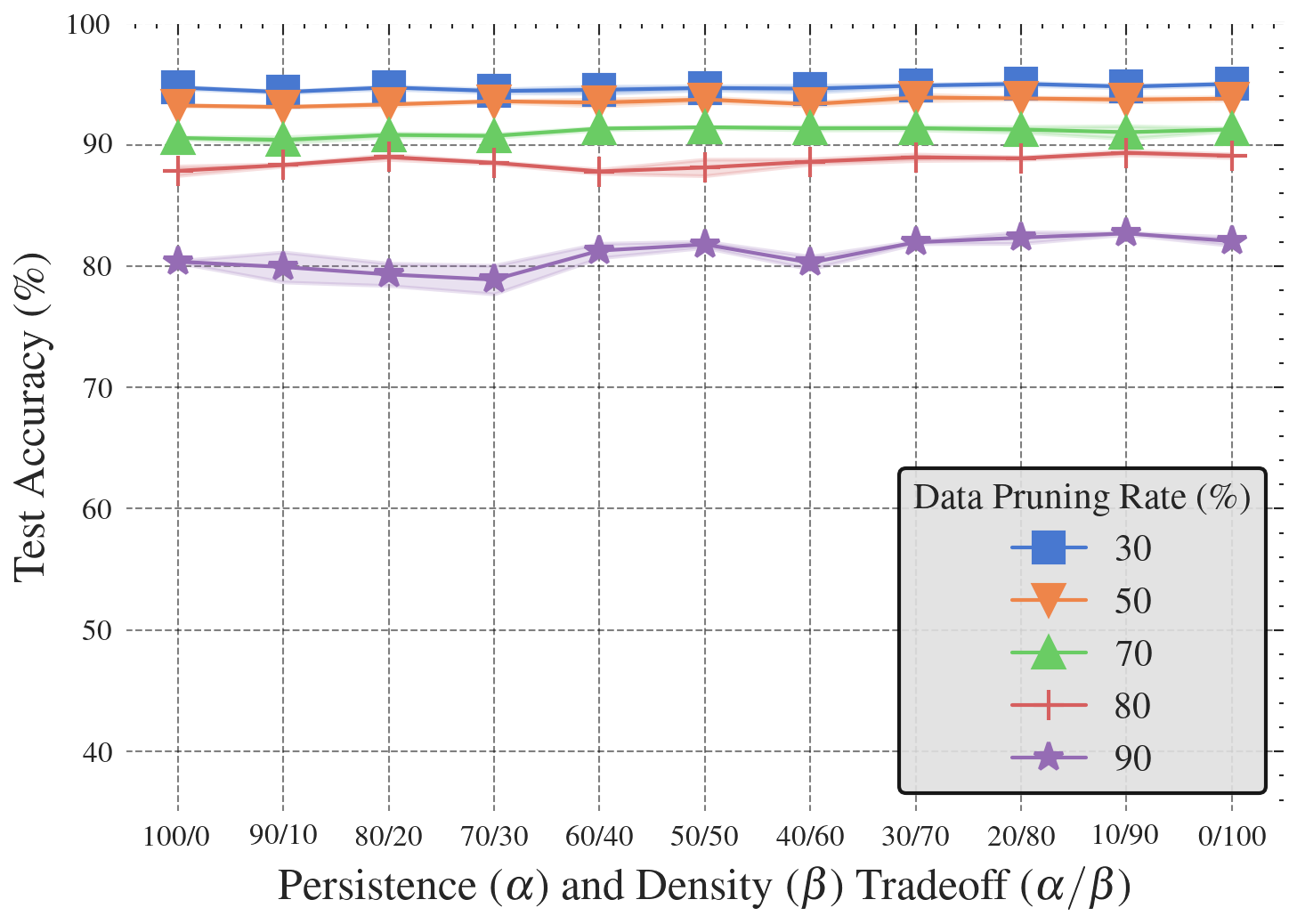}
        \caption{CIFAR-10}
    \end{subfigure}
    \hspace{1em}
    \begin{subfigure}[b]{0.44\textwidth}
        \includegraphics[width=\textwidth]{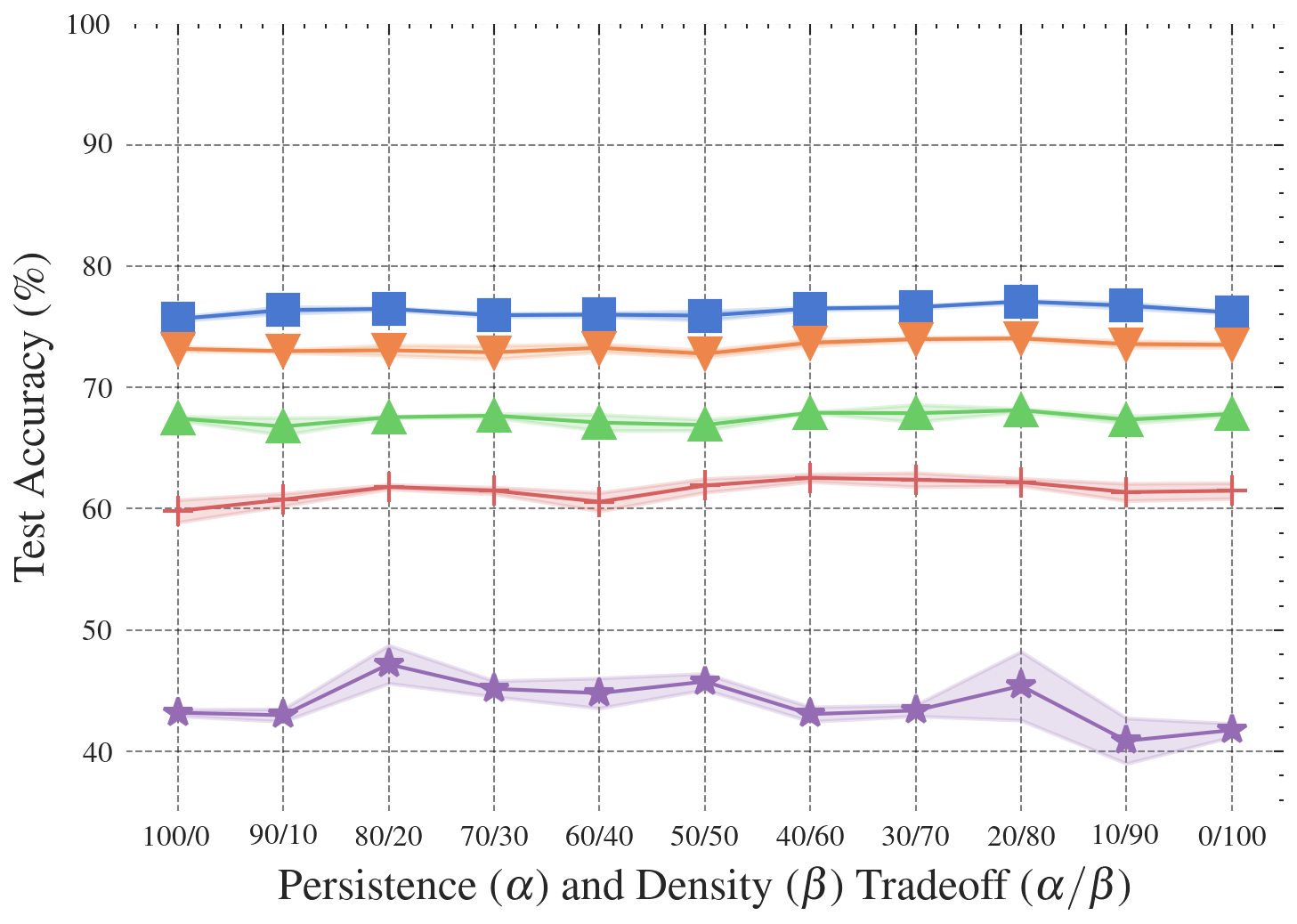}
        \caption{CIFAR-100}
    \end{subfigure}
    
    \caption{$\alpha$ and $\beta$ sweep across data pruning rates for both \textbf{(a)} CIFAR-10 and \textbf{(b)} CIFAR-100.}
    \label{fig:ablation_density_persistence}
\end{figure}

\subsubsection{Illustrative Example of Coreset Construction}
\label{Sec:appendix_illustrative}
To provide a complete picture, we visualize \sys at various rates (70\%, 50\%, 30\%, 20\% and 10\%) for the ``butterfly'' class in CIFAR-100 (see \cref{fig:illustrative}). The visualization reveals a high variance in Persistence Scores within localized regions of the class manifold, demonstrating the method's sensitivity to fine-grained local structures and its ability to distinguish between nearby samples. Despite this focus on local complexity, the final coresets remain density-preserving, with their overall distribution closely matching that of the full dataset. This illustrates how \sys successfully balances the selection of topologically critical local samples with the preservation of the global data structure.

\begin{figure}[!tp]
    \centering
    \includegraphics[width=\textwidth]{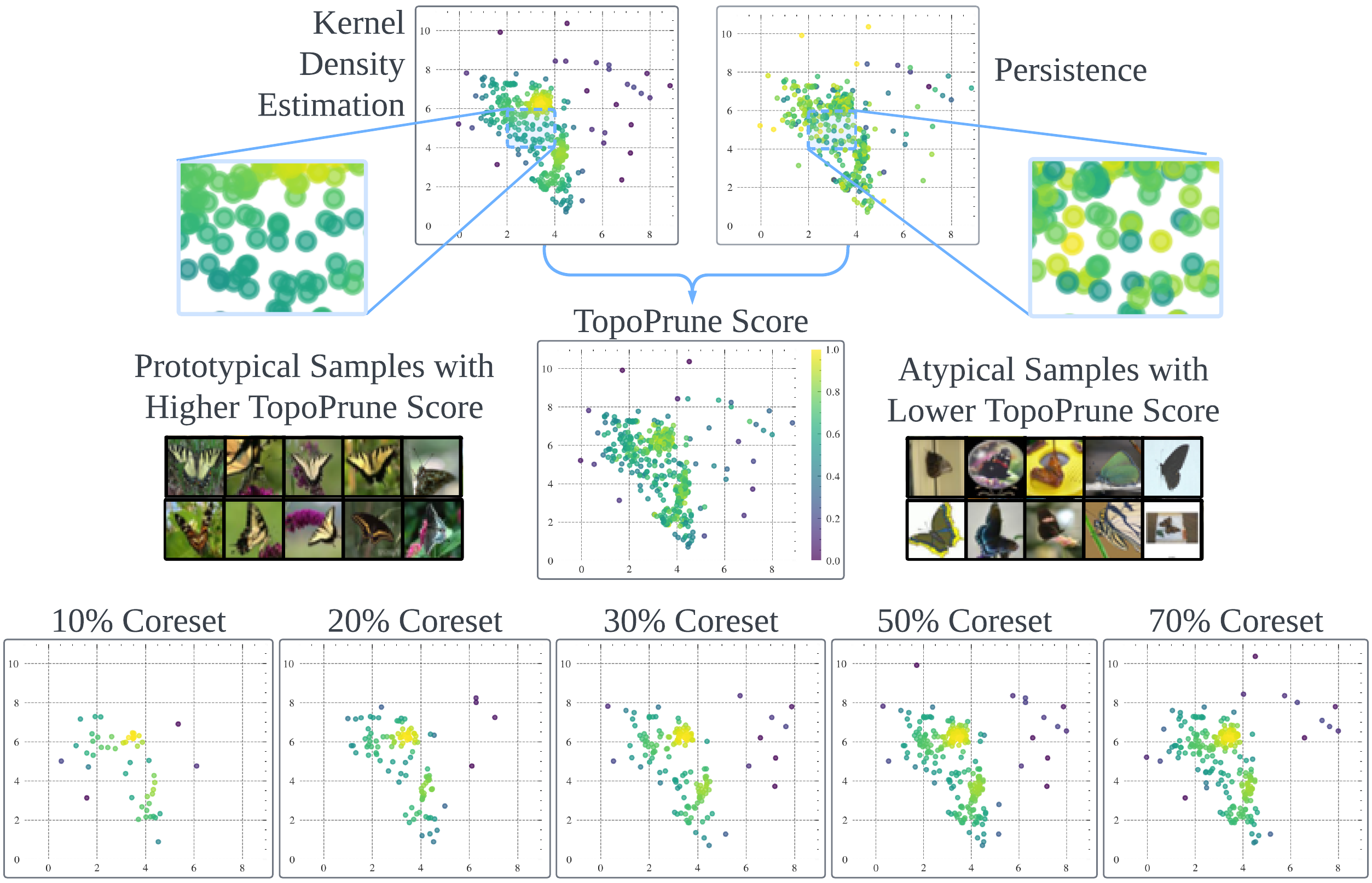}
    \caption{Qualitative analysis of \sys on the CIFAR-100 ``butterfly'' class. \textbf{(Top)} A comparison of scoring components. In highly populated regions (zoomed in view), Kernel Density Estimation saturates uniformly; however, local Persistence effectively resolves the fine-grained topological structure within the local neighborhood to isolate structural anchors. \textbf{(Middle)} The unified \sys score successfully separates prototypical samples (high score) from atypical variants (low score). \textbf{(Bottom)} The spatial distribution of selected samples across varying coreset sizes (10\% to 70\%) demonstrates how this dual-scale topological approach ensures comprehensive coverage of the underlying data manifold at extreme pruning rates.}
    \label{fig:illustrative}
\end{figure}

\subsection{Differentiable Persistence Optimization Steps}
\label{Sec:appendix_tda_steps_ablation}
We investigate the impact of the number of optimization steps for multi-parameter persistent homology (see \cref{fig:ablation_tda_steps}). The number of required persistence optimization steps is inversely correlated with the final coreset size. When selecting a large coreset (e.g., at a 30\% pruning rate), the selection process is robust, and even a few optimization steps (1-2) suffice to identify a high-quality subset. However, at high pruning rates (e.g., 90\%), the task of distinguishing the most crucial samples becomes more sensitive, necessitating a greater number of optimization steps ($\geq$6) to allow the point positions to converge and accurately reveal the most structurally important examples.

\begin{figure}[!tp]
    \centering
    \begin{subfigure}[b]{0.44\textwidth}
        \includegraphics[width=\textwidth]{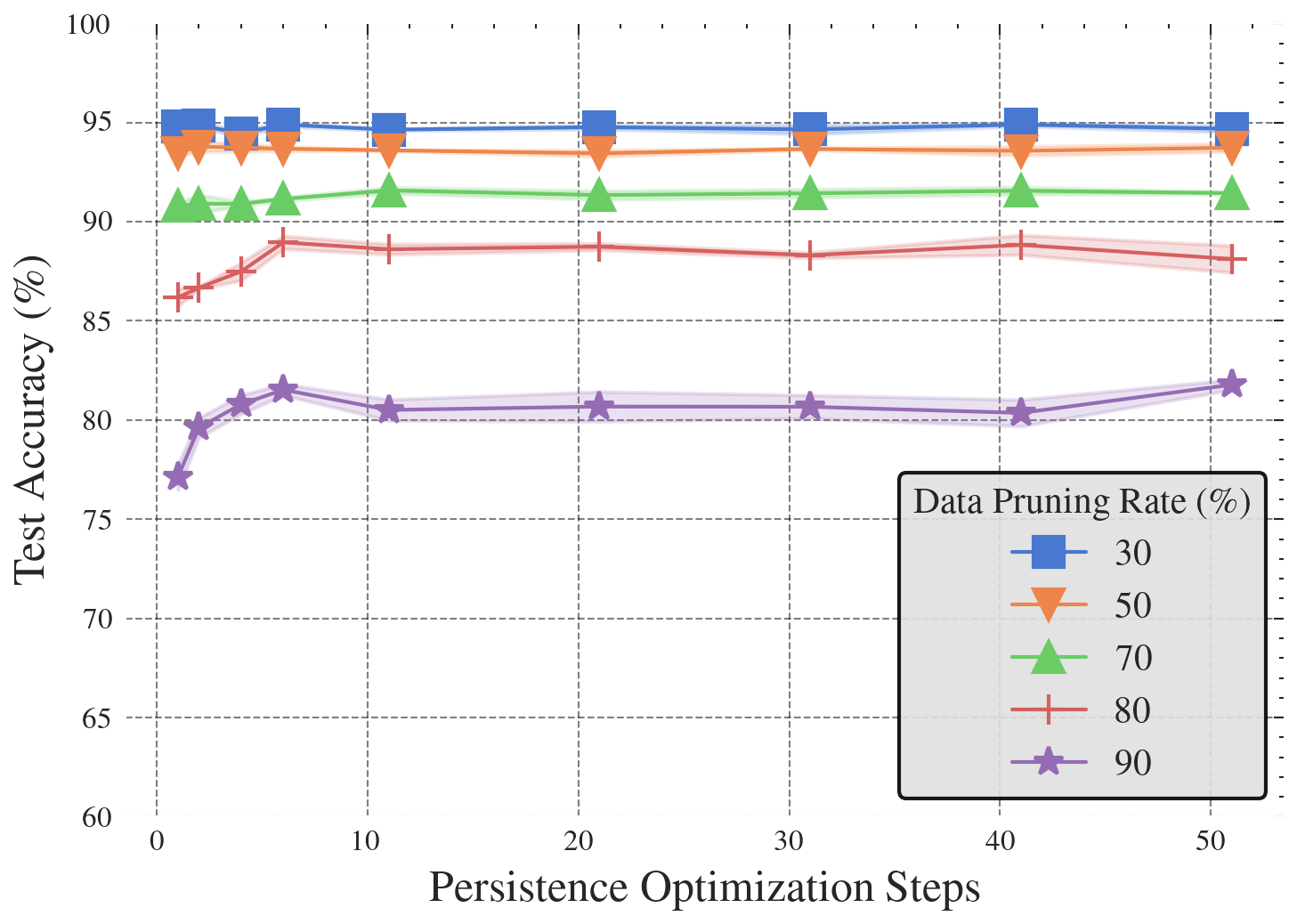}
        \caption{Accuracy on CIFAR-10}
        \label{fig:ablation_tda_steps_acc}
    \end{subfigure}
    \hspace{1em}
    \begin{subfigure}[b]{0.44\textwidth}
        \includegraphics[width=\textwidth]{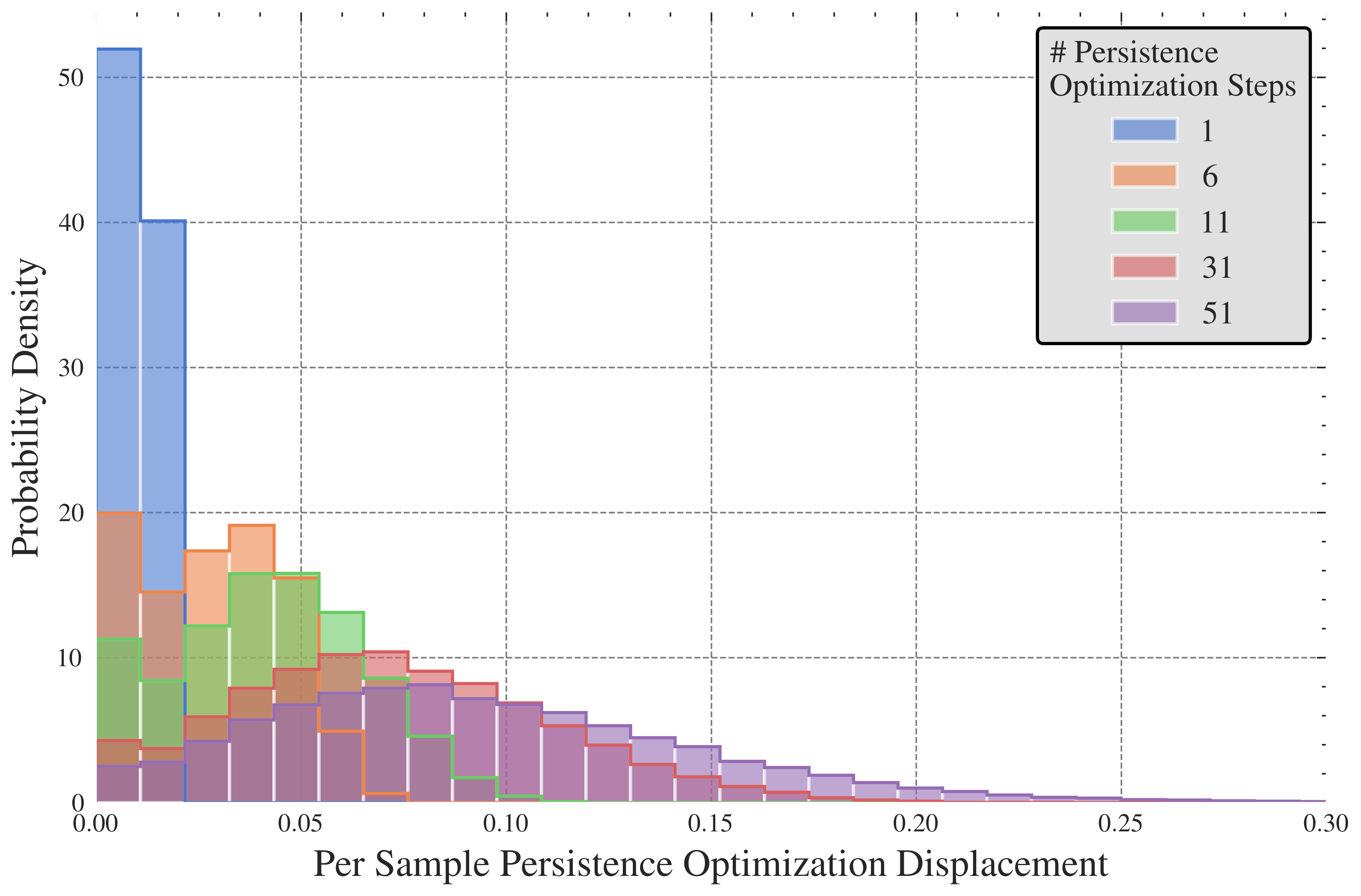}
        \caption{Distribution of Per-Sample Displacement}
        \label{fig:ablation_tda_steps_distribution}
    \end{subfigure}
    
    \caption{Higher compression rates magnify sample importance, necessitating more optimization steps for precise selection. Conversely, larger coresets are more forgiving and require fewer steps.}
    \vspace{-\baselineskip}
    \vspace{-\baselineskip}
    \label{fig:ablation_tda_steps}
\end{figure}

\subsection{Training-free Proxies of Area Under Margin (AUM) for Mislabel Detection}
\label{Sec:appendix_mislabel_proxy_ablation}
We evaluate several training-free methods to serve as a proxy for Area Under the Margin (AUM) in \cref{tab:mislabel_ablation}. These proxies identify potentially noisy samples using different geometric criteria, ranging from culling samples based on their \textit{Distance} to the class prototype, to using an \textit{Adjacent Distance} ratio to remove points closer to another class's prototype. Other heuristics include culling samples with the lowest \textit{Density} score, or using our proposed \textit{Neighborhood Label Purity Score (NLPS)}, which identifies points in mixed-label regions by calculating the fraction of same-label nearest neighbors (from 20 nearest neighbors). Our results show that NLPS provides the highest coreset accuracy among all training-free proxies, with its advantage being most pronounced at high data pruning rates. While it does not fully match the performance of using the true AUM, NLPS serves as a simple and effective training-free proxy.

\begin{table}[!tbp]
\caption{\textbf{Training-free proxies for Area Under Margin (AUM)} \citep{aum} on CIFAR-100. We see that Neighborhood Label Purity Score (NLPS) performs closest to AUM.}
\centering 
{
\begin{tabular}{lccccc}
    \toprule
    Pruning Rate ($\rightarrow$) & 30\% & 50\% & 70\% & 80\% & 90\% \\
    \midrule
    Distance & 75.4\footnotesize $\pm$0.4 & 71.3\footnotesize $\pm$0.3 & 63.2\footnotesize $\pm$0.1 & 56.1\footnotesize $\pm$0.6 & 37.8\footnotesize $\pm$0.3 \\
    Adjacent Distance & 75.5\footnotesize $\pm$0.3 & 71.8\footnotesize $\pm$0.2 & 62.8\footnotesize $\pm$1.0 & 57.1\footnotesize $\pm$0.3 & 38.1\footnotesize $\pm$1.5 \\
    Density & \underline{75.6\footnotesize $\pm$0.1} & 71.4\footnotesize $\pm$0.2 & 64.3\footnotesize $\pm$0.3 & 52.4\footnotesize $\pm$0.3 & 38.0\footnotesize $\pm$0.7 \\
    \textbf{NLPS} & \textbf{75.6\footnotesize $\pm$0.2} & \textbf{71.9\footnotesize $\pm$0.2} & \textbf{65.3\footnotesize $\pm$0.4} & \textbf{56.7\footnotesize $\pm$0.4} & \textbf{41.6\footnotesize $\pm$1.0} \\
    \midrule
    \textbf{AUM} & 75.9\footnotesize $\pm$0.4 & 72.8\footnotesize $\pm$0.3 & 66.9\footnotesize $\pm$0.5 & 61.9\footnotesize $\pm$0.6 & 45.7\footnotesize $\pm$0.7 \\
\bottomrule
\end{tabular}
}
\label{tab:mislabel_ablation}
\end{table}

\subsection{Component Isolation: The Impact of Mislabel Filtering}
\label{Sec:Appendix_filtering_ablation}

To confirm our performance gains are driven by topological scoring rather than initial data sanitization, we isolate the effect of filtering on TopoPrune and geometric baseline Moderate \citep{moderateds}. We evaluate both on CIFAR-100 (our noisiest benchmark) across three regimes: no filtering, training-free (NLPS), and training-dynamic (AUM). TopoPrune outperforms Moderate at every filtering stage (\cref{tab:misl_filt_rebuttal}). This confirms the topological construction provides a fundamental improvement independent of mislabel filtering, remaining superior even without filtering at moderate pruning rates ($50\%$--$80\%$).

Notably, the performance gap widens under explicit filtering, particularly at extreme ($90\%$) pruning. This exposes a structural limitation of Moderate: it restricts sampling to the interquartile range of the distance-to-prototype metric. This rigid cutoff acts as an implicit, naive filter that blindly discards outliers. Thus, applying explicit filters (NLPS/AUM) on top of Moderate yields diminishing returns, as it inherently ignores the regions containing most noisy samples. Conversely, TopoPrune scores the entire data manifold without arbitrary cutoffs. This allows it to fully capitalize on the cleaned distribution, driving the expanding performance gap.

\definecolor{Gray}{gray}{0.9}
\newcolumntype{g}{>{\columncolor{Gray}}c}
\begin{table}[!bp]
\caption{\textbf{Component isolation of mislabel filtering.} Evaluating TopoPrune and Moderate across un-filtered, NLPS-filtered, and AUM-filtered CIFAR-100 distributions. TopoPrune consistently outperforms Moderate, with the performance gap expanding significantly at extreme compression rates ($90\%$) under explicit filtering.}
\centering
{
\begin{tabular}{clcccg}
    \toprule
    & Pruning Rate ($\rightarrow$) & 50\% & 70\% & 80\% & 90\% \\
    \midrule
    \multirow{3}{*}{
    \begin{tabular}
        [c]{@{}c@{}}\small \textit{No}\\ \small \textit{Filtering}
    \end{tabular}
    } & Moderate \citep{moderateds} & 70.1$\pm$0.3 & 63.7$\pm$0.2 & 56.1$\pm$0.5 & 34.9$\pm$2.1  \\
    & \textbf{\sys}  & 71.4$\pm$0.2 & 64.6$\pm$0.3 & 56.4$\pm$0.3 & 35.5$\pm$1.0  \\
    & Acc. $\Delta$  & +1.3 & +0.9 & +0.3 & +0.6 \\

    \midrule
    \multirow{3}{*}{
    \begin{tabular}
        [c]{@{}c@{}}\small \textit{with}\\ \small \textit{NLPS}
    \end{tabular}
    } & Moderate \citep{moderateds} & 70.0$\pm$0.4 & 64.0$\pm$0.1 & 56.2$\pm$0.5 & 39.9$\pm$0.9  \\
    & \textbf{\sys}  & 71.9$\pm$0.2 & 65.3$\pm$0.4 & 56.7$\pm$0.4 & 41.6$\pm$1.0  \\
    & Acc. $\Delta$  & +1.9 & +1.3 & +0.5 & +1.7 \\

    \midrule
    \multirow{3}{*}{
    \begin{tabular}
        [c]{@{}c@{}}\small \textit{with}\\ \small \textit{AUM}
    \end{tabular}
    } & Moderate \citep{moderateds} & 72.4$\pm$0.2 & 66.7$\pm$0.3 & 60.2$\pm$0.8 & 40.0$\pm$1.2  \\
    & \textbf{\sys}  & 72.8$\pm$0.3 & 66.9$\pm$0.5 & 61.9$\pm$0.6 & 45.7$\pm$0.7  \\
    & Acc. $\Delta$  & +0.4 & +0.2 & +1.7 & +5.7 \\

\bottomrule
\end{tabular}
}
\label{tab:misl_filt_rebuttal}
\end{table}

\subsection{Sensitivity to UMAP Manifold Projection Hyperparameters}
\label{Sec:appendix_umap_sensitivity}

To establish the robustness of the global manifold embedding stage of \sys, we conducted a comprehensive hyperparameter sweep of UMAP. We specifically investigate two critical parameters: (1) \textit{n\_neighbors}, which controls the balance between local and global geometric preservation, and (2) \textit{min\_dist}, which governs how tightly samples are packed in the low-dimensional manifold. The results in \cref{tab:umap_hyper} indicate that our selection of \textit{n\_neighbors=15} and \textit{min\_dist=0.1} provides a strong balance of high accuracy and low variance.

\begin{table}[!tbp]
    \caption{\textbf{\sys sensitivity to UMAP hyperparameters.} \textbf{(a)} accuracy on CIFAR-100 (90\% pruning rate) and \textbf{(b)} relative change to default configuration when modulating \textit{n\_neighbors} and \textit{min\_dist} UMAP hyperparameters.}
    \label{tab:umap_hyper}
    
    \centering
    \begin{subtable}{0.63\textwidth}
        \centering
        \resizebox{\textwidth}{!}
        {
        \begin{tabular}{ccccc|c}
        \toprule
            & \multicolumn{5}{c}{\textbf{n\_neighbors}} \\
            \cmidrule(lr){2-6}
            \textbf{min\_dist} & 5 & 15 & 50 & 100 & Row Avg. \\
            \midrule
            0.05 & 43.9\footnotesize $\pm$2.2 & 43.0\footnotesize $\pm$0.9 & 44.4\footnotesize $\pm$1.7 & 44.9\footnotesize $\pm$1.6 & 44.1\footnotesize $\pm$1.6 \\
            0.1 & 43.1\footnotesize $\pm$1.8 & \textbf{45.8\footnotesize $\pm$0.7} & 45.7\footnotesize $\pm$0.8 & 43.6\footnotesize $\pm$0.7 & 44.6\footnotesize $\pm$0.9 \\
            0.5 & 46.2\footnotesize $\pm$1.0 & 43.8\footnotesize $\pm$0.7 & 44.3\footnotesize $\pm$2.2 & 43.9\footnotesize $\pm$0.4 & 44.6\footnotesize $\pm$1.0 \\
            1.0 & 44.3\footnotesize $\pm$0.9 & 45.4\footnotesize $\pm$0.6 & 45.0\footnotesize $\pm$1.1 & 43.1\footnotesize $\pm$1.3 & 44.4\footnotesize $\pm$1.0 \\
            \midrule
            Col Avg. & 44.4\footnotesize $\pm$1.5 & 44.5\footnotesize $\pm$0.7 &  44.8\footnotesize $\pm$1.4 & 43.9\footnotesize $\pm$1.0 & - \\
        \bottomrule
        \end{tabular}
        }
        
        \caption{Accuracy (over 5 runs)}
        \label{tab:umap_hyper_acc}
    \end{subtable}
    \hspace{0.01\textwidth}    
    \begin{subtable}{0.34\textwidth}
        \centering
        \resizebox{\textwidth}{!}
        {
        \begin{tabular}{ccccc}
        \toprule
            & \multicolumn{4}{c}{\textbf{n\_neighbors}} \\
            \cmidrule(lr){2-5}
            \textbf{min\_dist} & 5 & 15 & 50 & 100 \\
            \midrule
            0.05 & -1.9  & -2.8 & -1.4 & -0.9 \\
            0.1 & -2.7 & $\bigstar$ & -0.1 & -2.2 \\
            0.5 & +0.4 & -2.0 & -1.5 & -1.9 \\
            1.0 & -1.5 & -0.4 & -0.8 & -2.7 \\
        \bottomrule
        \end{tabular}
        }
        
        \caption{Accuracy $\Delta$}
        \label{tab:umap_hyper_acc_delta}
    \end{subtable}
    \vspace{-\baselineskip}
\end{table}

\section{Extended Empirical Results}
\subsection{Statistical Significance of Precision Improvements}
\label{sec:appendix_statistical_significance}

\begin{table}[!bp]
    \vspace{-\baselineskip}
    \caption{\textbf{Statistical significance of variance} at high pruning rate (90\%) using \textbf{(a, b, c)} F-test shows \sys's lower variance is statistically significant for more difficult datasets. This is further validated by \textbf{(d)} \sys's tighter 95\% confidence intervals of standard deviation.}
    \label{tab:stat_sig}
    
    \centering
    \begin{subtable}{0.3\textwidth}
        \centering
        \resizebox{\textwidth}{!}
        {
        \begin{tabular}{lccc}
        \toprule
            \textbf{p\_value} & CCS & D2 & \textbf{\sys} \\
            \midrule
            CCS & 0.500  & $\sim~$ & $\sim~$ \\
            D2 & 0.128 & 0.500 & $\sim~$ \\
            \textbf{\sys} & 0.422 & 0.094 & 0.500 \\
        \bottomrule
        \end{tabular}
        }
        
        \caption{F-test on CIFAR-10}
    \end{subtable}
    \hspace{0.02\textwidth}    
    \begin{subtable}{0.3\textwidth}
        \centering
        \resizebox{\textwidth}{!}
        {
        \begin{tabular}{lcccc}
        \toprule
            \textbf{p\_value} & CCS & D2 & \textbf{\sys} \\
            \midrule
            CCS & 0.500  & $\sim~$ & $\sim~$ \\
            D2 & 0.437& 0.500 & $\sim~$ \\
            \textbf{\sys} & \textbf{0.015} & \textbf{0.011} & 0.500 \\
        \bottomrule
        \end{tabular}
        }
        
        \caption{F-test on CIFAR-100}
    \end{subtable}
    \hspace{0.02\textwidth}    
    \begin{subtable}{0.3\textwidth}
        \centering
        \resizebox{\linewidth}{!}
        {
        \begin{tabular}{lccc}
        \toprule
            \textbf{p\_value} & CCS & D2 & \textbf{\sys} \\
            \midrule
            CCS & 0.500  & $\sim~$ & $\sim~$ \\
            D2 & 0.059 & 0.500 & $\sim~$ \\
            \textbf{\sys} & \textbf{0.012} & \textbf{0.001} & 0.500 \\
        \bottomrule
        \end{tabular}
        }
        
        \caption{F-test on ImageNet-1K}
    \end{subtable}

    \vfill
    \vspace{1em}
    \begin{subtable}{\textwidth}
        \centering
        \resizebox{0.41\linewidth}{!}
        {
        \begin{tabular}{lccc}
        \toprule
            \textbf{95\% CI} & CIFAR-10 & CIFAR-100 & ImageNet-1K \\
            \midrule
            CCS & [0.05, 0.91] & [0.85, 3.10] & [0.07, 0.75] \\
            D2 & [0.38, 1.77] & [0.26, 2.69] & [0.3, 2.12] \\
            \textbf{\sys} & [0.08, 0.35] & [0.1, 0.87] & [0.04, 0.23] \\
        \bottomrule
        \end{tabular}
        }

        \caption{Bootstrapped 95\% confidence intervals of standard deviation}
    \end{subtable}

\end{table}

To validate the stability benefits of \sys (specifically, reduced variability in final model accuracy across independent runs), we conducted statistical tests focusing on high pruning rates (e.g., 90\%), where coreset selection variance is typically most pronounced. We compare \sys against top-performing baselines, D2 \citep{d2} and CCS \citep{ccs}. Our analysis employed two complementary statistical approaches: (1) a one-tailed F-test to evaluate the hypothesis that \sys exhibits lower variance than baselines ($\sigma^2_{\text{\sys}} < \sigma^2_{\text{baseline}}$), and (2) Bootstrapped 95\% Confidence Intervals (CI) for the standard deviation of final accuracies, computed using 10,000 resamples with replacement.

As detailed in \cref{tab:stat_sig}, on simpler datasets the difference in variance between methods is not statistically significant. However, for more challenging benchmarks, \sys demonstrates statistically significant $p$-values ($p < 0.05$, bolded) for lower variance compared to both D2 and CCS. Also, the bootstrapped confidence intervals for the standard deviation of \sys are strictly lower, and in some cases disjoint, than those of the baselines. These findings confirm that our topological selection mechanism offers superior precision compared to prior geometric methods.

\newpage
\subsection{Roadmap of Detailed Experimental Results}
\label{Sec:appendix_all_acc_std}

\begin{table}[!h]
\centering
\resizebox{\linewidth}{!}{
\label{tab:appendix_roadmap}
\begin{tabular}{l c c}
\toprule
\textbf{Experiment / Ablation Topic} & \textbf{Main Section} & \textbf{Full Data Table} \\
\midrule
Overall Performance Across Baselines and Datasets & \cref{sec:results_performance} & \cref{tab:acc_std} \\
Transferability of \textit{Diverse Embeddings $\rightarrow$ Fixed Target} & \cref{sec:results_arch_transfer} & \cref{tab:arch_ablation} \\
Transferability of \textit{Fixed Embedding $\rightarrow$ Diverse Targets} & \cref{sec:results_arch_transfer} & \cref{tab:arch_ablation_2} \\
Robustness to Noisy and Corrupted Representations & \cref{sec:results_noisy_features} & \cref{tab:noise_ablation}, \cref{tab:input_noise_ablation}, \cref{tab:corruption_ablation} \\
Evaluation of Deep Topological Autoencoders & \cref{sec:appendix_topo_ae} & \cref{tab:topo_ae} \\
Computational Complexity & \cref{Sec:appendix_complexity_analysis} & \cref{table:complexity}, \cref{table:wall_clock} \\
Ablation: Performance Impact of ($\alpha$ vs. $\beta$) & \cref{sec:appendix_ablation_weights} & \cref{tab:abl_density_persistence} \\
Ablation: Training-Free Proxies for Mislabel Detection & \cref{Sec:appendix_mislabel_proxy_ablation} & \cref{tab:mislabel_ablation} \\
Ablation: Sensitivity to UMAP Hyperparameters & \cref{Sec:appendix_umap_sensitivity} & \cref{tab:umap_hyper} \\
Implementation Hyperparameters & $\thicksim$ & \cref{table:hyperparameters} \\
\bottomrule
\end{tabular}}
\end{table}

\definecolor{Gray}{gray}{0.9}
\newcolumntype{g}{>{\columncolor{Gray}}c}
\begin{table}[!tbp]
\caption{\textbf{Coreset performance under \textit{latent Gaussian noise} on CIFAR-100.}} 
\setlength{\tabcolsep}{3.5pt}
\centering
\resizebox{0.95\linewidth}{!}
{  
\begin{tabular}{lcccgcccgcccgcccg}
    \toprule
    Noise ($\rightarrow$) & \multicolumn{4}{c}{$\mathbf{\epsilon} \sim \mathcal{N}(0,\,0.25\sigma)$} & \multicolumn{4}{c}{$\mathbf{\epsilon} \sim \mathcal{N}(0,\,\sigma)$} \\
    \cmidrule(lr){2-5} \cmidrule(lr){6-9}
    Pruning Rate ($\rightarrow$) & 50\% & 70\% & 80\% & 90\% & 50\% & 70\% & 80\% & 90\% \\
    \midrule
    Moderate \citep{moderateds}  & 71.1$\pm$0.2 & 63.7$\pm$0.4 & 56.0$\pm$0.6 & 33.2$\pm$0.9   & 70.4$\pm$\underline{0.1} & 62.5$\pm$0.4 & 54.6$\pm$0.6 & 33.9$\pm$0.2  \\
    D2 \citep{d2}  & 72.8$\pm$0.1 & 68.5$\pm$0.7 & 63.0$\pm$0.5 & 44.4$\pm$1.5   & 73.0$\pm$0.1 & 67.7$\pm$1.2 & 62.3$\pm$0.8 & 40.2$\pm$2.0  \\
    \sys  & 73.5$\pm$0.3 & 67.8$\pm$0.1 & 60.5$\pm$0.2 & \textbf{45.4$\pm$0.8}   & 73.3$\pm$0.2 & 68.0$\pm$0.5 & 62.7$\pm$0.3 & \textbf{45.5$\pm$0.6} \\
\bottomrule
\end{tabular}
}
\vspace{3pt}
\resizebox{0.95\linewidth}{!}
{
\begin{tabular}{lcccgcccgcccgcccg}
    \toprule
    Noise ($\rightarrow$) & \multicolumn{4}{c}{$\mathbf{\epsilon} \sim \mathcal{N}(0,\,4\sigma)$} & \multicolumn{4}{c}{$\mathbf{\epsilon} \sim \mathcal{N}(0,\,8\sigma)$} \\
    \cmidrule(lr){2-5} \cmidrule(lr){6-9}
    Pruning Rate ($\rightarrow$) & 50\% & 70\% & 80\% & 90\% & 50\% & 70\% & 80\% & 90\% \\
    \midrule
    Moderate \citep{moderateds} & 71.0$\pm$0.4 & 62.9$\pm$0.3 & 52.3$\pm$0.6 & 32.0$\pm$1.2 &  70.9$\pm$0.2 & 63.4$\pm$0.2 & 51.6$\pm$2.2 & 32.1$\pm$1.4  \\
    D2 \citep{d2} & 72.3$\pm$\underline{0.1} & 67.8$\pm$1.1 & 62.1$\pm$1.0 & 40.5$\pm$1.7 &  71.6$\pm$0.5 & \underline{67.2}$\pm$0.8 & 60.3$\pm$2.4 & 39.8$\pm$3.2  \\
    \sys & 73.4$\pm$0.1 & 67.4$\pm$0.2 & 61.7$\pm$0.3 & \textbf{46.1$\pm$0.7} &  73.4$\pm$0.2 & 67.2$\pm$0.1 & 61.8$\pm$0.1 & \textbf{43.9$\pm$0.4}  \\
\bottomrule
\end{tabular}
}
\vspace{-10pt}
\label{tab:noise_ablation}
\end{table}
\definecolor{Gray}{gray}{0.9}
\newcolumntype{g}{>{\columncolor{Gray}}c}
\begin{table}[!tbp]
\caption{\textbf{Coreset performance under \textit{image-level Gaussian noise} on CIFAR-100.} 
}
\setlength{\tabcolsep}{3.5pt}
\centering
\resizebox{0.95\linewidth}{!}
{  
\begin{tabular}{lcccgcccgcccgcccg}
    \toprule
    Noise ($\rightarrow$) & \multicolumn{4}{c}{$\mathbf{\epsilon} \sim \mathcal{N}(0,\,0.25\sigma)$} & \multicolumn{4}{c}{$\mathbf{\epsilon} \sim \mathcal{N}(0,\,\sigma)$} \\
    \cmidrule(lr){2-5} \cmidrule(lr){6-9}
    Pruning Rate ($\rightarrow$) & 50\% & 70\% & 80\% & 90\% & 50\% & 70\% & 80\% & 90\% \\
    \midrule
    Moderate \citep{moderateds}  & 70.4$\pm$0.3 & 62.8$\pm$0.2 & 53.3$\pm$0.4 & 35.0$\pm$0.3            & 70.8$\pm$0.2 & 62.8$\pm$0.3 & 53.1$\pm$0.3 & 34.0$\pm$0.6  \\
    D2 \citep{d2}                & 72.7$\pm$0.5 & 68.9$\pm$0.2 & 62.0$\pm$0.8 & 40.4$\pm$1.5            & 71.6$\pm$0.2 & 67.2$\pm$0.5 & 62.7$\pm$1.1 & 43.4$\pm$0.9  \\
    \textbf{\sys}                & 73.0$\pm$0.3 & 67.8$\pm$0.2 & 61.1$\pm$0.4 & \textbf{40.9$\pm$0.2}   & 72.7$\pm$0.2 & 67.0$\pm$0.4 & 62.3$\pm$0.1 & \textbf{43.5$\pm$0.3} \\
\bottomrule
\end{tabular}
}
\vspace{3pt}
\resizebox{0.95\linewidth}{!}
{
\begin{tabular}{lcccgcccgcccgcccg}
    \toprule
    Noise ($\rightarrow$) & \multicolumn{4}{c}{$\mathbf{\epsilon} \sim \mathcal{N}(0,\,4\sigma)$} & \multicolumn{4}{c}{$\mathbf{\epsilon} \sim \mathcal{N}(0,\,8\sigma)$} \\
    \cmidrule(lr){2-5} \cmidrule(lr){6-9}
    Pruning Rate ($\rightarrow$) & 50\% & 70\% & 80\% & 90\% & 50\% & 70\% & 80\% & 90\% \\
    \midrule
    Moderate \citep{moderateds}  & 71.5$\pm$0.3 & 63.9$\pm$0.5 & 57.5$\pm$0.4 & 35.7$\pm$1.7            & 71.6$\pm$0.2 & 64.6$\pm$0.6 & 54.7$\pm$1.0 & 34.4$\pm$1.5  \\
    D2 \citep{d2}                & 72.5$\pm$0.2 & 67.2$\pm$0.9 & 60.3$\pm$1.2 & 41.3$\pm$2.2            & 72.5$\pm$0.2 & 67.7$\pm$1.0 & 59.9$\pm$1.0 & 42.6$\pm$2.5  \\
    \textbf{\sys}                & 73.2$\pm$0.3 & 68.3$\pm$0.2 & 61.1$\pm$0.5 & \textbf{41.9$\pm$0.4}   & 73.2$\pm$0.1 & 67.1$\pm$0.3 & 60.8$\pm$0.5 & \textbf{43.3$\pm$0.7} \\
\bottomrule
\end{tabular}
}
\vspace{-10pt}
\label{tab:input_noise_ablation}
\end{table}

\definecolor{Gray}{gray}{0.9}
\newcolumntype{g}{>{\columncolor{Gray}}c}
\begin{table}[!tbp]
\caption{\textbf{Coreset performance under \textit{image-level corruptions} on CIFAR-100.} We apply four representative CIFAR-C corruptions \citep{hendrycksbenchmarking} (contrast, motion blur, frost, and JPEG compression) at severity level 3.}
\setlength{\tabcolsep}{3.5pt}
\centering
\resizebox{0.95\linewidth}{!}
{  
\begin{tabular}{lcccgcccgcccgcccg}
    \toprule
    Corruption ($\rightarrow$) & \multicolumn{4}{c}{Contrast} & \multicolumn{4}{c}{Motion Blur} \\
    \cmidrule(lr){2-5} \cmidrule(lr){6-9}
    Pruning Rate ($\rightarrow$) & 50\% & 70\% & 80\% & 90\% & 50\% & 70\% & 80\% & 90\% \\
    \midrule
    Moderate \citep{moderateds}  & 70.9$\pm$0.3 & 63.4$\pm$0.4 & 53.7$\pm$0.3 & 30.9$\pm$0.7   & 71.2$\pm$0.1 & 64.1$\pm$0.2 & 56.4$\pm$0.5 & 34.9$\pm$0.8  \\
    D2 \citep{d2}                & 74.3$\pm$0.6 & 64.0$\pm$1.4 & 62.6$\pm$1.0 & 43.2$\pm$2.2   & 73.7$\pm$1.0 & 65.7$\pm$0.9 & 62.8$\pm$1.0 & 41.3$\pm$1.2  \\
    \textbf{\sys}                & 73.1$\pm$0.3 & 67.9$\pm$0.6 & 62.0$\pm$0.5 & \textbf{45.0$\pm$1.2}   & 73.1$\pm$0.1 & 67.7$\pm$0.1 & 62.4$\pm$0.8 & \textbf{42.7$\pm$0.5} \\
\bottomrule
\end{tabular}
}
\vspace{3pt}
\resizebox{0.95\linewidth}{!}
{
\begin{tabular}{lcccgcccgcccgcccg}
    \toprule
    Corruption ($\rightarrow$) & \multicolumn{4}{c}{Frost} & \multicolumn{4}{c}{JPEG Compression} \\
    \cmidrule(lr){2-5} \cmidrule(lr){6-9}
    Pruning Rate ($\rightarrow$) & 50\% & 70\% & 80\% & 90\% & 50\% & 70\% & 80\% & 90\% \\
    \midrule
    Moderate \citep{moderateds}  & 70.4$\pm$0.4 & 63.5$\pm$0.4 & 55.1$\pm$0.8 & 36.1$\pm$0.3   & 70.8$\pm$0.2 & 63.8$\pm$0.2 & 55.9$\pm$0.9 & 33.9$\pm$0.6  \\
    D2 \citep{d2}                & 73.3$\pm$0.2 & 65.1$\pm$0.7 & 61.9$\pm$1.2 & 43.2$\pm$1.4   & 73.8$\pm$0.3 & 64.7$\pm$1.5 & 62.8$\pm$1.0 & 40.9$\pm$1.1  \\
    \textbf{\sys}                & 73.0$\pm$0.3 & 65.9$\pm$0.1 & 61.7$\pm$0.3 & \textbf{43.2$\pm$0.2}   & 73.0$\pm$0.1 & 67.9$\pm$0.5 & 62.8$\pm$0.2 & \textbf{43.5$\pm$0.5} \\
\bottomrule
\end{tabular}
}
\vspace{-10pt}
\label{tab:corruption_ablation}
\end{table}
\begin{table}[!tbp]
\caption{\textbf{Transferability of \textit{Diverse (larger) Embeddings $\rightarrow$ Fixed (smaller) Target}}. Features across many architectures to train a ResNet-18 model on CIFAR-100. Most models are taken from \texttt{torchvision} pretrained library which are finetuned from ImageNet-1K. We also look at the transferability of features from bigger OpenCLIP foundational models trained on the LAION-2b dataset \citep{laion}.}
\setlength{\tabcolsep}{3.1pt}
\centering
\resizebox{0.8\linewidth}{!}
{  
\begin{tabular}{lcccccc}
    \toprule
    Pruning Rate ($\rightarrow$) & \multicolumn{3}{c}{50\%} & \multicolumn{3}{c}{70\%}\\
    \cmidrule(lr){2-4} \cmidrule(lr){5-7}
    & Moderate & D2 & \textbf{\sys} & Moderate & D2 & \textbf{\sys} \\
    \midrule
    ResNet-18 \citep{resnet} & 70.9$\pm$0.4 & 73.0$\pm$0.8 & 73.6$\pm$0.2  & 62.9$\pm$0.2 & 67.9$\pm$0.3 & 68.1$\pm$0.2  \\
    ResNet-50 \citep{resnet} & 71.1$\pm$0.1 & 73.0$\pm$0.2 & 73.7$\pm$0.2  & 63.3$\pm$0.4 & 67.7$\pm$0.4 & 68.0$\pm$0.1  \\
    ResNet-101 \citep{resnet} & 70.0$\pm$0.5 & 73.2$\pm$0.2 & 73.5$\pm$0.2  & 62.8$\pm$0.4 & 66.9$\pm$0.9 & 68.0$\pm$0.3  \\
    EfficientNet-B0 \citep{efficientnet} & 71.6$\pm$0.3 & 73.2$\pm$0.2 & 72.9$\pm$0.2  & 62.9$\pm$0.1 & 67.5$\pm$0.4 & 67.5$\pm$0.2  \\
    EfficientNetV2-M \citep{efficientnetv2} & 69.6$\pm$0.3 & 72.8$\pm$0.7 & 73.4$\pm$0.2  & 61.0$\pm$0.2 & 67.1$\pm$0.8 & 67.1$\pm$0.1  \\
    SwinV2-T \citep{swinv2} & 70.5$\pm$0.1 & 73.6$\pm$0.1 & 73.6$\pm$0.2  & 60.4$\pm$0.6 & 66.9$\pm$1.0 & 67.6$\pm$0.1  \\
    SwinV2-B \citep{swinv2} & 69.9$\pm$0.3 & 73.5$\pm$0.4 & 73.6$\pm$0.2   & 61.9$\pm$0.4 & 67.1$\pm$0.4 & 68.1$\pm$0.2  \\
    ViT-L-16 \citep{vit} & 69.8$\pm$0.3 & 73.3$\pm$0.3 & 73.6$\pm$0.1  & 61.5$\pm$0.3 & 67.2$\pm$0.6 & 68.0$\pm$0.1  \\
    \multirow{3}{*}{
        \begin{tabular}
            [l]{@{}l@{}}OpenCLIP ViT-L-14 \\ \citep{clip} \\ \citep{laion}
        \end{tabular}} & 
        \multirow{3}{*}{
        \begin{tabular}
            [c]{@{}c@{}}71.2$\pm$0.4
        \end{tabular}} & 
        \multirow{3}{*}{
        \begin{tabular}
            [c]{@{}c@{}}73.2$\pm$0.2
        \end{tabular}} & 
        \multirow{3}{*}{
        \begin{tabular}
            [c]{@{}c@{}}73.3$\pm$0.1
        \end{tabular}} & 
                \multirow{3}{*}{
        \begin{tabular}
            [c]{@{}c@{}}63.5$\pm$0.4
        \end{tabular}} & 
        \multirow{3}{*}{
        \begin{tabular}
            [c]{@{}c@{}}66.8$\pm$0.3
        \end{tabular}} & 
        \multirow{3}{*}{
        \begin{tabular}
            [c]{@{}c@{}}67.9$\pm$0.3
        \end{tabular}} \\ \\
    \multirow{3}{*}{
        \begin{tabular}
            [l]{@{}l@{}}OpenCLIP ViT-H-14 \\ \citep{clip} \\ \citep{laion}
        \end{tabular}} & 
        \multirow{3}{*}{
        \begin{tabular}
            [c]{@{}c@{}}70.9$\pm$0.3
        \end{tabular}} & 
        \multirow{3}{*}{
        \begin{tabular}
            [c]{@{}c@{}}73.0$\pm$0.2
        \end{tabular}} & 
        \multirow{3}{*}{
        \begin{tabular}
            [c]{@{}c@{}}73.1$\pm$0.4
        \end{tabular}} &
        \multirow{3}{*}{
        \begin{tabular}
            [c]{@{}c@{}}62.4$\pm$0.3
        \end{tabular}} & 
        \multirow{3}{*}{
        \begin{tabular}
            [c]{@{}c@{}}66.6$\pm$1.3
        \end{tabular}} & 
        \multirow{3}{*}{
        \begin{tabular}
            [c]{@{}c@{}}67.7$\pm$0.2
        \end{tabular}} \\ \\
        \midrule
        \textbf{Overall Average} & 70.6$\pm$0.3 & 73.2$\pm$0.3 & \textbf{73.4$\pm$0.2} & 62.3$\pm$0.3 & 67.2$\pm$0.7 & \textbf{67.8$\pm$0.2} \\
\bottomrule
\end{tabular}
}

\vspace{2pt}
\centering
\resizebox{0.8\linewidth}{!}
{
\begin{tabular}{lcccccc}
    \toprule
    Pruning Rate ($\rightarrow$) & \multicolumn{3}{c}{80\%} & \multicolumn{3}{c}{90\%}\\
    \cmidrule(lr){2-4} \cmidrule(lr){5-7}
    & Moderate & D2 & \textbf{\sys} & Moderate & D2 & \textbf{\sys} \\
    \midrule
    ResNet-18 \citep{resnet} & 54.8$\pm$0.2 & 60.3$\pm$1.9 & 60.2$\pm$0.2  & 33.8$\pm$0.8 & 42.5$\pm$1.9 & 43.4$\pm$0.4  \\
    ResNet-50 \citep{resnet} & 55.9$\pm$0.6 & 60.8$\pm$1.0 & 61.3$\pm$0.7  & 31.8$\pm$1.6 & 44.5$\pm$1.7 & 47.4$\pm$0.5  \\
    ResNet-101 \citep{resnet} & 54.5$\pm$0.4 & 60.4$\pm$0.2 & 60.0$\pm$0.6  & 35.9$\pm$0.6 & 41.7$\pm$2.6 & 43.2$\pm$1.3  \\
    EfficientNet-B0 \citep{efficientnet} & 54.8$\pm$1.1 & 60.5$\pm$0.8 & 62.0$\pm$0.4 & 29.8$\pm$1.3 & 42.0$\pm$2.2 & 42.2$\pm$0.3  \\
    EfficientNetV2-M \citep{efficientnetv2} & 53.1$\pm$0.5 & 60.2$\pm$0.4 & 60.1$\pm$1.6  & 33.3$\pm$0.8 & 41.3$\pm$1.9 & 44.4$\pm$1.7 \\
    SwinV2-T \citep{swinv2} & 53.3$\pm$1.2 & 59.3$\pm$2.0 & 61.8$\pm$0.4  & 32.5$\pm$1.5 & 41.4$\pm$2.2 & 43.4$\pm$0.8  \\
    SwinV2-B \citep{swinv2} & 53.5$\pm$0.2 & 60.2$\pm$1.5 & 61.1$\pm$0.6  & 35.7$\pm$0.3 & 42.8$\pm$2.9 & 42.7$\pm$1.6  \\
    ViT-L-16 \citep{vit}& 53.9$\pm$0.9 & 59.1$\pm$1.1 & 61.1$\pm$0.2  & 30.6$\pm$1.2 & 40.9$\pm$2.6 & 44.1$\pm$1.3  \\
    \multirow{3}{*}{
        \begin{tabular}
            [l]{@{}l@{}}OpenCLIP ViT-L-14 \\ \citep{clip} \\ \citep{laion}
        \end{tabular}} & 
        \multirow{3}{*}{
        \begin{tabular}
            [c]{@{}c@{}}53.9$\pm$0.8
        \end{tabular}} & 
        \multirow{3}{*}{
        \begin{tabular}
            [c]{@{}c@{}}60.4$\pm$1.0
        \end{tabular}} & 
        \multirow{3}{*}{
        \begin{tabular}
            [c]{@{}c@{}}61.7$\pm$0.8
        \end{tabular}} & 
        \multirow{3}{*}{
        \begin{tabular}
            [c]{@{}c@{}}35.0$\pm$1.5
        \end{tabular}} & 
        \multirow{3}{*}{
        \begin{tabular}
            [c]{@{}c@{}}40.7$\pm$2.9
        \end{tabular}} & 
        \multirow{3}{*}{
        \begin{tabular}
            [c]{@{}c@{}}42.5$\pm$1.6
        \end{tabular}} \\ \\
    \multirow{3}{*}{
        \begin{tabular}
            [l]{@{}l@{}}OpenCLIP ViT-H-14 \\ \citep{clip} \\ \citep{laion}
        \end{tabular}} & 
        \multirow{3}{*}{
        \begin{tabular}
            [c]{@{}c@{}}54.0$\pm$0.8
        \end{tabular}} & 
        \multirow{3}{*}{
        \begin{tabular}
            [c]{@{}c@{}}61.1$\pm$0.4
        \end{tabular}} & 
        \multirow{3}{*}{
        \begin{tabular}
            [c]{@{}c@{}}61.8$\pm$0.8
        \end{tabular}} & 
        \multirow{3}{*}{
        \begin{tabular}
            [c]{@{}c@{}}35.4$\pm$1.4
        \end{tabular}} & 
        \multirow{3}{*}{
        \begin{tabular}
            [c]{@{}c@{}}42.3$\pm$2.1
        \end{tabular}} & 
        \multirow{3}{*}{
        \begin{tabular}
            [c]{@{}c@{}}42.9$\pm$1.0
        \end{tabular}} \\ \\
        \midrule
        \textbf{Overall Average} & 54.2$\pm$0.7 & 60.2$\pm$1.0 & \textbf{61.1$\pm$0.6} & 33.4$\pm$1.1 & 42.0$\pm$2.2 & \textbf{43.6$\pm$1.1} \\
\bottomrule
\end{tabular}
}
\label{tab:arch_ablation}
\end{table}

\begin{table}[!tbp]
\caption{\textbf{Hyperparameters for local persistence ($\alpha$) and global density ($\beta$).} While our fixed 50/50 split provides strong, stable performance, the results indicate that further accuracy gains are possible with task-specific tuning. We observe a trend where the optimal balance increasingly relies on the persistence score ($\alpha$) on more challenging datasets (e.g., CIFAR-100 vs. CIFAR-10) and at higher data pruning rates. Optimal results are in bold, ties are underlined.}
\setlength{\tabcolsep}{3.1pt}
\resizebox{0.49\linewidth}{!}
{  
\begin{tabular}{ccccccc}
    \toprule
    \textbf{CIFAR-10} & \multicolumn{5}{c}{Pruning Ratio (\%)} \\
    \cmidrule(lr){1-1} \cmidrule(lr){2-6}
    $\alpha/\beta$ & 30\% & 50\% & 70\% & 80\% & 90\% \\
    \midrule
    100/0 & 94.7$\pm$0.1 & 93.2$\pm$0.1 & 90.6$\pm$0.1 & 87.9$\pm$0.4 & 80.4$\pm$0.1 \\
    90/10 & 94.3$\pm$0.1 & 93.1$\pm$0.1 & 90.4$\pm$0.3 & 88.3$\pm$0.1 & 79.9$\pm$1.2 \\
    80/20 & 94.7$\pm$0.1 & 93.3$\pm$0.1 & 90.8$\pm$0.2 & 89.0$\pm$0.2 & 79.3$\pm$0.9 \\
    70/30 & 94.4$\pm$0.2 & 93.6$\pm$0.1 & 90.7$\pm$0.2 & 88.5$\pm$0.1 & 78.9$\pm$1.2 \\
    60/40 & 94.5$\pm$0.3 & 93.5$\pm$0.3 & 91.3$\pm$0.1 & 87.8$\pm$0.2 & 81.3$\pm$0.6 \\
    \midrule
    50/50 & 94.7$\pm$0.2 & 93.7$\pm$0.2 & \textbf{91.6$\pm$0.1} & 88.7$\pm$0.7 & 82.1$\pm$0.3 \\
    \midrule
    40/60 & 94.6$\pm$0.3 & 93.3$\pm$0.1 & 91.4$\pm$0.2 & 88.6$\pm$0.2 & 80.3$\pm$0.5 \\
    30/70 & 94.9$\pm$0.1 & \textbf{93.9$\pm$0.3} & 91.4$\pm$0.2 & 89.0$\pm$0.3 & 82.0$\pm$0.1 \\
    20/80 & \textbf{95.0$\pm$0.1} & 93.8$\pm$0.1 & 91.3$\pm$0.2 & 88.9$\pm$0.1 & 82.3$\pm$0.5 \\
    10/90 & 94.8$\pm$0.1 & 93.7$\pm$0.2 & 91.0$\pm$0.5 & \textbf{89.3$\pm$0.2} & \textbf{82.7$\pm$0.1} \\
    0/100 & 94.9$\pm$0.1 & 93.8$\pm$0.1 & 91.3$\pm$0.1 & 89.1$\pm$0.2 & 82.2$\pm$0.4 \\
\bottomrule
\end{tabular}
}
\quad
\resizebox{0.49\linewidth}{!}
{
\begin{tabular}{ccccccc}
    \toprule
    \textbf{CIFAR-100} & \multicolumn{5}{c}{Pruning Ratio (\%)} \\
    \cmidrule(lr){1-1} \cmidrule(lr){2-6}
    $\alpha/\beta$ & 30\% & 50\% & 70\% & 80\% & 90\% \\
    \midrule
    100/0 & 75.7$\pm$0.2 & 73.2$\pm$0.2 & 67.4$\pm$0.2 & 59.8$\pm$0.9 & 43.2$\pm$0.3 \\
    90/10 & 76.4$\pm$0.3 & 73.0$\pm$0.1 & 66.8$\pm$0.7 & 60.8$\pm$0.5 & 43.0$\pm$0.5 \\
    80/20 & 76.5$\pm$0.1 & 73.1$\pm$0.4 & 67.5$\pm$0.1 & 61.8$\pm$0.2 & \textbf{47.2$\pm$1.6} \\
    70/30 & 76.0$\pm$0.2 & 72.9$\pm$0.6 & 67.7$\pm$0.2 & 61.5$\pm$0.3 & 45.2$\pm$0.7 \\
    60/40 & 76.0$\pm$0.2 & 73.3$\pm$0.3 & 67.1$\pm$0.7 & 60.6$\pm$0.8 & 44.8$\pm$1.2 \\
    \midrule
    50/50 & 75.9$\pm$0.4 & 72.8$\pm$0.3 & 66.9$\pm$0.5 & 61.9$\pm$0.6 & 45.8$\pm$0.7 \\
    \midrule
    40/60 & 76.5$\pm$0.1 & 73.7$\pm$0.3 & 67.9$\pm$0.1 & \textbf{62.5$\pm$0.3} & 43.1$\pm$0.6 \\
    30/70 & 76.6$\pm$0.1 & \textbf{74.0$\pm$0.2} & 67.9$\pm$0.7 & 62.4$\pm$0.6 & 43.4$\pm$0.5 \\
    20/80 & \textbf{77.1$\pm$0.1} & \underline{74.0$\pm$0.2} & \textbf{68.1$\pm$0.1} & 62.2$\pm$0.3 & 45.4$\pm$2.8 \\
    10/90 & 76.7$\pm$0.2 & 73.6$\pm$0.3 & 67.3$\pm$0.3 & 61.4$\pm$0.7 & 40.9$\pm$1.9 \\
    0/100 & 76.2$\pm$0.1 & 73.5$\pm$0.2 & 67.8$\pm$0.2 & 61.5$\pm$0.7 & 41.8$\pm$0.6 \\
\bottomrule
\end{tabular}
}

\label{tab:abl_density_persistence}
\end{table}

\begin{table}[!tbp]
    \caption{\textbf{(a)} Training and topological hyperparameters. \textbf{(b)} Dataset mislabel ratios. Similar to those in \citet{ccs}.}
    \centering
    \begin{subtable}[c]{0.65\textwidth}
        \centering

        \resizebox{\textwidth}{!}
        {
        \begin{tabular}{cccc}\toprule
        \multirow{2}{*}{
        \begin{tabular}[c]{@{}c@{}}Section\end{tabular}} 
        & \multirow{2}{*}{
        \begin{tabular}[c]{@{}c@{}}Hyperparameter\end{tabular}} 
        & \multirow{2}{*}{
            \begin{tabular}
                [c]{@{}c@{}}CIFAR-10\\CIFAR-100
            \end{tabular}} 
        & \multirow{2}{*}{
        \begin{tabular}[c]{@{}c@{}}ImageNet\end{tabular}} \\ \\
        
        \midrule
        \multirow{7}{*}{
        \begin{tabular}[c]{@{}c@{}}Training\\(\texttt{DeepCore})\end{tabular}}
        & Epochs & 200 & 60 \\
        & Batch Size & 256 & 128 \\
        & Optimizer & SGD & SGD \\
        & Momentum & 0.9 & 0.9 \\
        & Learning Rate & 1e-1 & 1e-1 \\
        & Weight Decay & 5e-4 & 5e-4 \\
        & Scheduler & CosineAnnealing & CosineAnnealing \\
        
        \midrule
        \multirow{4}{*}{
        \begin{tabular}[c]{@{}c@{}}Global Manifold\\Projection (\texttt{UMAP})\end{tabular}}
        & Number Neighbors & 15 & 15 \\
        & Minimum Distance & 0.1 & 0.1 \\
        & Metric & Cosine & Cosine \\
        & Dimensions & 2 & 2 \\
        
        \midrule
        \multirow{3}{*}{
        \begin{tabular}[c]{@{}c@{}}Kernel Density\\Estimation\\(\texttt{sklearn})\end{tabular}} & 
        \multirow{3}{*}{
        \begin{tabular}[c]{@{}c@{}}Bandwidth\end{tabular}} & 
        \multirow{3}{*}{
        \begin{tabular}[c]{@{}c@{}}0.4\end{tabular}} & 
        \multirow{3}{*}{
        \begin{tabular}[c]{@{}c@{}}0.4\end{tabular}} \\ \\
        
        \midrule
        \multirow{5}{*}{
        \begin{tabular}[c]{@{}c@{}}Local Persistent\\Homology\\(\texttt{multipers})\end{tabular}}
        & Theta (Density Bandwidth) & 0.4 & 0.4 \\
        & Function/Kernel & Gaussian & Gaussian \\
        & Complex & Weak-Delaunay & Weak-Delaunay \\
        & Homology Degree & 1 & 1 \\
        & Optimization Steps & 6 & 6 \\
        
        \midrule
        \multirow{2}{*}{
        \begin{tabular}[c]{@{}c@{}}Topology Score\end{tabular}}
        & Global Density ($\alpha$) & 0.5 & 0.5 \\
        & Local Persistence ($\beta$) & 0.5 & 0.5 \\

        \midrule
        \multirow{2}{*}{
        \begin{tabular}[c]{@{}c@{}}NLPS\\(KNN \texttt{sklearn})\end{tabular}} & 
        \multirow{2}{*}{
        \begin{tabular}[c]{@{}c@{}}Number Neighbors\end{tabular}} & 
        \multirow{2}{*}{
        \begin{tabular}[c]{@{}c@{}}20\end{tabular}} & 
        \multirow{2}{*}{
        \begin{tabular}[c]{@{}c@{}}20\end{tabular}} \\
        
        \bottomrule
        \end{tabular}
        }
        
        \caption{}
        \label{tab:umap_hyper_acc}
    \end{subtable}
    \hspace{0.01\textwidth}
    \begin{subtable}[c]{0.3\textwidth}
        \centering

        \resizebox{\textwidth}{!}
        {
        \begin{tabular}{cccc}\toprule
        & \multicolumn{3}{c}{Mislabel Ratio (\%)} \\
        \cmidrule(lr){2-4}
        Pruning & C-10 & C-100 & ImageNet \\
        
        \midrule
        30\% & 0\% & 10\% & 0\% \\
        50\% & 0\% & 20\% & 10\% \\
        70\% & 10\% & 20\% & 20\% \\
        80\% & 10\% & 40\% & 20\% \\
        90\% & 30\% & 50\% & 30\% \\
        
        \bottomrule
        \end{tabular}
        }
        
        \caption{}
        \label{tab:umap_hyper_acc_delta}
    \end{subtable}
    \label{table:hyperparameters}
\end{table}

\newpage
\subsection{Additional Persistence Optimization Visualizations}
\label{sec:appendix_pers_optim_viz}
\begin{figure*}[!tbp]
    \centering
    \includegraphics[width=0.99\textwidth]{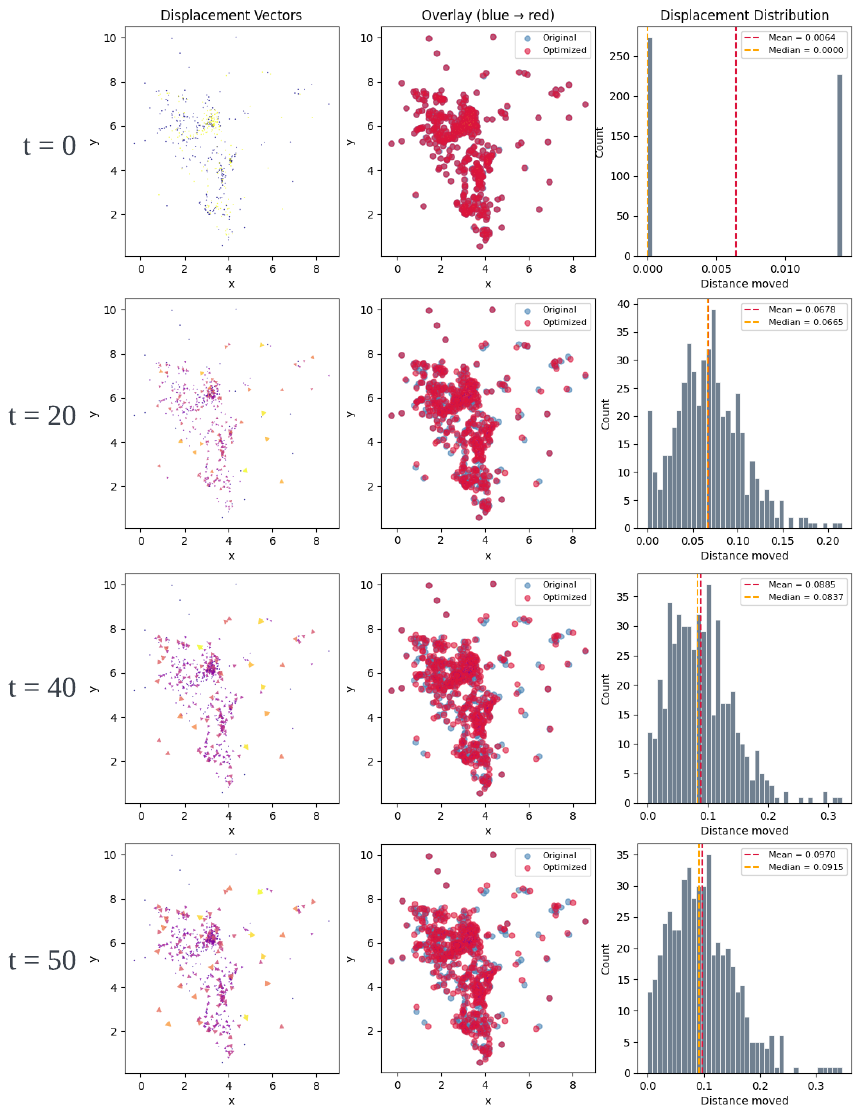}
    \caption{Persistence change across optimization (Label \#14 C-100; ResNet-50)}
\end{figure*}

\begin{figure*}[!tbp]
    \centering
    \includegraphics[width=0.99\textwidth]{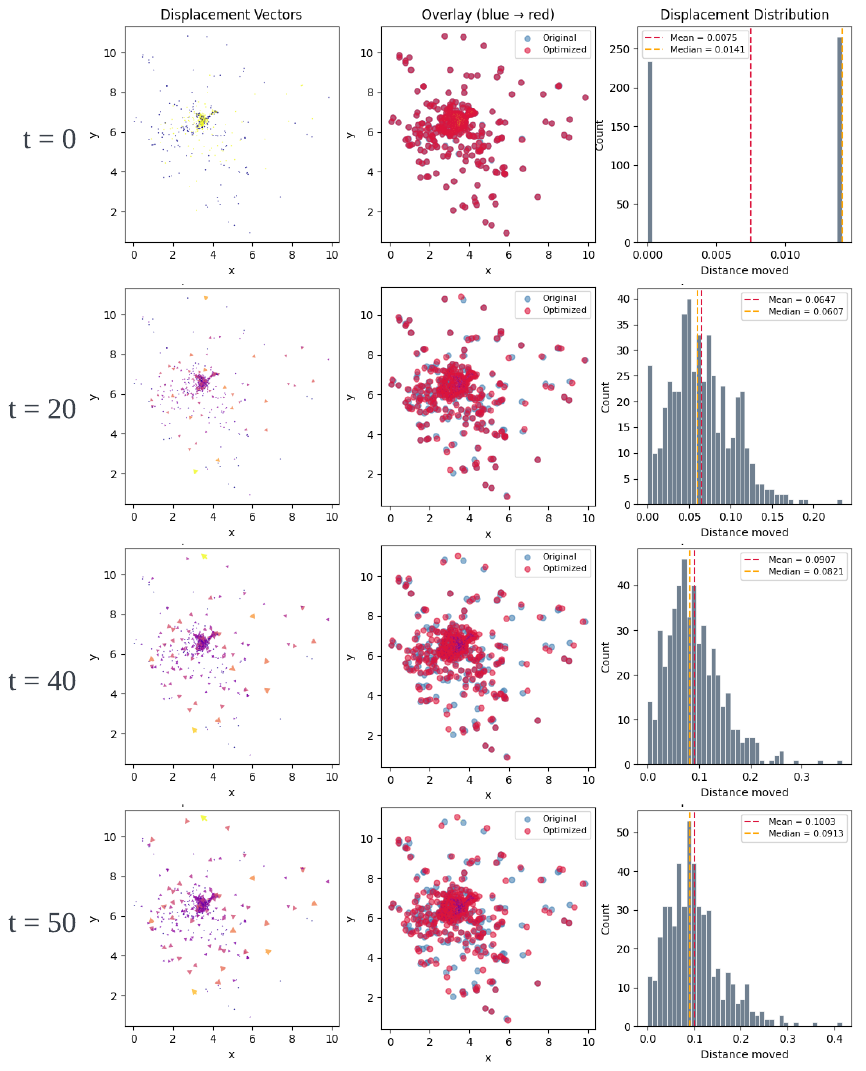}
    \caption{Persistence change across optimization (Label \#18 C-100; ResNet-50)}
\end{figure*}

\begin{figure*}[!tbp]
    \centering
    \includegraphics[width=0.99\textwidth]{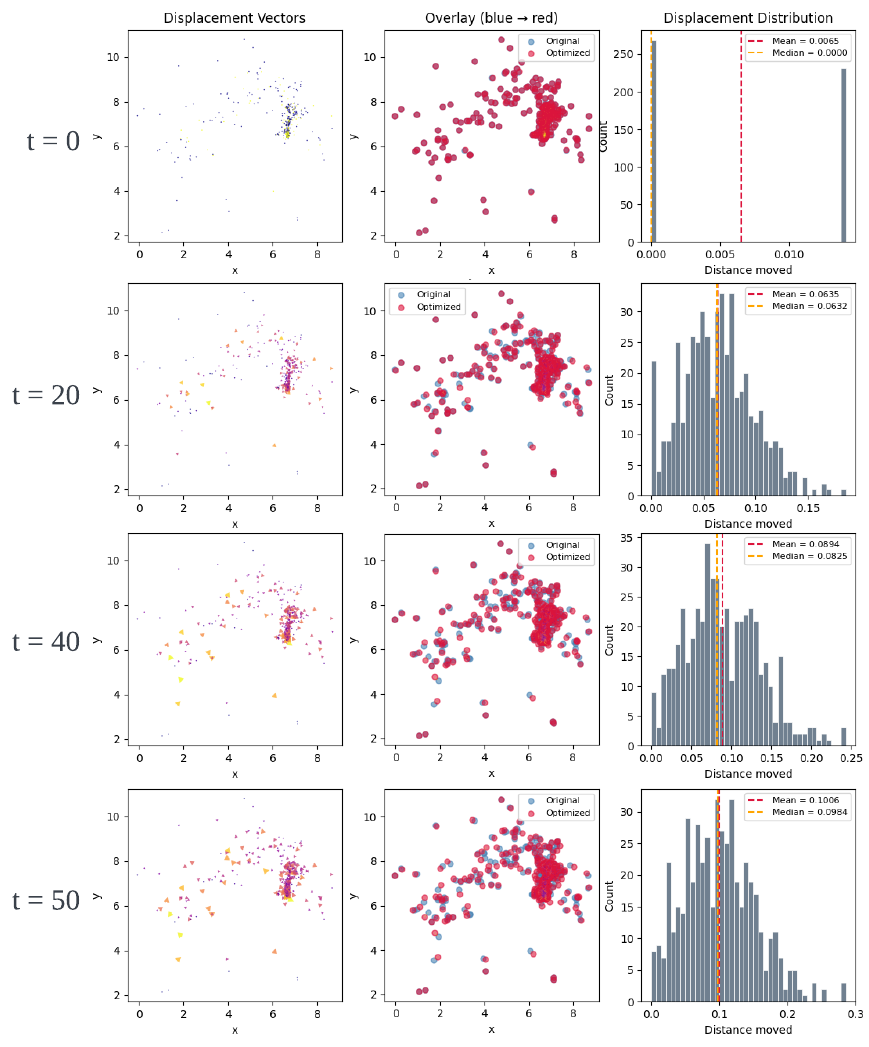}
    \caption{Persistence change across optimization (Label \#8 C-100; ResNet-50)}
\end{figure*}

\end{document}